\DeclareMathOperator*{\argmax}{\mathrm{argmax}}
\newcommand{\eq}[1]{(eq.~\ref{#1})}
\newtheorem{theorem}{Theorem}
\newtheorem{lemma}{Lemma}
\newcommand{\perm}[1]{\hspace{-1.3mm}\parbox[c]{0.12\textwidth}{\includegraphics[width=0.12\textwidth]{icmlfigures/perms/#1}\hspace{-1.3mm}}}
\begin{document}
\title{Faster Algorithms for Max-Product Message-Passing}

%

\author{Julian~J.~McAuley\thanks{The authors are with the Statistical Machine Learning Group at NICTA, and the Research School of Information Sciences and Engineering, Australian National University. Queries should be addressed to \texttt{julian.mcauley@nicta.com.au}.} and Tib\'erio~S.~Caetano}

\maketitle

\begin{abstract}
\emph{Maximum A Posteriori} inference in graphical models is often solved via message-passing algorithms, such as the junction-tree algorithm, or loopy belief-propagation. The exact solution to this problem is well known to be exponential in the size of the model's maximal cliques after it is triangulated, while approximate inference is typically exponential in the size of the model's factors. In this paper, we take advantage of the fact that many models have maximal cliques that are larger than their constituent factors, and also of the fact that many factors consist entirely of latent variables (i.e., they do not depend on an observation). This is a common case in a wide variety of applications, including grids, trees, and ring-structured models. In such cases, we are able to decrease the exponent of complexity for message-passing by $0.5$ for both exact \emph{and} approximate inference.
\end{abstract}


\section{Introduction}

It is well-known that exact inference in \emph{tree-structured} graphical models can be accomplished efficiently by message-passing operations following a simple protocol making use of the distributive law
\citep{thegdl,factorgraphs}. It is also well-known that exact inference in \emph{arbitrary} graphical models can be solved
by the junction-tree algorithm; its efficiency is determined by the size of the maximal cliques after triangulation, a quantity related to the treewidth of the graph.


Figure \ref{fig:examples_intro} illustrates an attempt to apply the junction-tree algorithm to some graphical models containing cycles. If the graphs are not chordal ((a) and (b)), they need to be triangulated, or made chordal (red edges in (c) and (d)). Their clique-graphs are then guaranteed to be \emph{junction-trees}, and the distributive law can be applied with the same protocol used for trees; see \citet{thegdl} for a beautiful tutorial on exact inference in arbitrary graphs. Although the models in this example contain only pairwise factors, triangulation has increased the size of their maximal cliques, making exact inference substantially more expensive. Hence approximate solutions in the original graph (such as loopy belief-propagation, or inference in a loopy factor-graph) are often preferred over an exact solution via the junction-tree Algorithm.

Even when the model's factors are the same size as its maximal cliques, neither exact nor approximate inference algorithms take advantage of the fact that many factors consist only of \emph{latent} variables. In many models, those factors that are conditioned upon the observation contain fewer latent variables than the purely latent cliques. Examples are shown in Figure \ref{fig:examps}. This encompasses a wide variety of models, including grid-structured models for optical flow and stereo disparity as well as chain and tree-structured models for text or speech.

\begin{figure}[ht]
\footnotesize
\begin{center}
 \begin{tabular}{cccc}
  \includegraphics[scale=0.25]{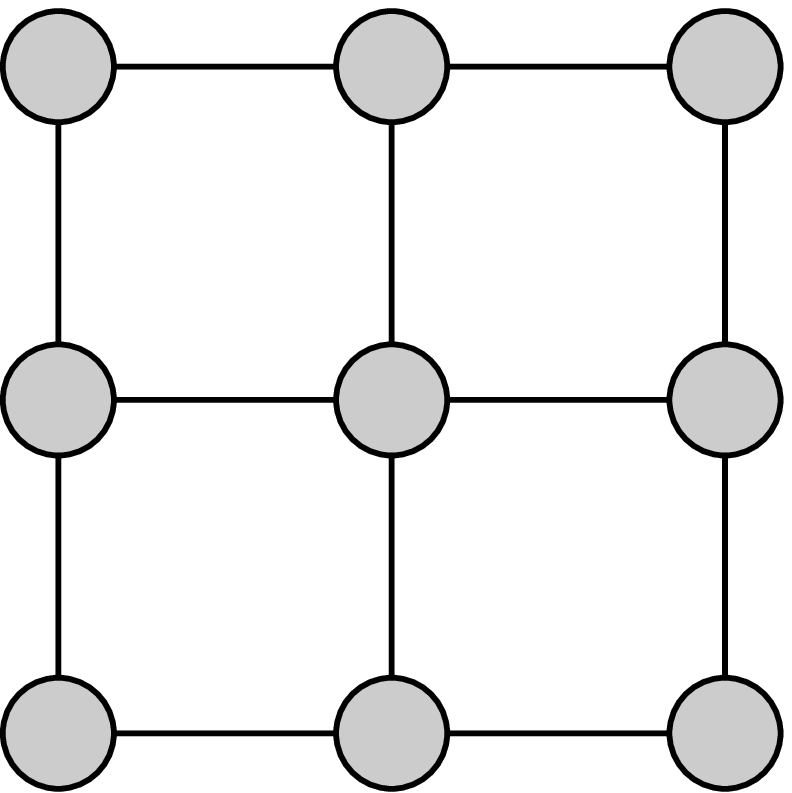} & \includegraphics[scale=0.25]{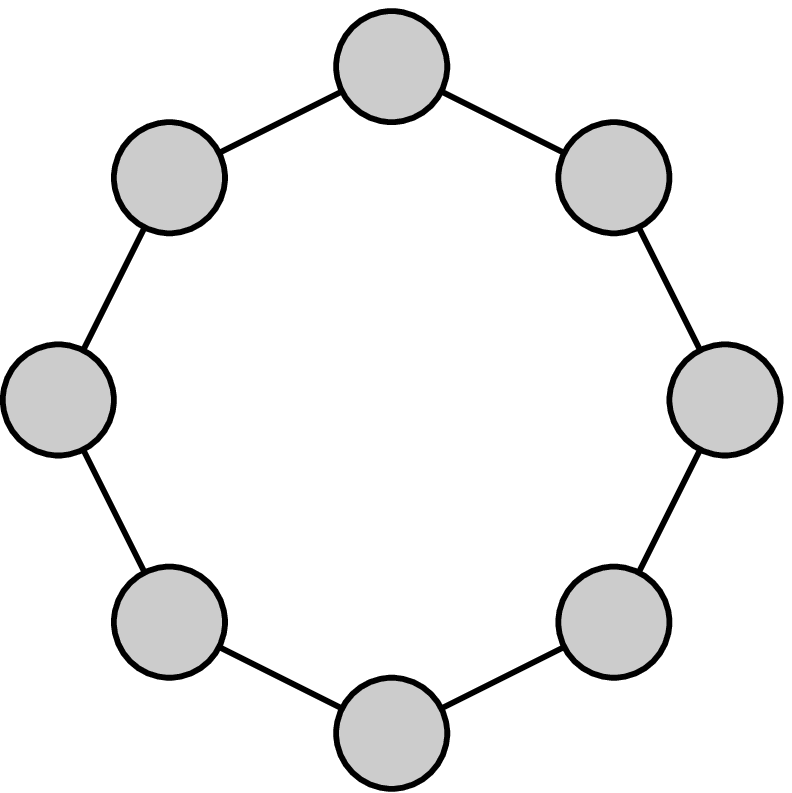} & \includegraphics[scale=0.25]{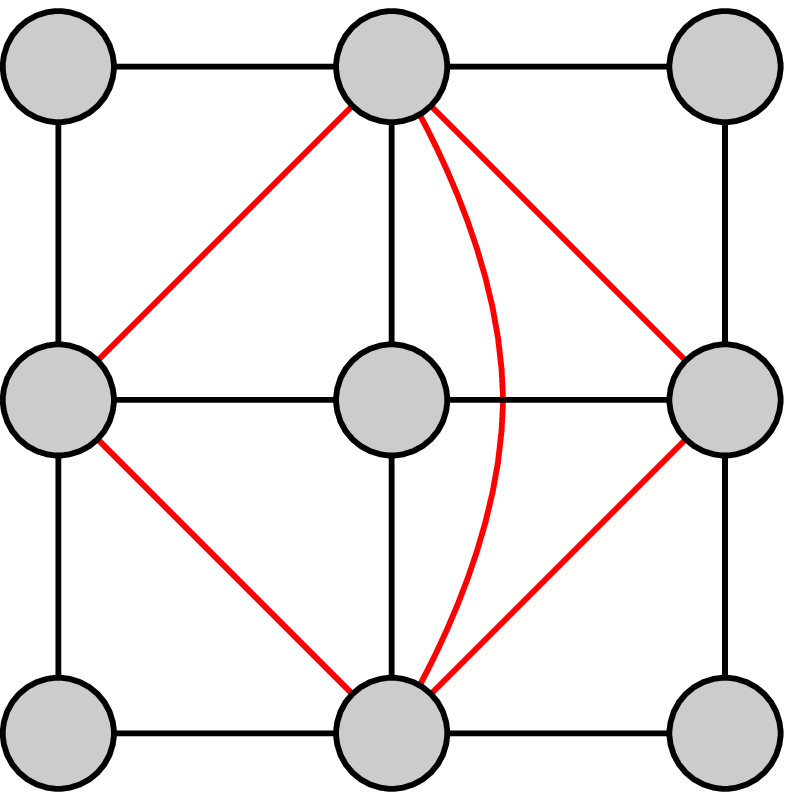} & \includegraphics[scale=0.25]{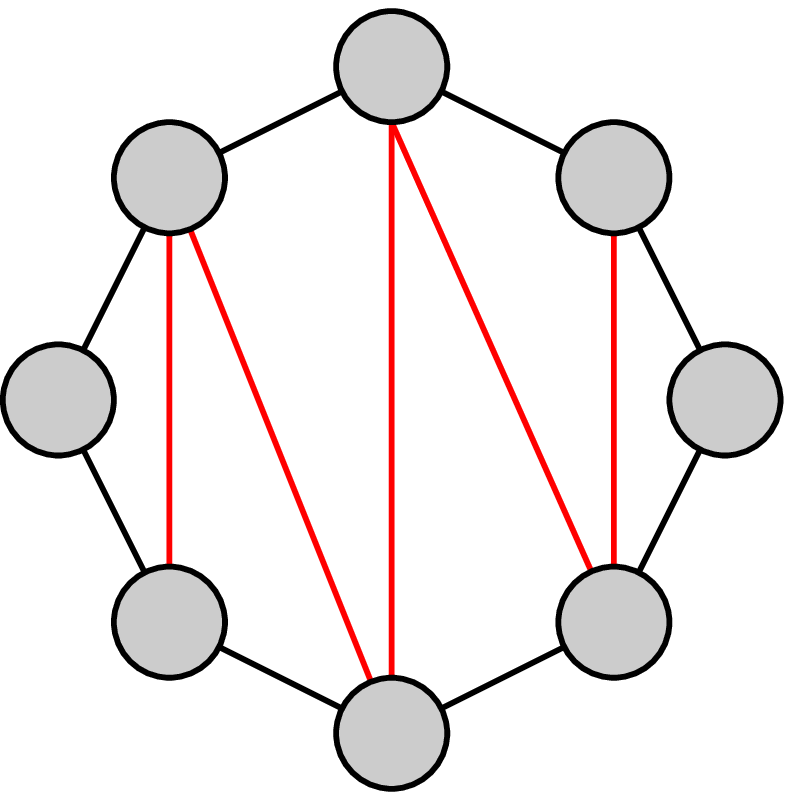}\\
  (a) & (b) & (c) & (d)\\
 \end{tabular}
\end{center}
\vspace{-3mm}
\caption{The models at left ((a) and (b)) can be triangulated  ((c) and (d)) so that the junction-tree algorithm can be applied. Despite the fact that the new models have larger maximal cliques, the corresponding potentials are still factored over pairs of nodes only. Our algorithms exploit this fact.}
\label{fig:examples_intro}
\end{figure}

\begin{figure}
\begin{center}
 \begin{tabular}{cccc}
\parbox[c]{0.18\columnwidth}{\centering\includegraphics[scale=0.35]{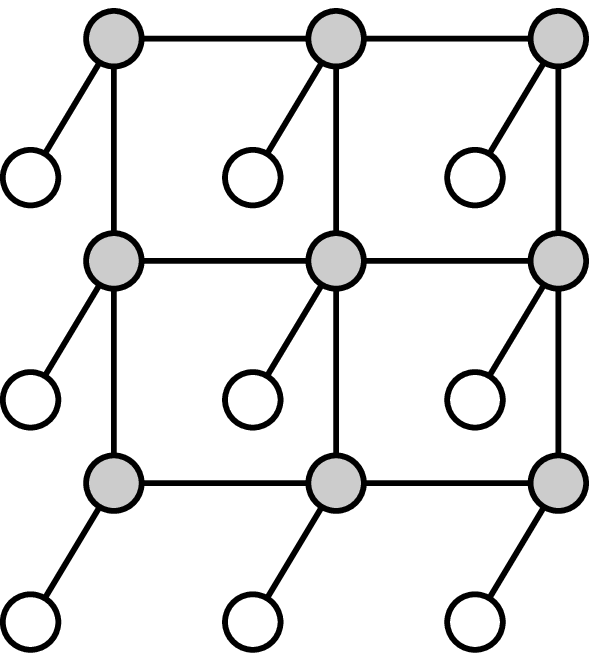}} & \parbox[c]{0.18\columnwidth}{\centering\includegraphics[scale=0.35]{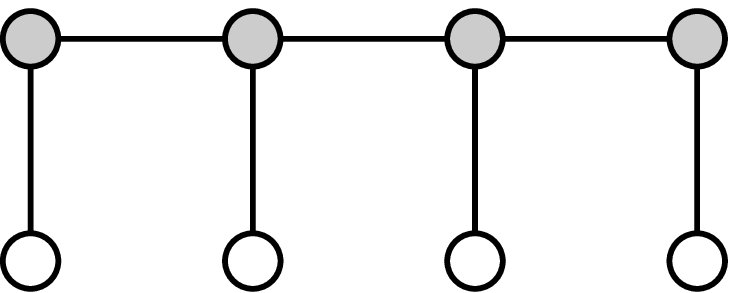}} & \parbox[c]{0.18\columnwidth}{\centering\includegraphics[scale=0.35]{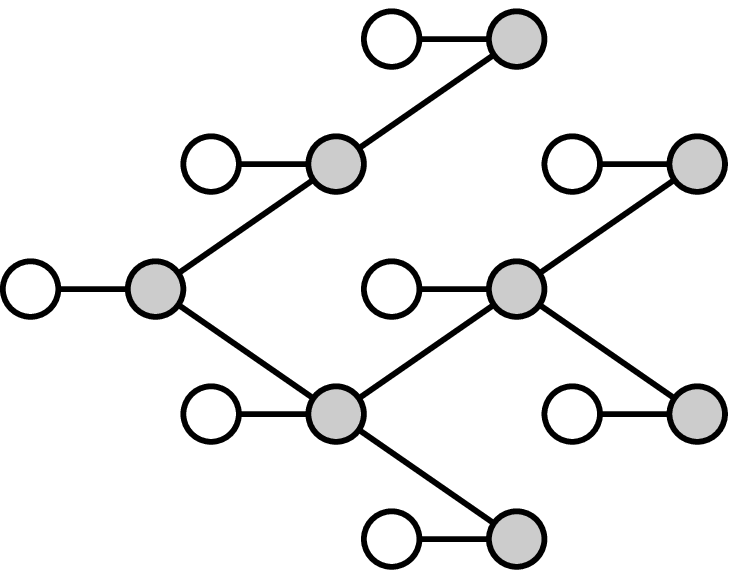}} & \parbox[c]{0.18\columnwidth}{\centering\includegraphics[scale=0.35]{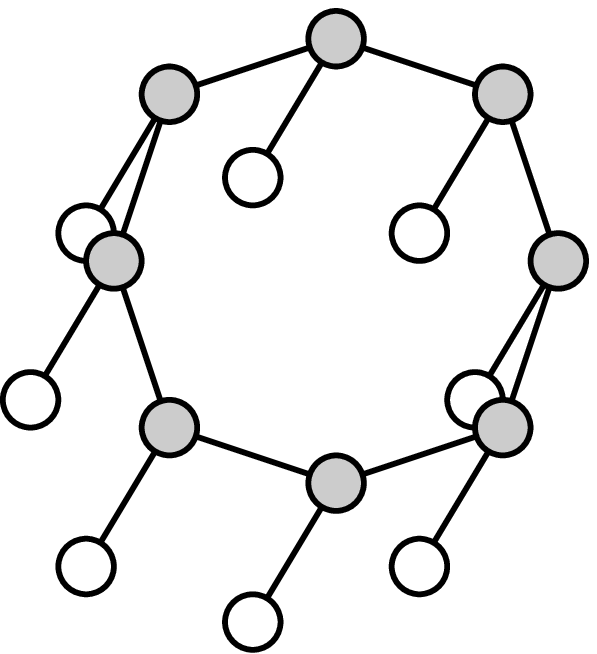}}\\
(a) & (b) & (c) & (d)
 \end{tabular}
\end{center}
 \caption{Some graphical models to which our results apply: \emph{cliques containing observations have fewer latent variables than purely latent cliques}. White nodes correspond to the observation, gray nodes to the labeling. In other words, cliques containing a white node encode the \emph{data likelihood}, whereas cliques containing only gray nodes encode \emph{priors}.
}
\label{fig:examps}
\end{figure}

In this paper, we exploit the fact that the maximal cliques (after triangulation) often have potentials that factor over subcliques, as illustrated in Figure \ref{fig:examples_intro}. We will show that whenever this is the case, the expected computational complexity of exact inference \emph{can be improved} (both the asymptotic upper-bound and the actual runtime).

Additionally, we will show that this result can be applied so long as \emph{those cliques that are conditioned upon an observation} contain fewer latent variables than those cliques consisting of purely latent variables; the `purely latent' cliques can be pre-processed \emph{offline}, allowing us to achieve the same benefits as described in the previous paragraph.

We show that these properties reveal themselves in a wide variety of real applications. Both of our improvements shall increase the class of problems for which inference via max-product belief-propagation is tractable.


A core operation encountered in the junction-tree algorithm is that of finding the index that chooses the largest product amongst two lists of length $N$:
\begin{equation}
 \hat{i} = \argmax_{i\in\lbrace 1 \ldots N \rbrace} \left\lbrace \mathbf{v}_a[i] \times \mathbf{v}_b[i] \right\rbrace.
\label{eq:hati}
\end{equation}
Our results stem from the realization that while \eq{eq:hati} appears to be a \emph{linear} time operation, it can be decreased to $O(\sqrt{N})$ (in the expected case) if we know the permutations that sort $\mathbf{v}_a$ and $\mathbf{v}_b$. 


A preliminary version of this work appeared in \citet{McACae10}.

\subsection{Summary of Results}

A selection of the results to be presented in the remainder of this paper can be summarized as follows:
\begin{itemize}
 \item We are able to lower the asymptotic expected running time of the max-product belief-propagation for \emph{any} graphical model whose cliques factorize into lower-order terms.
 \item The results obtained are exactly those that would be obtained by the traditional version of the algorithm, i.e., no approximations are used.
 \item Our algorithm also applies whenever cliques containing an observed variable contain fewer latent variables than purely latent cliques, as in Figure \ref{fig:examps} (meaning that certain computations can be taken offline).
 \item For any cliques composed of pairwise factors, we obtain an expected speed-up of \emph{at least} $\Omega(\sqrt{N})$ (assuming $N$ states per node; $\Omega$ denotes an \emph{asymptotic lower-bound}).
 \item For example, in models with third-order cliques containing pairwise terms, message-passing is reduced from $\Theta(N^3)$ to $O(N^2\sqrt{N})$, as in Figure \ref{fig:examples_intro}(d). For models containing pairwise (but purely latent) cliques, message-passing is reduced from $\Theta(N^2)$ to $O(N\sqrt{N})$, as in Figure \ref{fig:examps}.
 \item For cliques composed of $K$-ary factors, the expected speed-up generalizes to at least $\Omega(\frac{1}{K}N^\frac{1}{K})$, though it is \emph{never asymptotically slower} than the original solution.
 \item The expected-case improvement is derived under the assumption that the order-statistics of different factors are \emph{independent}.
 \item If the different factors have `similar' order statistics, the performance will be better than the expected case.
 \item If the different factors have `opposite' order statistics, the performance will be worse than the expected case, but is never asymptotically more expensive than the traditional version of the algorithm.
\end{itemize}

Our results do not apply for every semiring $S(+,\cdot)$, but only to those whose `addition' operation defines an order (for example, $\min$ or $\max$); we also assume that under this ordering, our `multiplication' operator satisfies
\begin{equation}
a < b \wedge c < d ~~\Rightarrow~~ a \cdot c < b \cdot d.
\label{eq:condition}
\end{equation}
Thus our results certainly apply to the \emph{max-sum} and \emph{min-sum} semirings (as well as \emph{max-product} and \emph{min-product}, assuming non-negative potentials), but not for \emph{sum-product} (for example). Consequently, our approach is useful for computing MAP-states, but cannot be used to compute marginal distributions. We also assume that the domain of each node is \emph{discrete}. 

We shall initially present our algorithm as it applies to models of the type shown in Figure \ref{fig:examples_intro}. The more general (and arguably more useful) application of our algorithm to those models in Figure \ref{fig:examps} shall be deferred until Section \ref{sec:latent}, where it can be seen as a straightforward generalization of our initial results.

\subsection{Related Work}

There has been previous work on speeding-up message-passing algorithms by exploiting some type of structure in certain graphical models. For example, \citet{KerAhmNat09} study the case where different cliques share the same potential function. In \citet{pedro_bp}, fast message-passing algorithms are provided for cases in which the potential of a 2-clique is only dependent on the \emph{difference} of the latent variables (which is common in some computer vision applications); they also show how the algorithm can be made faster if the graphical model is a bipartite graph. In \citet{KumTor}, the authors provide faster algorithms for the case in which the potentials are \emph{truncated}, whereas in \citet{PetFehBur08} the authors offer speed-ups for models that are specifically grid-like.

The latter work is perhaps the most similar in spirit to ours, as it exploits the fact that certain factors can be \emph{sorted} in order to reduce the search space of a certain maximization problem. In practice, this leads to linear speed-ups over a $\Theta(N^4)$ algorithm.

Another closely related paper is that of \citet{jointree}. This work can be seen to compliment ours in the sense that it exploits essentially the same type of factorization that we study, though it applies to \emph{sum-product} versions of the algorithm, rather than the \emph{max-product} version that we shall study. \citet{nested} also exploits factorization within cliques of junction-trees, albeit a different type of factorization than that studied here.

In Section \ref{sec:optimizing}, we shall see that our algorithm is closely related to a well-studied problem known as `funny matrix multiplication' \citep{Kerr70}. The worst-case complexity of this problem has been studied in relation to the all-pairs shortest path problem \citep{allpsp,karger}.

\section{Background}
\label{sec:background}

The notation we shall use is briefly defined in Table \ref{tab:definitions}. We shall assume throughout that the \emph{max-product} semiring is being used, though our analysis is almost identical for any suitable choice.

\begin{table}[t]
  \caption{Notation}
  \label{tab:definitions}
\begin{center}
\ \\
\begin{tabular}{p{0.25\columnwidth} | p{0.58\columnwidth} }
 \hline
\textbf{Example} & \textbf{description}\\
\hline
\hline
$A; B$ & capital letters refer to sets of nodes (or similarly, cliques);\\
$A \cup B; A \cap B; A \setminus B$ & standard set operators are used ($A \setminus B$ denotes set difference);\\
$\text{dom}(A)$ & the domain of a set; this is just the Cartesian product of the domains of each element in the set;\\
$\mathbf{P}$ & bold capital letters refer to arrays;\\
$\mathbf{x}$ & bold lower-case letters refer to vectors;\\
$\mathbf{x}[a]$ & vectors are indexed using square brackets;\\
$\mathbf{P}[n]$ & similarly, square brackets are used to index a \emph{row} of a 2-d array,\\
$\mathbf{P}[\mathbf{n}]$ & or a row of an $(|\mathbf{n}| + 1)$-dimensional array;\\
$\mathbf{P}^X; \mathbf{v}^a$ & superscripts are just labels, i.e., $\mathbf{P}^X$ is an array, $\mathbf{v}^a$ is a vector;\\
$\mathbf{v}_a$ & \emph{constant} subscripts are also labels, i.e., if $a$ is a constant, then $\mathbf{v}_a$ is a constant vector;\\
$x_i; \mathbf{x}_A$ & \emph{variable} subscripts define variables; the subscript defines the domain of the variable;\\
$\mathbf{n}|_X$ & if $\mathbf{n}$ is a constant vector, then $\mathbf{n}|_X$ is the \emph{restriction} of that vector to those indices corresponding to variables in $X$ (assuming that $X$ is an ordered set);\\
$\Phi_A; \Phi_A(\mathbf{x}_A)$ & a function over the variables in a set $A$; the argument $\mathbf{x}_A$ will be suppressed if clear, given that `functions' are essentially arrays for our purposes;\\
$\Phi_{i,j}(x_i, x_j)$ & a function over a pair of variables $(x_i, x_j)$;\\
$\Phi_A(\mathbf{n}|_B; \mathbf{x}_{A \setminus B})$ & if one argument to a function is constant (here $\mathbf{n}|_B$), then it becomes a function over fewer variables (in this case, only $\mathbf{x}_{A \setminus B}$ is free);\\
 \hline
\end{tabular}
\end{center}

\end{table}

MAP-inference in a graphical model $\mathcal G$ consists of solving an optimization problem of the form
\begin{equation}
 \mathbf{\hat{x}} = \argmax_{\mathbf{x}} \prod_{C \in \mathcal C} \Phi_C(\mathbf{x}_C),
\label{eq:map}
\end{equation}
where $\mathcal C$ is the set of maximal cliques in $\mathcal G$. This problem is often solved via \emph{message-passing} algorithms such as the junction-tree algorithm, loopy belief-propagation, or inference in a factor graph \citep{thegdl,Weiss00,factorgraphs}.

Two of the fundamental steps encountered in message-passing algorithms are defined below. Firstly, the message from a clique $X$ to an intersecting clique $Y$ is defined by
\begin{equation}
 m_{X \rightarrow Y}(\mathbf{x}_{X \cap Y}) = \max_{\mathbf{x}_{X \setminus Y}} \left\lbrace \Phi_X(\mathbf{x}_X) \!\!\!\!\! \prod_{Z\in \Gamma(X) \setminus Y} \!\!\!\!\! m_{Z \rightarrow X}(\mathbf{x}_{X \cap Z}) \right\rbrace
\label{eq:message}
\end{equation}
(where $\Gamma(X)$ returns the neighbors of the clique $X$). If such messages are computed after $Y$ has received messages from all of its neighbors except $X$ (i.e., $\Gamma(X) \setminus Y$), then this defines precisely the update scheme used by the junction-tree algorithm. The same update scheme is used for loopy belief-propagation, though it is done iteratively in a randomized fashion.

Secondly, after all messages have been passed, the MAP-states for a subset of nodes $M$ (assumed to belong to a clique $X$) is computed using
\begin{equation}
 m_M(\mathbf{x}_M) = \max_{\mathbf{x}_{X \setminus M}} \left\lbrace \Phi_X(\mathbf{x}_X) \!\!\! \prod_{Z\in \Gamma(X)} \!\!\! m_{Z \rightarrow X}(\mathbf{x}_{X \cap Z}) \right\rbrace.
\label{eq:marginal}
\end{equation}

Often, the clique-potential $\Phi_X(\mathbf{x}_X)$ shall be decomposable into several smaller factors, i.e.,
\begin{equation}
 \Phi_X(\mathbf{x}_X) = \prod_{F \subset X} \Phi_{F}(\mathbf{x}_{F}).
\end{equation}
Some simple motivating examples are shown in Figure \ref{fig:tree3}: a model for pose estimation from \citet{sigal06}, a `skip-chain CRF' from \citet{galley06}, and a model for shape matching from \citet{lbpmatch}. In each case, the triangulated model has third-order cliques, but the potentials are only pairwise. Other examples have already been shown in Figure \ref{fig:examples_intro}; analogous cases are ubiquitous in many real applications.

\begin{figure}
 \begin{center}
\begin{tabular}{ccc}
  \parbox[c]{0.3\columnwidth}{\centering\includegraphics[scale=0.25]{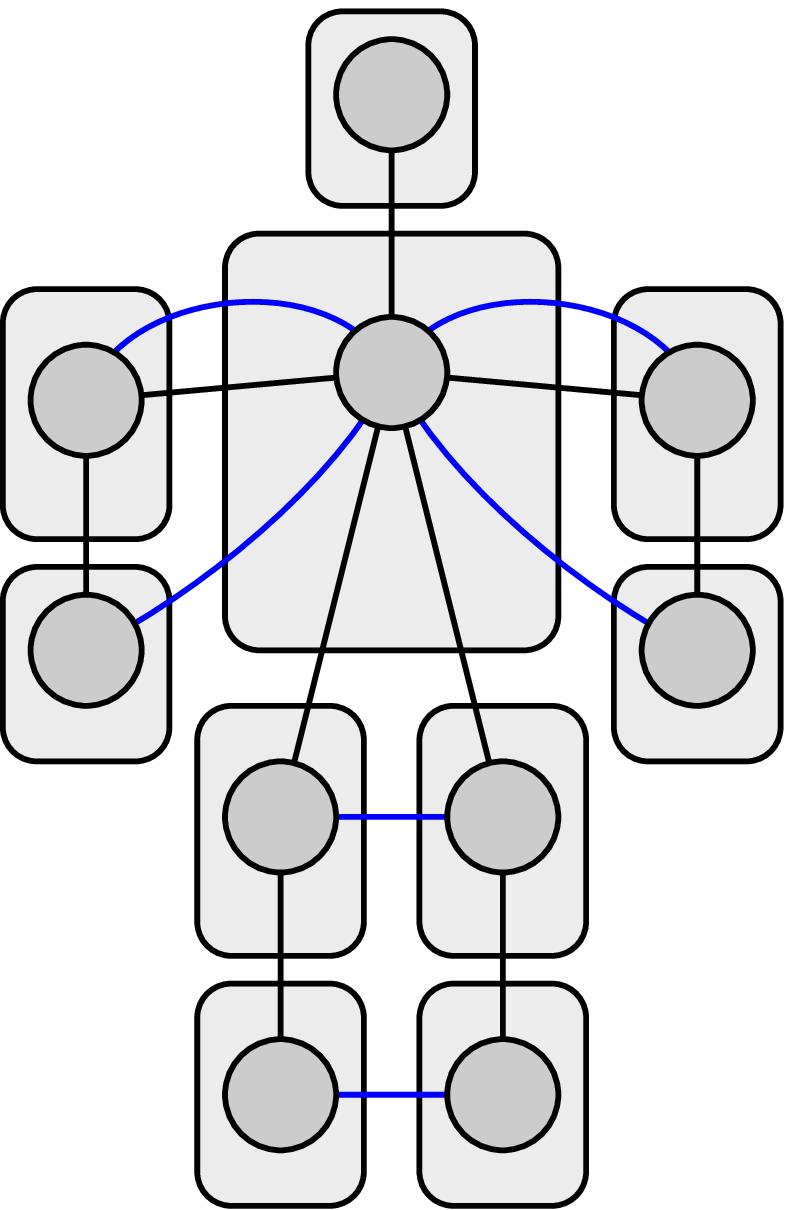}\\\vspace{1mm}} & \parbox[c]{0.3\columnwidth}{\centering\includegraphics[scale=0.25]{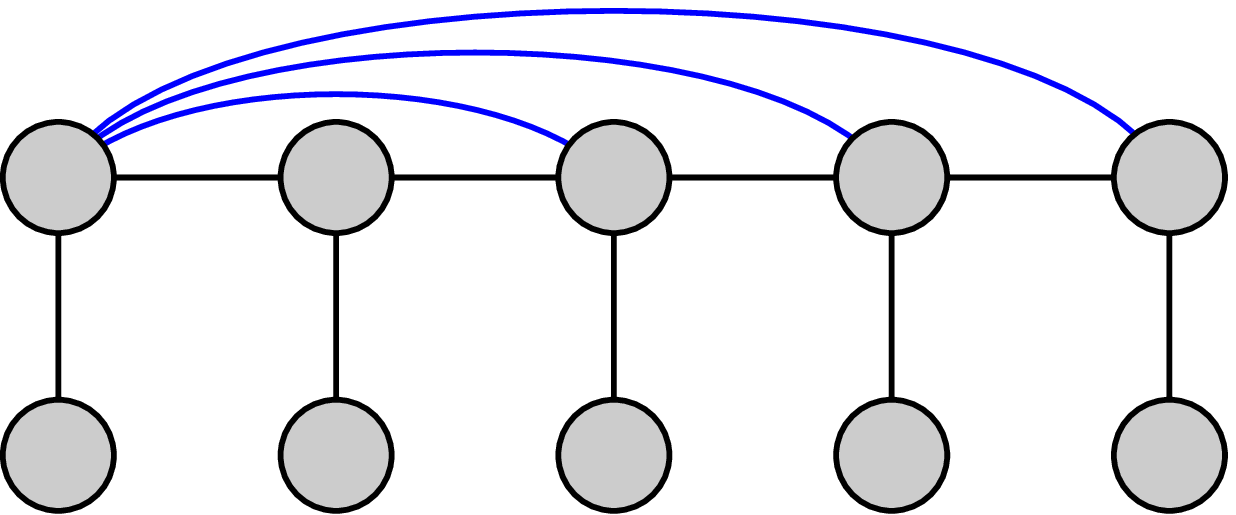}} & \parbox[c]{0.3\columnwidth}{\centering\includegraphics[scale=0.25]{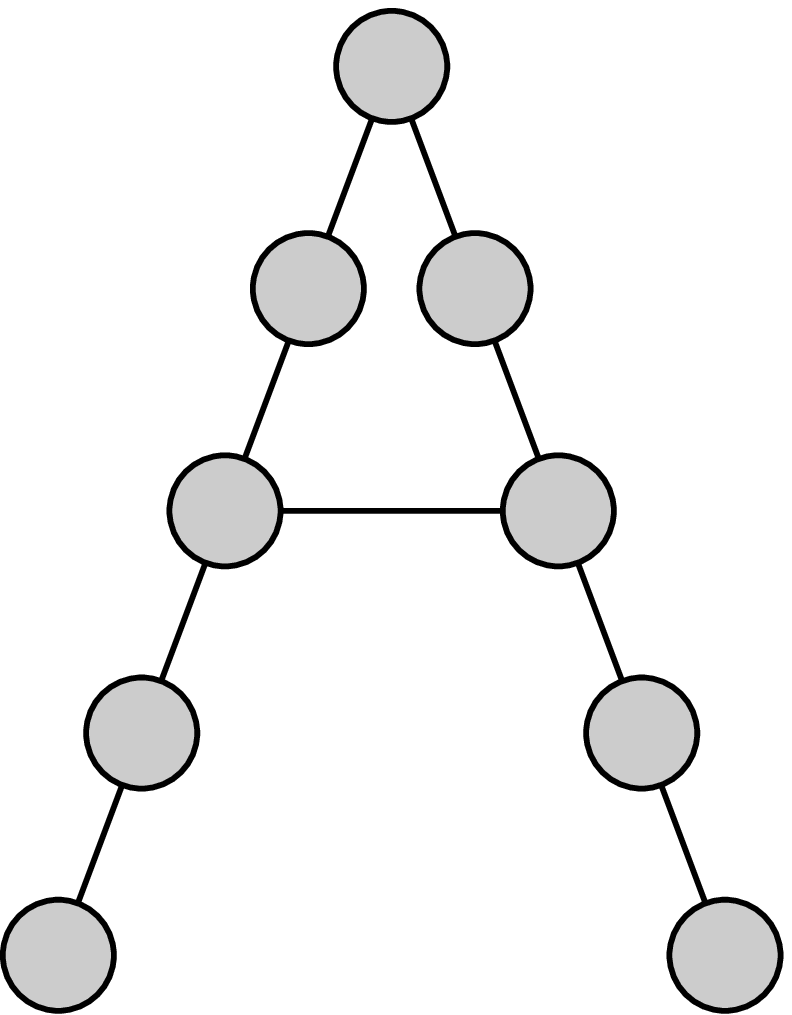}}\\
(a) & (b) & (c)
\end{tabular}
 \end{center}
\vspace{-3mm}
\caption{(a) A model for pose reconstruction from \citet{sigal06}; (b) A `skip-chain CRF' from \citet{galley06}; (c) A model for deformable matching from \citet{lbpmatch}. Although the (triangulated) models have cliques of size three, their potentials factorize into pairwise terms.}
\label{fig:tree3}
\end{figure}


The optimizations we suggest shall apply to general problems of the form
\begin{equation}
 m_M(\mathbf{x}_M) = \max_{\mathbf{x}_{X \setminus M}} \prod_{F \subset X} \Phi_{F}(\mathbf{x}_{F}),
\end{equation}
which subsumes both \eq{eq:message} and \eq{eq:marginal}, where we simply treat the messages as factors of the model. Algorithm \ref{alg:brute} gives the traditional solution to this problem, which does not exploit the factorization of $\Phi_X(\mathbf{x}_X)$. This algorithm runs in $\Theta(N^{|X|})$, where $N$ is the number of states per node, and $|X|$ is the size of the clique $X$ (we assume that for a given $\mathbf{x}_X$, computing $\prod_{F \subset X} \Phi_{F}(\mathbf{x}_{F})$ takes constant time, as our optimizations shall not modify this cost).

\begin{algorithm}
 \caption{Brute-force computation of max-marginals}
 \label{alg:brute}
\begin{algorithmic}[1]
 \REQUIRE a clique $X$ whose max-marginal $m_M(\mathbf{x}_M)$ (where $M \subset X$) we wish to compute; assume that each node in $X$ has domain $\left\lbrace 1 \ldots N\right\rbrace$
 \FOR{$\mathbf{m} \in \text{dom}(M)$ \COMMENT{i.e., $\left\lbrace 1 \ldots N\right\rbrace^{|M|}$}}
 \STATE $\mathit{max} := -\infty$
 \FOR{$\mathbf{y} \in \text{dom}(X \setminus M)$}
 \IF {$\prod_{F \subset X} \Phi_{F}(\mathbf{m}|_{F}; \mathbf{y}|_{F}) > max$}
   \STATE $max := \prod_{F \subset X} \Phi_{F}(\mathbf{m}|_{F}; \mathbf{y}|_{F})$
 \ENDIF
 \ENDFOR\ \COMMENT{this loop takes $\Theta(N^{|X \setminus M|})$}
 \STATE $m_M(\mathbf{m}) := \mathit{max}$
 \ENDFOR\ \COMMENT{this loop takes $\Theta(N^{|X|})$}
 \RETURN $m_M$
\end{algorithmic}
\end{algorithm}



\section{Optimizing Algorithm \ref{alg:brute}}
\label{sec:optimizing}

In order to specify a more efficient version of Algorithm \ref{alg:brute}, we begin by considering the simplest possible nontrivial factorization: a clique of size three containing pairwise factors. In such a case, our aim is to compute
\begin{equation}
m_{i,j}(x_i,x_j) = \max_{x_k} \Phi_{i,j,k}(x_i, x_j, x_k),
\label{eq:max3a}
\end{equation}
which we have assumed takes the form
\begin{equation}
m_{i,j}(x_i,x_j) =\max_{x_k} \Phi_{i,j}(x_i, x_j)\times\Phi_{i,k}(x_i, x_k)\times\Phi_{j,k}(x_j, x_k).
\end{equation}
For a particular value of $(x_i, x_j) = (a, b)$, we must solve
\begin{equation}
 m_{i,j}(a,b) = \Phi_{i,j}(a,b) \times \max_{x_k} \underbrace{\Phi_{i,k}(a,x_k)}_{\mathbf{v}_a} \times \underbrace{\Phi_{j,k}(b,x_k)}_{\mathbf{v}_b},
\label{eq:max3}
\end{equation}
which we note is in precisely the form shown in \eq{eq:hati}.

There is a close resemblance between \eq{eq:max3} and the problem of multiplying two matrices: if the `$\max$' in \eq{eq:max3} is replaced by summation, we essentially recover traditional matrix multiplication. While traditional matrix multiplication is well known to have a sub-cubic worst-case solution \citep[see][]{Strassen69}, the version in \eq{eq:max3} (often referred to as `funny matrix multiplication', or simply `max-product matrix multiplication') is known to be cubic in the worst case, assuming that only multiplication and comparison operations are used \citep{Kerr70}. The complexity of solving \eq{eq:max3} can also be shown to be equivalent to the all-pairs shortest path problem, which is studied in \citet{allpsp}. Not surprisingly, we shall not improve the worst-case complexity, but shall instead give far better \emph{expected-case} performance than existing solutions. Just as Strassen's algorithm can be used to solve \eq{eq:max3} when maximization is replaced by summation, there has been work studying the problem of \emph{sum-product} inference in graphical models, subject to the same type of factorization we discuss \citep{jointree}.

As we have previously suggested, it will be possible to solve \eq{eq:max3} efficiently if $\mathbf{v}_a$ and $\mathbf{v}_b$ are already sorted. We note that $\mathbf{v}_a$ will be reused for every value of $x_j$, and likewise $\mathbf{v}_b$ will be reused for every value of $x_i$. Sorting every row of $\Phi_{i,k}$ and $\Phi_{j,k}$ can be done in $\Theta(N^2\log N)$ (for $2N$ rows of length $N$).

The following elementary lemma is the key observation required in order to solve \eq{eq:max3} efficiently:

\begin{lemma}
If the $p^\text{th}$ largest element of $\mathbf{v}_a$ has the same index as the $q^\text{th}$ largest element of $\mathbf{v}_b$, then we only need to search through the $p$ largest values of $\mathbf{v}_a$, and the $q$ largest values of $\mathbf{v}_b$; any corresponding pair of smaller values could not possibly be the largest solution.
\label{main_lemma}
\end{lemma}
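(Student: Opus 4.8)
The plan is to prove the lemma by contradiction, showing that any candidate index lying outside the prescribed search region is strictly dominated by the index shared by the $p^\text{th}$ largest entry of $\mathbf{v}_a$ and the $q^\text{th}$ largest entry of $\mathbf{v}_b$. First I would fix notation: let $\ell$ be the index such that $\mathbf{v}_a[\ell]$ is the $p^\text{th}$ largest value of $\mathbf{v}_a$ and simultaneously $\mathbf{v}_b[\ell]$ is the $q^\text{th}$ largest value of $\mathbf{v}_b$ (the hypothesis of the lemma). The claim to establish is that the maximizing index $\hat\imath = \argmax_i \mathbf{v}_a[i]\times\mathbf{v}_b[i]$ can always be taken to lie in the set $S = A_p \cup B_q$, where $A_p$ is the set of indices of the $p$ largest entries of $\mathbf{v}_a$ and $B_q$ is the set of indices of the $q$ largest entries of $\mathbf{v}_b$.

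Next I would carry out the domination argument. Suppose $m \notin S$. Then $m \notin A_p$, so $\mathbf{v}_a[m]$ is ranked strictly below the $p^\text{th}$ largest entry of $\mathbf{v}_a$, whence $\mathbf{v}_a[m] < \mathbf{v}_a[\ell]$ (using discreteness / that we may break ties consistently — I would note that ties can be handled by fixing an arbitrary total order refining the value order, so strictness is harmless). Likewise $m \notin B_q$ gives $\mathbf{v}_b[m] < \mathbf{v}_b[\ell]$. Now invoke the semiring condition (eq.~\ref{eq:condition}): from $\mathbf{v}_a[m] < \mathbf{v}_a[\ell]$ and $\mathbf{v}_b[m] < \mathbf{v}_b[\ell]$ we conclude $\mathbf{v}_a[m]\times\mathbf{v}_b[m] < \mathbf{v}_a[\ell]\times\mathbf{v}_b[\ell]$. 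Hence index $m$ produces a strictly smaller product than index $\ell$, and since $\ell \in S$, no index outside $S$ can be the unique maximizer; if there are ties for the maximum, at least one maximizer lies in $S$ (indeed $\ell$ itself attains a value no smaller than any $m\notin S$). Therefore it suffices to search the indices in $A_p$ together with those in $B_q$, which is the assertion of the lemma.

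The only delicate point — and the step I would flag as the main obstacle — is the treatment of ties and the exact meaning of ``the $p^\text{th}$ largest element has the same index'': if several entries of $\mathbf{v}_a$ share a value, the phrase ``$p^\text{th}$ largest'' is ambiguous, and one must commit to a fixed tie-breaking rule (e.g., by index) so that the ranks, the sets $A_p$, $B_q$, and the strict inequalities above are all well defined and mutually consistent. Once that convention is in place, the argument is genuinely elementary: it is just one application of the monotonicity hypothesis (eq.~\ref{eq:condition}) to the two strict coordinate-wise inequalities. I would therefore keep the proof short, stating the tie-breaking convention explicitly and then giving the three-line contradiction, and I would remark in passing that this is exactly the property that will later let us bound the expected size of $S$ (hence the running time) when the order statistics of $\mathbf{v}_a$ and $\mathbf{v}_b$ are independent.
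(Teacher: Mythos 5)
Your proof is correct and is precisely the intended argument: the paper itself offers no explicit proof (it labels the lemma ``elementary'' and relies on exactly this domination step, via the monotonicity condition \eq{eq:condition}, to justify the stopping rule of Algorithm \ref{alg1}). Your tie-breaking caveat is also consistent with the paper's own remark (Figure \ref{fig:alg1}) that repeated elements may be sorted arbitrarily, so nothing further is needed.
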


This observation is used to construct Algorithm \ref{alg1}. Here we iterate through the indices starting from the largest values of $\mathbf{v}_a$ and $\mathbf{v}_b$, stopping once both indices are `behind' the maximum value found so far (which we then know is the maximum). This algorithm is demonstrated pictorially in Figure \ref{fig:alg1}.

\begin{algorithm}
  \caption{Find $i$ such that $\mathbf{v}_a[i] \times \mathbf{v}_b[i]$ is maximized}
  \label{alg1}
\begin{algorithmic}[1]
\REQUIRE two vectors $\mathbf{v}_a$ and $\mathbf{v}_b$, and permutation functions $p_a$ and $p_b$ that sort them in decreasing order (so that $\mathbf{v}_a[p_a[1]]$ is the largest element in $\mathbf{v}_a$)
\STATE \textbf{Initialize:} $\mathit{start} := 1$,
$\mathit{end}_a := p_a^{-1}[p_b[1]]$, $\mathit{end}_b := p_b^{-1}[p_a[1]]$
\COMMENT{if $end_b=k$, then the largest element in $\mathbf{v}_a$ has the same index as the $k^{\text{th}}$ largest element in $\mathbf{v}_b$}\\
\STATE $\mathit{best} := p_a[1]$, $\mathit{max} := \mathbf{v}_a[\mathit{best}]\times\mathbf{v}_b[\mathit{best}]$
\IF {$\mathbf{v}_a[p_b[1]]\times\mathbf{v}_b[p_b[1]] > \mathit{max}$}
\STATE $\mathit{best} := p_b[1]$, $\mathit{max} := \mathbf{v}_a[\mathit{best}]\times\mathbf{v}_b[\mathit{best}]$
\ENDIF
\WHILE{$\mathit{start} < \mathit{end}_a$\ \COMMENT{in practice, we could also stop if $\mathit{start} < \mathit{end}_b$, but the version given here is the one used for analysis in Appendix \ref{sec:analysis}}}\label{line2}
\STATE $\mathit{start} := \mathit{start} + 1$
\IF {$\mathbf{v}_a[p_a[\mathit{start}]]\times\mathbf{v}_b[p_a[\mathit{start}]] > \mathit{max}$} \label{if1}
\STATE $\mathit{best} := p_a[\mathit{start}]$
\STATE $\mathit{max} := \mathbf{v}_a[\mathit{best}]\times\mathbf{v}_b[\mathit{best}]$
\ENDIF
\IF {$p_b^{-1}[p_a[\mathit{start}]] < \mathit{end}_b$}
\STATE $\mathit{end}_b := p_b^{-1}[p_a[\mathit{start}]]$
\ENDIF \label{endif1}
\STATE \COMMENT{repeat Lines \ref{if1}--\ref{endif1}, interchanging $a$ and $b$} \label{line:repeat}
\ENDWHILE\ \COMMENT{this takes \emph{expected time} $O(\sqrt{N})$}
\RETURN $\mathit{best}$
\end{algorithmic}
\end{algorithm}

\begin{figure}
\footnotesize
\begin{flushright}
$\text{Step 1:} \left\lbrace\text{\parbox{0.77\columnwidth}{\includegraphics[angle=-90,width=0.6\columnwidth]{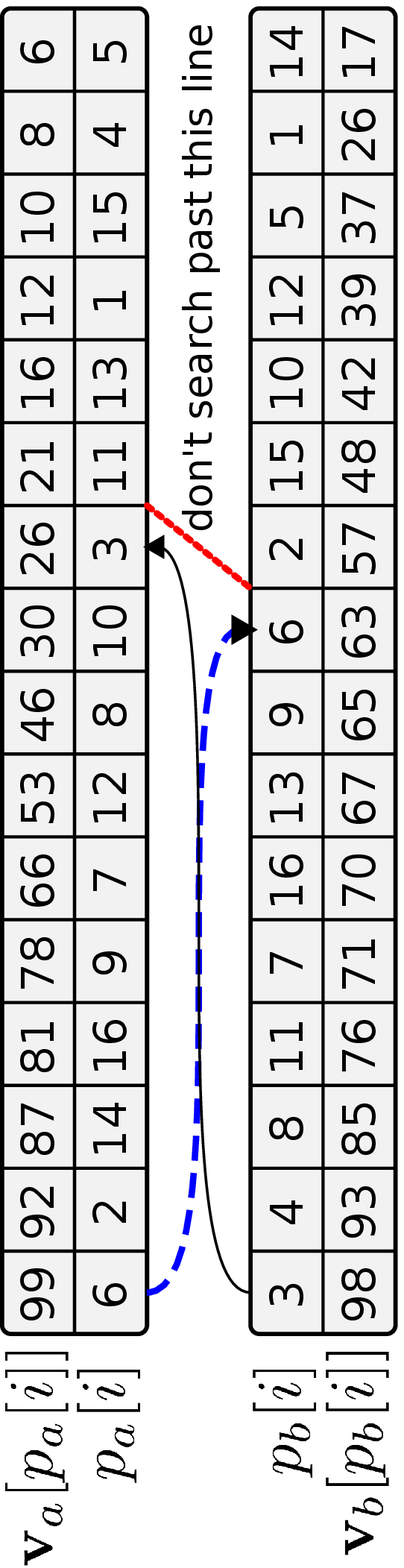}}}\right.$\parbox{0.01\columnwidth}{\ }\\

\ \\
\ \\
\ \\

$\text{Step 2:} \left\lbrace\text{\parbox{0.77\columnwidth}{\includegraphics[width=0.6\columnwidth]{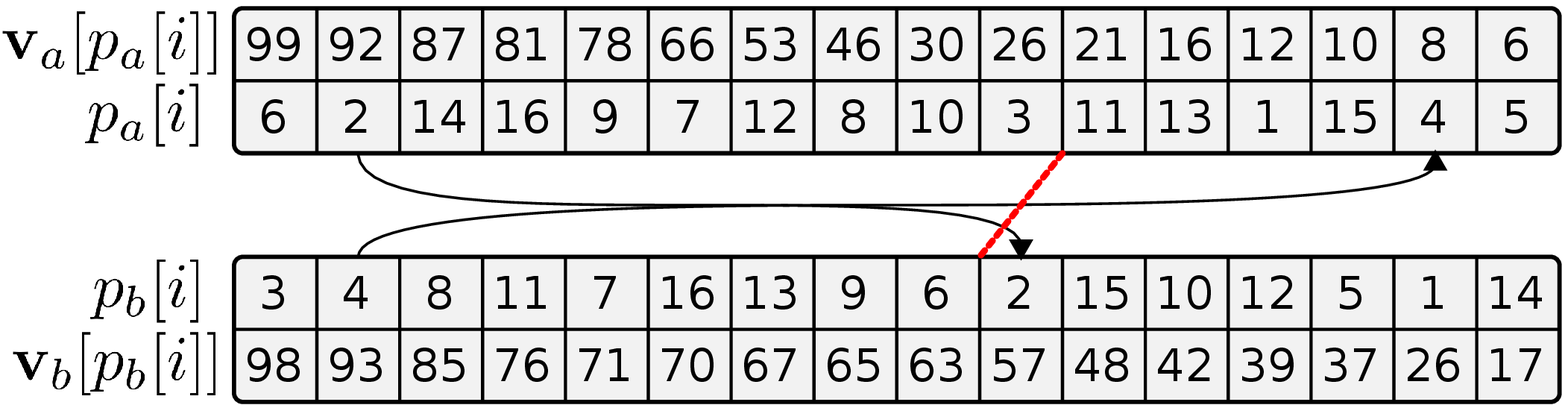}}}\right.$\parbox{0.01\columnwidth}{\ }\\

\ \\
\ \\
\ \\

$\text{Step 3:} \left\lbrace\text{\parbox{0.77\columnwidth}{\includegraphics[width=0.6\columnwidth]{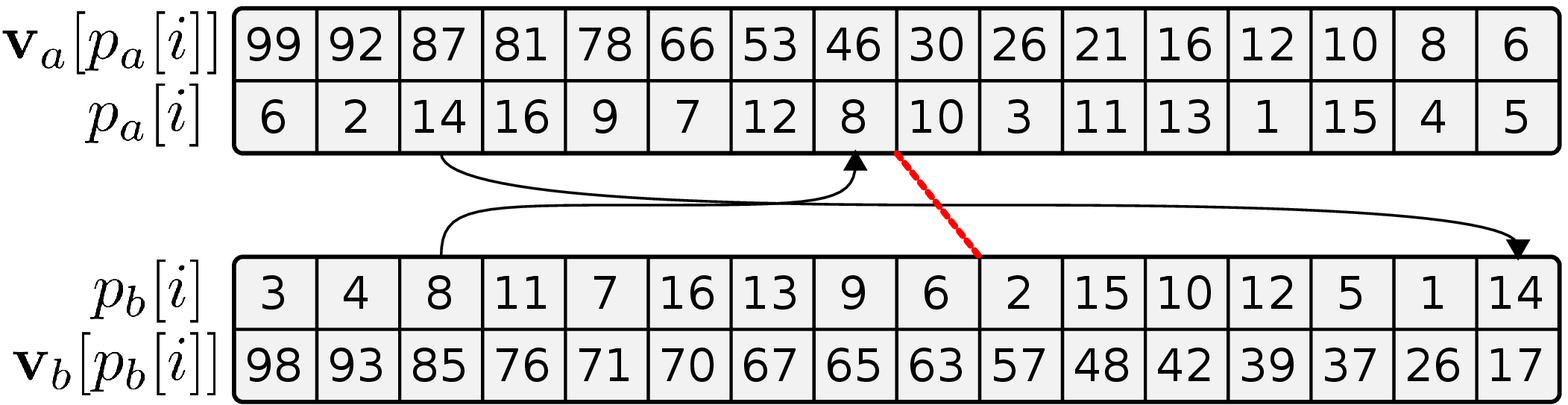}}}\right.$\parbox{0.01\columnwidth}{\ }\\

\ \\
\ \\
\ \\

$\text{Step 4:} \left\lbrace\text{\parbox{0.77\columnwidth}{\includegraphics[width=0.6\columnwidth]{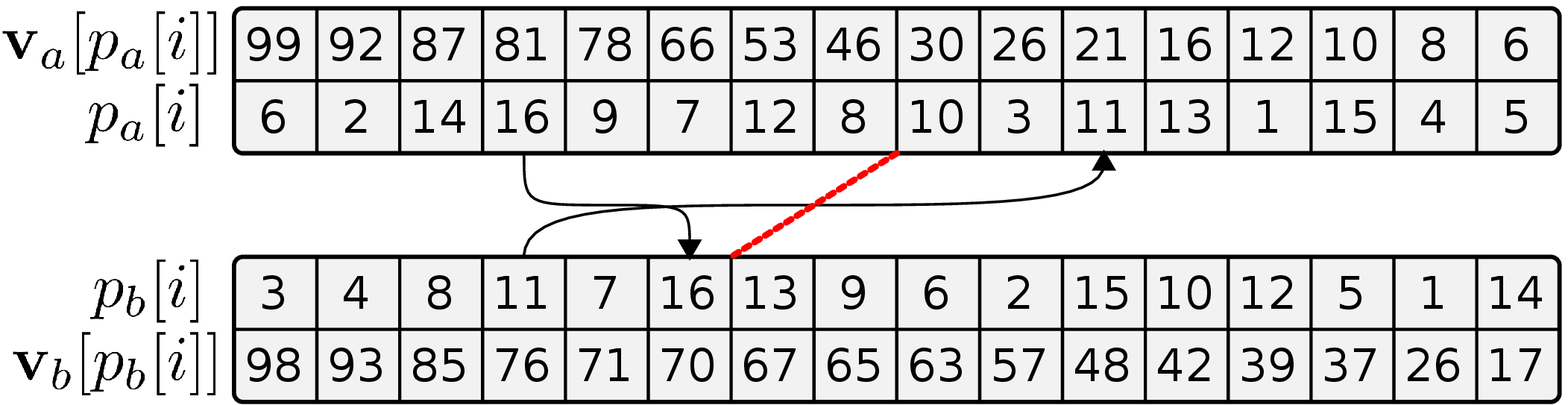}}}\right.$\parbox{0.01\columnwidth}{\ }\\

\ \\
\ \\
\ \\

$\text{Step 5:} \left\lbrace\text{\parbox{0.77\columnwidth}{\includegraphics[width=0.6\columnwidth]{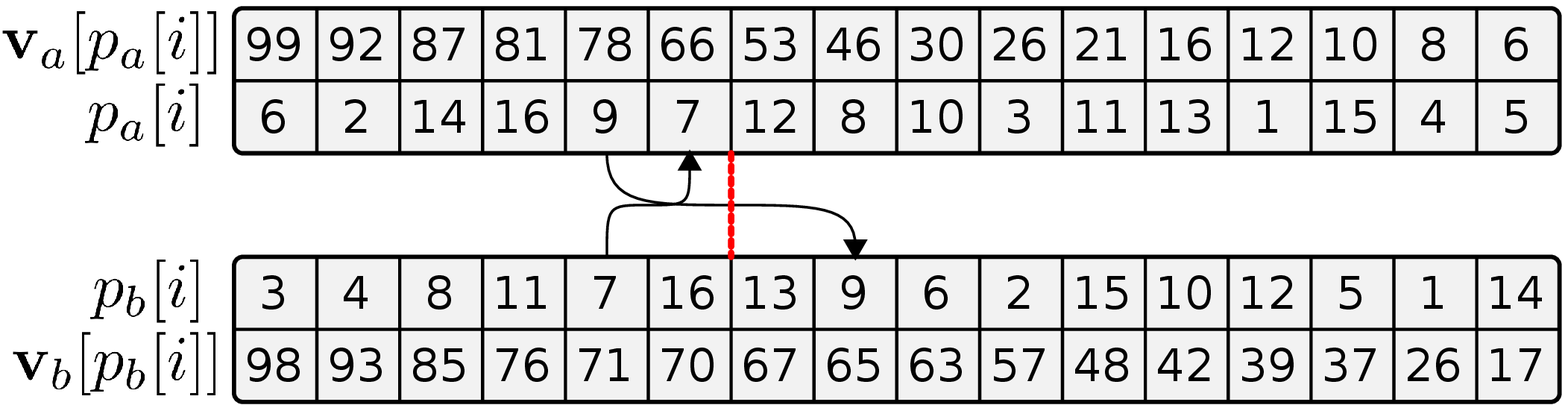}}}\right.$\parbox{0.01\columnwidth}{\ }\\

\end{flushright}
 \caption{Algorithm \ref{alg1}, explained pictorially. The arrows begin at $p_a[\mathit{start}]$ and $p_b[\mathit{start}]$; the red line connects $\mathit{end}_a$ and $\mathit{end}_b$, behind which we need not search; a dashed arrow is used when a new maximum is found. Note that in the event that $\mathbf{v}_a$ and $\mathbf{v}_b$ contain repeated elements, they can be sorted arbitrarily.}
\label{fig:alg1}
\end{figure}

A prescription of how Algorithm \ref{alg1} can be used to solve \eq{eq:max3a} is given in Algorithm \ref{alg:3clique}. Determining precisely the running time of Algorithm \ref{alg1} (and therefore Algorithm \ref{alg:3clique}) is not trivial, and will be explored in depth in Appendix \ref{sec:analysis}. We note that if the expected-case running time of Algorithm \ref{alg1} is $O(f(N))$, then the time taken to solve Algorithm \ref{alg:3clique} shall be $O(N^2(\log N + f(N)))$. At this stage we shall state an upper-bound on the true complexity in the following theorem:

\begin{theorem}
 The \emph{expected} running time of Algorithm \ref{alg1} is $O(\sqrt{N})$, yielding a speed-up of at least $\Omega(\sqrt{N})$ in cliques containing pairwise factors.
\label{the:alg1}
\end{theorem}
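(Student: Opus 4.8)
The plan is to model one call to Algorithm~\ref{alg1} as a \emph{stopping-time} experiment governed by a single random permutation. Write $\sigma = p_a^{-1}\circ p_b$, so that $\sigma(s)$ is the rank, within the decreasing order of $\mathbf{v}_a$, of the index holding the $s^{\text{th}}$-largest entry of $\mathbf{v}_b$. The assumption that the order statistics of the two factor-rows $\Phi_{i,k}(a,\cdot)$ and $\Phi_{j,k}(b,\cdot)$ are independent amounts precisely to $\sigma$ being uniformly distributed on the symmetric group $S_N$ (ties being broken arbitrarily, as already noted). Tracing the updates, the initialisation sets $\mathit{end}_a=\sigma(1)$ and the iteration with $\mathit{start}=t$ replaces $\mathit{end}_a$ by $\min(\mathit{end}_a,\sigma(t))$, so once $\mathit{start}$ has run through $1,\dots,t$ one has $\mathit{end}_a=\min\{\sigma(1),\dots,\sigma(t)\}$. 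Hence the loop --- in the form used for analysis, which drops the symmetric $\mathit{end}_b$ test and therefore can only run \emph{longer} --- halts once $\mathit{start}$ reaches the first value $T$ with $\min\{\sigma(1),\dots,\sigma(T)\}\le T$, and the number of iterations is at most $T$. That halting here is correct is immediate from Lemma~\ref{main_lemma}: some index then lies among the top $T$ entries of \emph{both} sorted lists, so no unexamined pair can beat the running maximum.

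The heart of the argument is then to bound $\mathbb{E}[T]$. Since $(\sigma(1),\dots,\sigma(t))$ is a uniformly random sequence of distinct elements of $\{1,\dots,N\}$, the event $\{T>t\}$ is precisely the event that all of them exceed $t$, with probability $\binom{N-t}{t}/\binom{N}{t}$ (and probability $0$ once $t>N/2$). Writing this as a product and applying $1-x\le e^{-x}$ termwise gives
\begin{equation}
 \Pr[T>t]=\prod_{j=0}^{t-1}\frac{N-t-j}{N-j}=\prod_{j=0}^{t-1}\Bigl(1-\frac{t}{N-j}\Bigr)\le e^{-t^2/N},
\end{equation}
from which $\mathbb{E}[T]=\sum_{t\ge 0}\Pr[T>t]\le\sum_{t\ge 0}e^{-t^2/N}\le 1+\int_0^\infty e^{-x^2/N}\,dx=1+\tfrac12\sqrt{\pi N}=O(\sqrt N)$. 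Since each pass through the while loop does only a constant number of table look-ups, multiplications and comparisons --- the permutations $p_a,p_b$ and their inverses being given as input, their one-off construction being booked separately as the $\Theta(N^2\log N)$ sorting term of Algorithm~\ref{alg:3clique} --- and the pre-loop work is $O(1)$, I would conclude that the expected running time of Algorithm~\ref{alg1} is $O(\mathbb{E}[T])=O(\sqrt N)$.

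For the speed-up statement I would simply compare with the naive evaluation of \eq{eq:hati}, which scans all $N$ products in $\Theta(N)$ time: substituting Algorithm~\ref{alg1} saves an expected factor $\Theta(N)/O(\sqrt N)=\Omega(\sqrt N)$, and at the level of a three-clique this converts the $\Theta(N^3)$ cost of Algorithm~\ref{alg:brute} into $O\!\bigl(N^2(\log N+\sqrt N)\bigr)=O(N^2\sqrt N)$, once more an $\Omega(\sqrt N)$ improvement.

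The step I expect to be the main obstacle is making the probabilistic part watertight: justifying that the permutation relating the two sorted orders really is uniform under the stated independence assumption (and that the tie-breaking convention does not spoil this), and estimating $\sum_t\binom{N-t}{t}/\binom{N}{t}$ sharply enough to land a clean $O(\sqrt N)$ rather than merely $o(N)$. The same product, bounded below by $(1-t/(N-t))^t$, should also give a matching $\mathbb{E}[T]=\Omega(\sqrt N)$; that lower bound is not needed for the theorem but indicates the $\sqrt N$ exponent is the best this analysis can deliver. The full details are carried out in Appendix~\ref{sec:analysis}.
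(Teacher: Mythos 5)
Your argument is correct, and its combinatorial core coincides with the paper's: you reduce the loop length to the first time $T$ at which a uniformly random permutation $\sigma=p_a^{-1}\circ p_b$ places an entry inside the top-left $T\times T$ square, and your tail formula $\Pr[T>t]=\frac{(N-t)!\,(N-t)!}{(N-2t)!\,N!}$ is exactly the paper's \eq{eq:factprob} (leading to the exact expectation \eq{eq:runtimek1}). Where you genuinely diverge is in how this tail is summed. The paper never estimates \eq{eq:runtimek1} directly: it introduces two auxiliary relaxations (a strip of prescribed width $f(N)$ as in Figure~\ref{fig:perms}(d), and then sampling a random entry per row \emph{with} replacement, \eq{eq:replace}), obtains a geometric series summing to $N/f(N)$, and solves the fixed-point condition $f(N)\in O(N/f(N))$. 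You instead bound the exact product termwise by $1-x\le e^{-x}$ to get $\Pr[T>t]\le e^{-t^2/N}$ and integrate, landing the explicit constant $1+\tfrac12\sqrt{\pi N}$. Your route is shorter and sharper for two lists, and it also hands you the matching $\Omega(\sqrt N)$ lower bound almost for free; the paper's more roundabout relaxation is chosen because it is the step that generalizes to $K$ lists (Theorem~\ref{the:algext}), where the exact analogue of your product, \eq{eq:PKm}, is no longer tractable. Two minor points to tighten: the uniformity of $\sigma$ is an \emph{assumption} of the expected-case model (independent order statistics with ties broken arbitrarily), not something derivable, so it should be stated as such; and your bookkeeping that the analyzed loop condition drops the $\mathit{end}_b$ test and hence only overestimates the running time is exactly the caveat the paper flags on the while-loop, so it is worth keeping explicit.
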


\begin{algorithm}
 \caption{Use Algorithm \ref{alg1} to compute the max-marginal of a 3-clique containing pairwise factors}
 \label{alg:3clique}
\begin{algorithmic}[1]
 \REQUIRE a potential $\Phi_{i,j,k}(a,b,c) = \Phi_{i,j}(a,b) \times \Phi_{i,k}(a,c) \times \Phi_{j,k}(b,c)$ whose max-marginal $m_{i,j}(x_i, x_j)$ we wish to compute
\FOR{$n \in \left\lbrace 1 \ldots N \right\rbrace$}
\STATE compute $\mathbf{P}^i[n]$ by sorting $\Phi_{i,k}(n, x_k)$ \COMMENT{takes $\Theta(N\log N)$} \label{line:sort1}
\STATE compute $\mathbf{P}^j[n]$ by sorting $\Phi_{j,k}(n, x_k)$ \COMMENT{$\mathbf{P}^i$ and $\mathbf{P}^j$ are $N\times N$ arrays, each row of which is a permutation; $\Phi_{i, k}(n,x_k)$ and $\Phi_{j, k}(n,x_k)$ are functions over $x_k$, since $n$ is constant in this expression} \label{line:sort2}
\ENDFOR\ \COMMENT{this loop takes $\Theta(N^2\log N)$}
\FOR{$(a,b) \in \left\lbrace 1\ldots N \right\rbrace^2$}
\STATE $\left(\mathbf{v}_a, \mathbf{v}_b\right) := \left(\Phi_{i,k}(a,x_k), \Phi_{j,k}(b,x_k) \right)$
\STATE $\left( p_a, p_b\right) := \left( \mathbf{P}^i[a], \mathbf{P}^j[b]\right)$
\STATE $\mathit{best} := \mathit{Algorithm\ref{alg1}}\left(\mathbf{v}_a, \mathbf{v}_b, p_a, p_b\right)$ \COMMENT{takes $O(\sqrt{N})$}
\STATE $m_{i,j}(a,b) := \Phi_{i,j}(a,b) \times \Phi_{i,k}(a,\mathit{best}) \times \Phi_{j,k}(b,\mathit{best})$
\ENDFOR\ \COMMENT{this loop takes $O(N^2\sqrt{N})$}\\ \COMMENT{the total running time is $O(N^2\log N + N^2\sqrt{N})$, which is dominated by $O(N^2\sqrt{N})$}
\RETURN $m_{i,j}$
\end{algorithmic}
\end{algorithm}

\subsection{An Extension to Higher-Order Cliques with Three Factors}
\label{sec:3factors}

The simplest extension that we can make to Algorithms \ref{alg1} and \ref{alg:3clique} is to note that they can be applied even when there are several overlapping terms in the factors. For instance, Algorithm \ref{alg:3clique} can be adapted to solve
\begin{equation}
m_{i,j}(x_i,x_j) =\max_{x_k, x_m} \Phi_{i,j}(x_i, x_j)\times
\Phi_{i,k,m}(x_i, x_k, x_m)\times\Phi_{j,k,m}(x_j, x_k, x_m),
\label{eq:shared}
\end{equation}
and similar variants containing three factors. Here both $x_k$ and $x_m$ are shared by $\Phi_{i,k,m}$ and $\Phi_{j,k,m}$. We can follow precisely the reasoning of the previous section, except that when we sort $\Phi_{i,k,m}$ (similarly $\Phi_{j,k,m}$) for a fixed value of $x_i$, we are now sorting an \emph{array} rather than a \emph{vector} (Algorithm \ref{alg:3clique}, Lines \ref{line:sort1} and \ref{line:sort2}); in this case, the permutation functions $p_a$ and $p_b$ in Algorithm \ref{alg1} simply return \emph{pairs} of indices. This is illustrated in Figure \ref{fig:alg1_multiple}.
Effectively, in this example we are sorting the variable $x_{k,m}:=x_k\otimes x_m$, which has state space of size $N^2$.
\begin{figure}
\footnotesize
\begin{flushright}
$\text{Step 1:} \left\lbrace\text{\parbox{0.77\columnwidth}{\includegraphics[width=0.6\columnwidth]{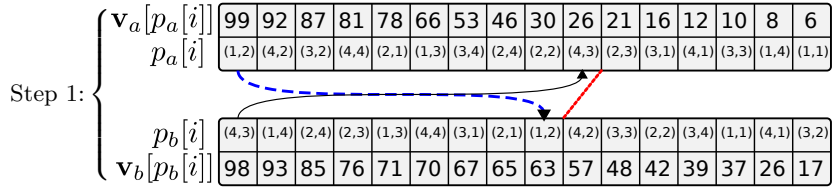}}}\right.$\parbox{0.01\columnwidth}{\ }\\
\end{flushright}
 \caption{The reasoning applied in Algorithm \ref{alg1} applies even when the elements of $p_a$ and $p_b$ are multidimensional indices.}
\label{fig:alg1_multiple}
\end{figure}

As the number of shared terms increases, so does the improvement to the running time. While \eq{eq:shared} would take $\Theta(N^4)$ to solve using Algorithm \ref{alg:brute}, it takes only $O(N^3)$ to solve using Algorithm \ref{alg:3clique} (more precisely, if Algorithm \ref{alg1} takes $O(f(N))$, then \eq{eq:shared} takes $O(N^2f(N^2))$, which we have mentioned is $O(N^2\sqrt{N^2}) = O(N^3)$). In general, if we have $S$ shared terms, then the running time is $O(N^2\sqrt{N^S})$, yielding a speed-up of $\Omega(\sqrt{N^S})$ over the na\"ive solution of Algorithm \ref{alg:brute}.

\subsection{An Extension to Higher-Order Cliques with Decompositions Into Three Groups}
\label{sec:morethanthree}

By similar reasoning, we can apply our algorithm to cases where there are more than three factors, in which the factors can be separated into three \emph{groups}. For example, consider the clique in Figure \ref{fig:split}(a), which we shall call $G$ (the entire graph is a clique, but for clarity we only  draw an edge when the corresponding nodes belong to a common factor). Each of the factors in this graph have been labeled using either differently colored edges (for factors of size larger than two) or dotted edges (for factors of size two), and the max-marginal we wish to compute has been labeled using colored nodes. We assume that it is possible to split this graph into three groups such that every factor is contained within a single group, along with the max-marginal we wish to compute (Figure \ref{fig:split}, (b)). If such a decomposition is not possible, we will have to resort to further extensions to be described in Section \ref{sec:ext2}.

\begin{figure}
\footnotesize
\begin{center}
\begin{tabular}{cc}
\parbox[c]{0.39\columnwidth}{\centering\includegraphics[scale=0.46,angle=-90]{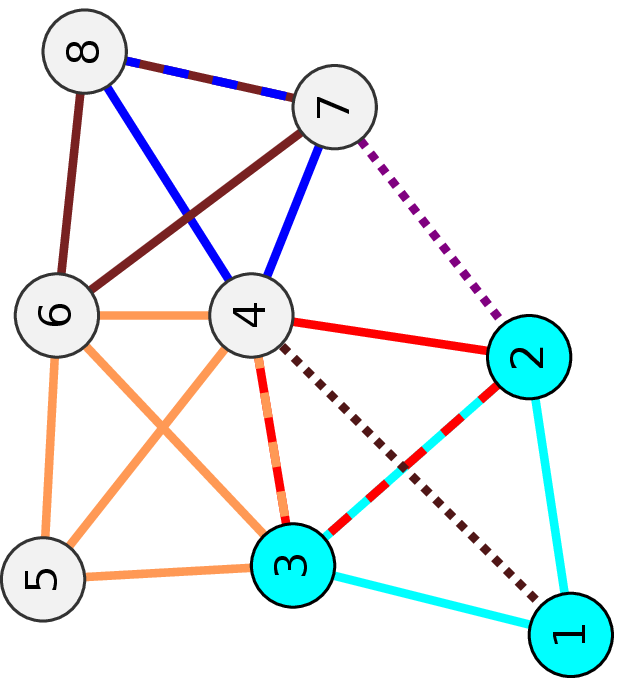}} & \parbox[c]{0.5\columnwidth}{\centering\includegraphics[scale=0.46,angle=-90]{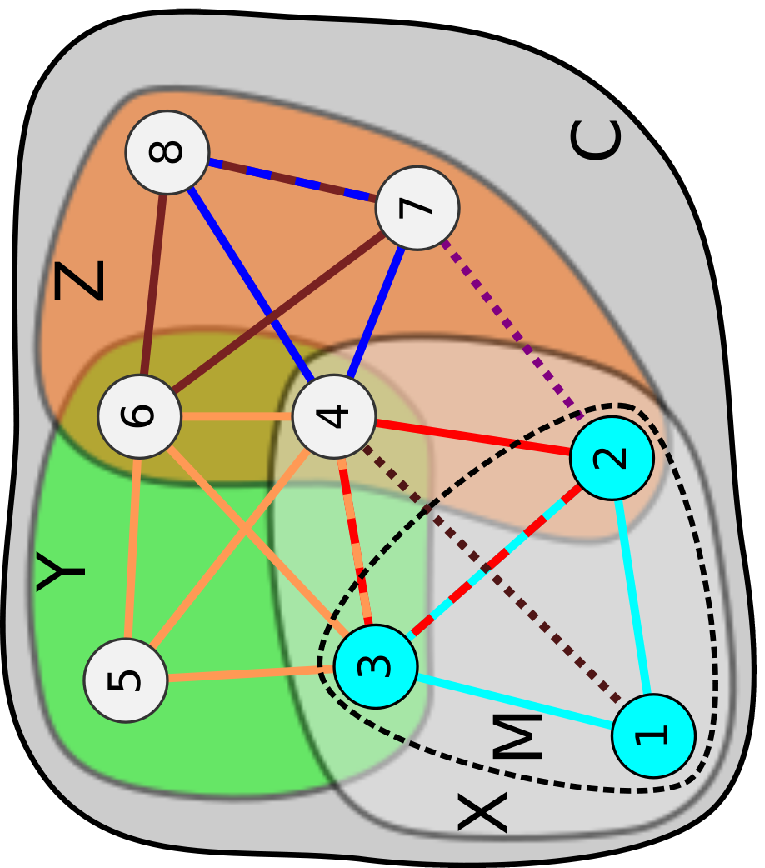}}\\
(a) & (b)\\
\end{tabular}
\end{center}

(a) We begin with a set of factors (indicated using colored lines), which are assumed to belong to some clique in our model; we wish to compute the max-marginal with respect to one of these factors (indicated using colored nodes); (b) The factors are split into three groups, such that every factor is entirely contained within one of them (Algorithm \ref{alg2}, line \ref{line:defs}).

\begin{center}
\begin{tabular}{ccc}
\parbox[c]{0.35\columnwidth}{\centering\includegraphics[scale=0.46]{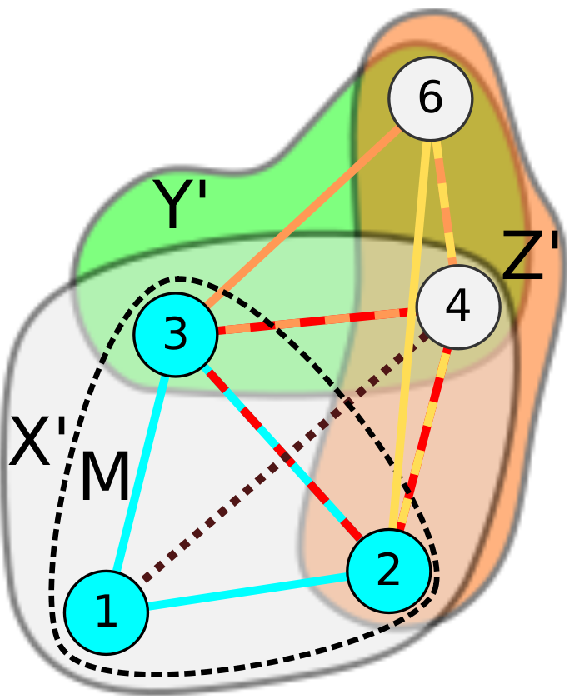}} & \parbox[c]{0.25\columnwidth}{\centering\includegraphics[scale=0.46]{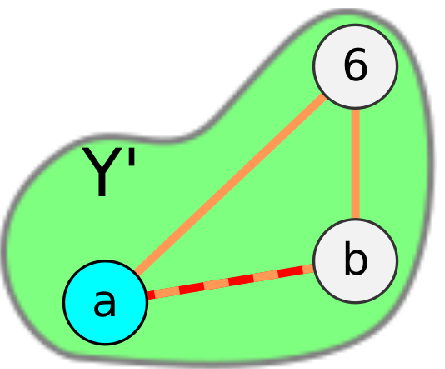}} & \parbox[c]{0.2\columnwidth}{\centering\includegraphics[scale=0.46]{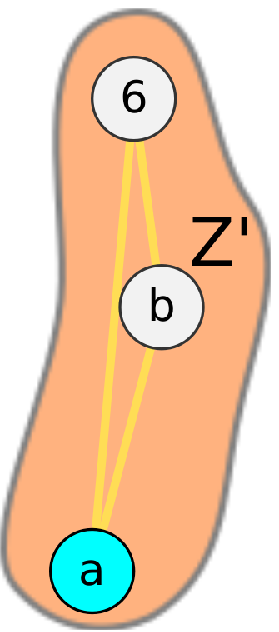}}\\
(c) & (d) & (e)\\
\end{tabular}
\end{center}

(c) Any nodes contained in only one of the groups are marginalized (Algorithm \ref{alg2}, lines \ref{line:marg1}, \ref{line:marg2}, and \ref{line:marg3}); the problem is now very similar to that described in Algorithm \ref{alg:3clique}, except that \emph{nodes} have been replaced by \emph{groups}; note that this essentially introduces maximal factors in $Y'$ and $Z'$; (d) For every value $(a,b) \in \text{dom}(x_3, x_4)$, $\Psi^Y(a, b, x_6)$ is sorted (Algorithm \ref{alg2}, lines \ref{line:for1start}--\ref{line:for1end}); (e) For every value $(a, b) \in \text{dom}(x_2, x_4)$, $\Psi^Z(a, b, x_6)$ is sorted (Algorithm \ref{alg2}, lines \ref{line:for2start}--\ref{line:for2end}).

\begin{center}
\begin{tabular}{cc}
\parbox[c]{0.4\columnwidth}{\centering\includegraphics[scale=0.46]{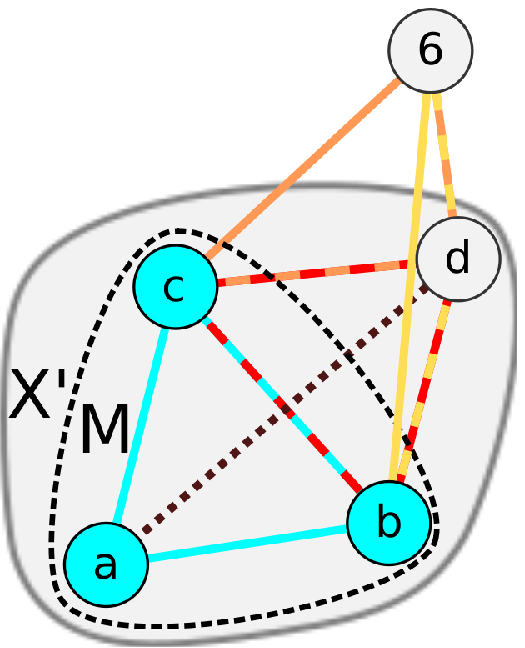}} & \parbox[c]{0.4\columnwidth}{\centering\includegraphics[scale=0.46]{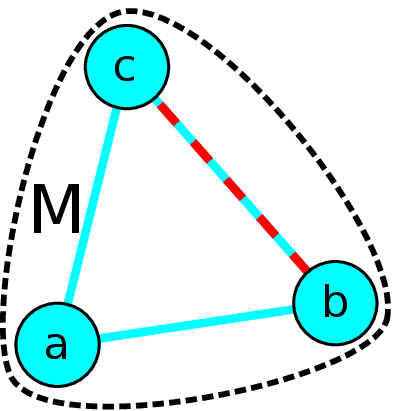}}\\
(f) & (g)\\
\end{tabular}
\end{center}

(f) For every $\mathbf{n} \in \text{dom}(X')$, we choose the best value of $x_6$ by Algorithm \ref{alg1} (Algorithm \ref{alg2}, lines \ref{line:for3start}--\ref{line:for3end}); (g) The result is marginalized with respect to $M$ (Algorithm \ref{alg2}, line \ref{line:marg}).

\caption{Algorithm \ref{alg2}, explained pictorially. In this case, the most computationally intensive step is the marginalization of $Z$ (in step (c)), which takes $\Theta(N^5)$. However, the algorithm can actually be applied \emph{recursively} to the group $Z$, resulting in an overall running time of $O(N^4\sqrt{N})$, for a max-marginal that would have taken $\Theta(N^8)$ to compute using the na\"ive solution of Algorithm \ref{alg:brute}.}
\label{fig:split}
\end{figure}

\begin{algorithm}
  \caption{Compute the max-marginal of $G$ with respect to $M$, where $G$ is split into three groups}
  \label{alg2}
\begin{algorithmic}[1]
\REQUIRE potentials $\Phi_G(\mathbf{x}) = \Phi_X(\mathbf{x}_X)\times\Phi_Y(\mathbf{x}_Y)\times\Phi_Z(\mathbf{x}_Z)$; each of the factors should be contained in exactly one of these terms, and we assume that $M\subseteq X$ (see Figure \ref{fig:split})

\STATE \textbf{Define:} $X' := ((Y \cup Z) \cap X) \cup M$; $Y' := (X \cup Z) \cap Y$; $Z' := (X \cup Y) \cap Z$ \COMMENT{$X'$ contains the variables in $X$ that are shared by at least one other group; alternately, the variables in $X \setminus X'$ appear only in $X$ (sim.~for $Y'$ and $Z'$)} \label{line:defs}

\STATE compute $\Psi^X(\mathbf{x}_{X'}) := \max_{X \setminus X'} \Phi_X(\mathbf{x}_X)$\ \COMMENT{we are marginalizing over those variables in $X$ that do not appear in any of the other groups (or in $M$); this takes $\Theta(N^S)$ if done by brute force (Algorithm \ref{alg:brute}), but may also be done by a recursive call to Algorithm \ref{alg2}}\label{line:marg1}

\STATE compute $\Psi^Y(\mathbf{x}_{Y'}) := \max_{Y \setminus Y'} \Phi_Y(\mathbf{x}_Y)$\label{line:marg2}

\STATE compute $\Psi^Z(\mathbf{x}_{Z'}) := \max_{Z \setminus Z'} \Phi_Z(\mathbf{x}_Z)$\label{line:marg3}

\FOR{$\mathbf{n} \in \text{dom}(X \cap Y)$}\label{line:for1start}

  \STATE compute $\mathbf{P}^Y[\mathbf{n}]$ by sorting $\Psi^Y(\mathbf{n}; \mathbf{x}_{Y' \setminus X})$\ \COMMENT{takes $\Theta(S_\setminus N^{S_\setminus}\log N)$; $\Psi^Y(\mathbf{n}; \mathbf{x}_{Y' \setminus X})$ is free over $\mathbf{x}_{Y' \setminus X}$, and is treated as an array by `flattening' it; $\mathbf{P}^Y[\mathbf{n}]$ contains the $|Y' \setminus X| = |(Y \cap Z) \setminus X|$-dimensional indices that sort it}

\ENDFOR\ \COMMENT{this loop takes $\Theta(S_\setminus N^S\log N)$}\label{line:for1end}

\FOR{$\mathbf{n} \in \text{dom}(X \cap Z)$}\label{line:for2start}

  \STATE compute $\mathbf{P}^Z[\mathbf{n}]$ by sorting $\Psi^Z(\mathbf{n}; \mathbf{x}_{Z' \setminus X})$

\ENDFOR\ \COMMENT{this loop takes $\Theta(S_\setminus N^S\log N)$}\label{line:for2end}

\FOR{$\mathbf n \in \text{dom}(X')$}\label{line:for3start}

  \STATE $\left(\mathbf{v}_a, \mathbf{v}_b\right) := \left(\Psi^Y(\mathbf{n}|_{Y'}; \mathbf{x}_{Y'\setminus X'}), \Psi^Z(\mathbf{n}|_{Z'}; \mathbf{x}_{Z'\setminus X'})\right)$\ \COMMENT{$\mathbf{n}|_{Y'}$ is the `restriction' of the vector $\mathbf{n}$ to those indices in $Y'$ (meaning that $\mathbf{n}|_{Y'} \in \text{dom}(X'\cap Y')$); hence $\Psi^Y(\mathbf{n}|_{Y'}; \mathbf{x}_{Y'\setminus X'})$ is free in $\mathbf{x}_{Y'\setminus X'}$, while $\mathbf{n}|_{Y'}$ is fixed}

  \STATE $\left(p_a, p_b\right) := \left(\mathbf{P}^Y[{\mathbf{n}|_{Y'}}], \mathbf{P}^Z[{\mathbf{n}|_{Z'}}]\right)$

  \STATE $\mathit{best} := \mathit{Algorithm\ref{alg1}}\left(\mathbf{v}_a, \mathbf{v}_b, p_a, p_b\right)$ \COMMENT{takes $O(\sqrt{S_\setminus})$}

  \STATE $m_X(\mathbf{n}) := \Psi^X(\mathbf{n}) \times \Psi^Y(best; \mathbf{n}|_{Y'}) \times \Psi^Z(\mathit{best}; \mathbf{n}|_{Z'})$

\ENDFOR\label{line:for3end}

\STATE $m_M(\mathbf{x}_M) := \mathit{Algorithm\ref{alg:brute}}(m_X, M)$ \COMMENT{i.e., we are using Algorithm \ref{alg:brute} to marginalize $m_X(\mathbf{x}_X)$ with respect to $M$; this takes $\Theta(N^S)$}\label{line:marg}

\end{algorithmic}
\end{algorithm}

Ideally, we would like these groups to have size $\simeq |G|/3$, though in the worst case they will have size no larger than $|G|-1$. We call these groups $X$, $Y$, $Z$, where $X$ is the group containing the max-marginal $M$ that we wish to compute. In order to simplify the analysis of this algorithm, we shall express the running time in terms of the size of the largest group, $S = \max(|X|, |Y|, |Z|)$, and the largest difference, $S_{\setminus} = \max(|Y \setminus X|, |Z \setminus X|)$. The max-marginal can be computed using Algorithm \ref{alg2}.

The running times shown in Algorithm \ref{alg2} are loose upper-bounds, given for the sake of expressing the running time in simple terms. More precise running times are given in Table \ref{tab:runtime}; any of the terms shown in Table \ref{tab:runtime} may be dominant. Some example graphs, and their resulting running times are shown in Figure \ref{fig:examples}.

\begin{table}[t]
  \caption{Detailed running time analysis of Algorithm \ref{alg2}; any of these terms may be asymptotically dominant}
  \label{tab:runtime}
\begin{center}
\begin{tabular}{l|l|l}
 \hline
 \textbf{Description} & \textbf{lines} & \textbf{time}\\
 \hline
 \hline
 Marginalization of $\Phi_X$, without recursion & \ref{line:marg1} & $\Theta(N^{|X|})$\\
 Marginalization of $\Phi_Y$ & \ref{line:marg2} & $\Theta(N^{|Y|})$\\
 Marginalization of $\Phi_Z$ & \ref{line:marg3} & $\Theta(N^{|Z|})$\\
 Sorting $\Phi_Y$ & \ref{line:for1start}--\ref{line:for1end} & $\Theta(|Y'\!\setminus\! X|N^{|Y'|}\log N)$\\
 Sorting $\Phi_Z$ & \ref{line:for2start}--\ref{line:for2end} & $\Theta(|Z'\!\setminus\! X|N^{|Z'|}\log N)$\\
 Running Algorithm \ref{alg1} on the sorted values & \ref{line:for3start}--\ref{line:for3end} & $O(N^{|X'|}\sqrt{N^{|(Y' \cap Z') \setminus X'|}})$\\
 \hline
\end{tabular}
\end{center}
\end{table}

\begin{figure*}
\begin{tabular}{p{0.14\textwidth}ccccc}
Graph: & \parbox[c]{0.115\textwidth}{\centering\includegraphics[scale=0.28]{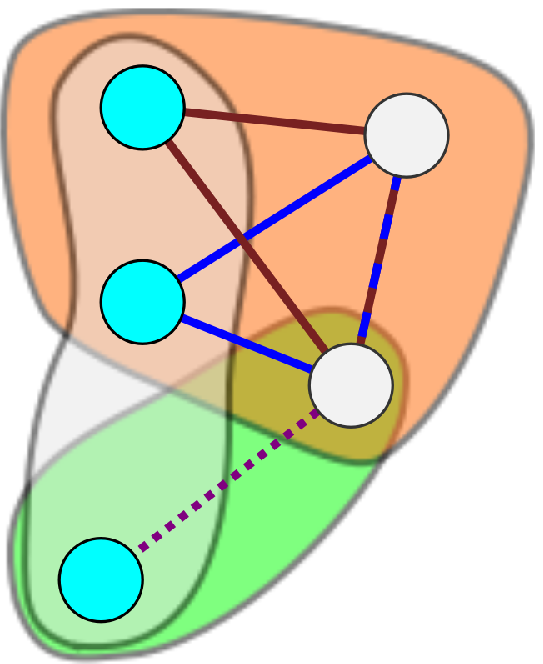}} &
         \parbox[c]{0.115\textwidth}{\centering\includegraphics[scale=0.28]{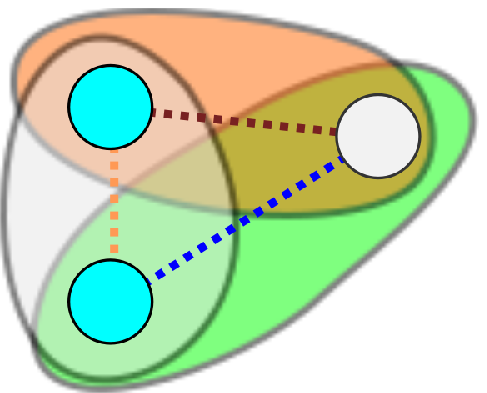}} &
         \parbox[c]{0.19\textwidth}{\hspace*{-0.025\textwidth}\centering\includegraphics[scale=0.28]{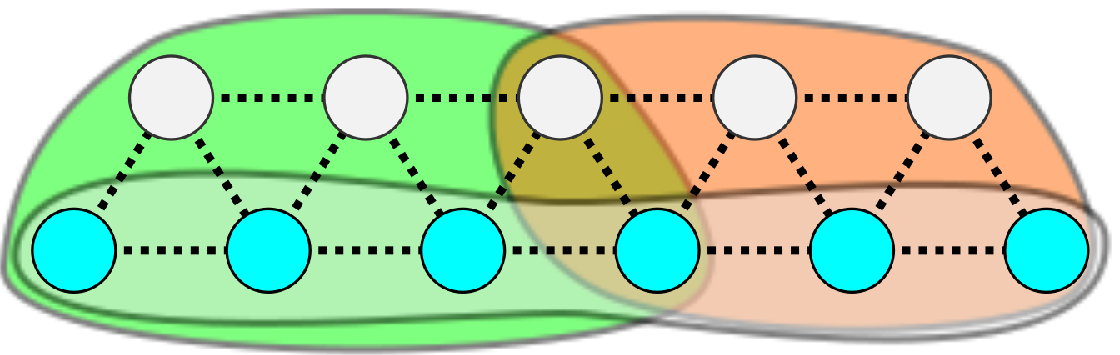}} &
         \parbox[c]{0.115\textwidth}{\centering\includegraphics[scale=0.28]{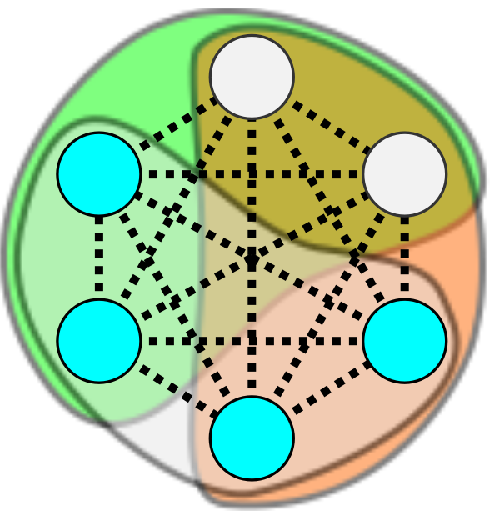}} &
         \parbox[c]{0.15\textwidth}{\centering$\lbrace$A complete graph $K_M$, with pairwise terms$\rbrace$}\\
       & (a) & (b) & (c) & (d) & (e)\\
Algorithm \ref{alg:brute}: & $\Theta(N^5)$              & $\Theta(N^3)$         & $\Theta(N^{11})$      & $\Theta(N^6)$ & $\Theta(N^M)$\\
Algorithm \ref{alg2}:      & $O(N^3\sqrt{N})$        & $O(N^2\sqrt{N})$ & $O(N^6\sqrt{N})$ & $O(N^5)$ & $O(N^{5M/6})$\\
Speed-up:                   & $\Omega(N\sqrt{N})$ & $\Omega(\sqrt{N})$    & $\Omega(N^4\sqrt{N})$ & $\Omega(N)$   & $\Omega(N^{M/6})$
\end{tabular}
 \caption{Some example graphs whose max-marginals are to be computed with respect to the colored nodes, using the three regions shown. Factors are indicated using differently colored edges, while dotted edges always indicate pairwise factors.
(a) is the region $Z$ from Figure \ref{fig:split} (recursion is applied \emph{again} to achieve this result); (b) is the graph used to motivate Algorithm \ref{alg:3clique}; (c) shows a query in a graph with regular structure; (d) shows a complete graph with six nodes; (e) generalizes this to a clique with $M$ nodes.}

\label{fig:examples}
\end{figure*}

\subsubsection{Applying Algorithm \ref{alg2} Recursively}

The marginalization steps of Algorithm \ref{alg2} (Lines \ref{line:marg1}, \ref{line:marg2}, and \ref{line:marg3}) may further decompose into smaller groups, in which case Algorithm \ref{alg2} can be applied recursively. For instance, the graph in Figure \ref{fig:examples}(a) represents the marginalization step that is to be performed in Figure \ref{fig:split}(c) (Algorithm \ref{alg2}, Line \ref{line:marg3}). Since this marginalization step is the asymptotically dominant step in the algorithm, applying Algorithm \ref{alg2} recursively lowers the asymptotic complexity.

Another straightforward example of applying recursion in Algorithm \ref{alg2} is shown in Figure \ref{fig:recursion}, in which a ring-structured model is marginalized with respect to two of its nodes. Doing so takes $O(MN^2\sqrt{N})$; in contrast, solving the same problem using the junction-tree algorithm (by triangulating the graph) would take $\Theta(MN^3)$. Loopy belief-propagation takes $\Theta(MN^2)$ per iteration, meaning that our algorithm will be faster if the number of iterations is $\Omega(\sqrt{N})$. Naturally, Algorithm \ref{alg:3clique} could be applied directly to the triangulated graph, which would again take $O(MN^2\sqrt{N})$.

\begin{figure*}
\begin{center}
\begin{tabular}{cc}
\parbox[c]{0.5\textwidth}{\centering\includegraphics[scale=0.5]{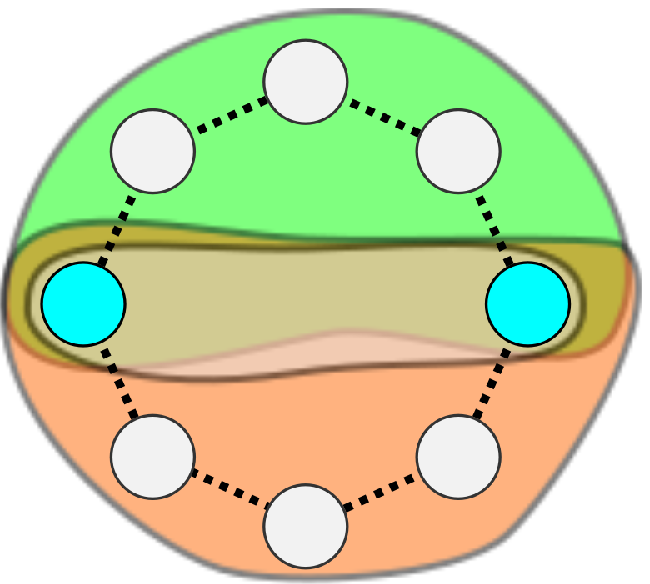}} & $O(N^2)$ \\
 & $+$\\
\parbox[c]{0.5\textwidth}{\centering\parbox[c]{0.225\columnwidth}{\centering\includegraphics[scale=0.5]{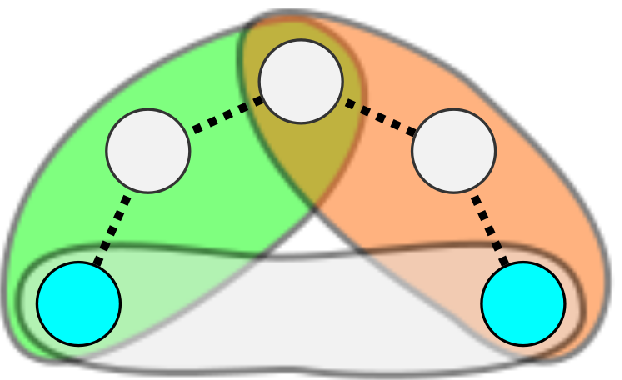}}\parbox[c]{0.225\columnwidth}{\centering\includegraphics[scale=0.5,angle=180]{figures_fast/recurs/recurs2}}} & $O(2N^2\sqrt{N})$\\
 & $+$\\
\parbox[c]{0.5\textwidth}{\centering\parbox[c]{0.125\columnwidth}{\centering\includegraphics[scale=0.5]{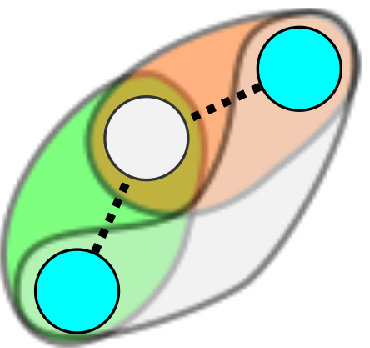}}\parbox[c]{0.125\columnwidth}{\centering\includegraphics[scale=0.5,angle=90]{figures_fast/recurs/recurs3}}\parbox[c]{0.125\columnwidth}{\centering\includegraphics[scale=0.5,angle=180]{figures_fast/recurs/recurs3}}\parbox[c]{0.125\columnwidth}{\centering\includegraphics[scale=0.5,angle=270]{figures_fast/recurs/recurs3}}} & $O(4N^2\sqrt{N})$ (by Algorithm \ref{alg:3clique})\\
\end{tabular}
 \end{center}
\caption{In the above example, lines \ref{line:marg1}--\ref{line:marg3} of Algorithm \ref{alg2} are applied recursively, achieving a total running time of $O(MN^2\sqrt{N})$ for a loop with $M$ nodes (our algorithm achieves the same running time in the triangulated graph).}
\label{fig:recursion}
\end{figure*}

\subsection{A General Extension to Higher-Order Cliques}
\label{sec:ext2}

Naturally, there are cases for which a decomposition into three terms is not possible, such as
\begin{multline}
m_{i,j,k}(x_i,x_j,x_k) =\max_{x_m} \Phi_{i,j,k}(x_i, x_j, x_k)\times
\Phi_{i,j,m}(x_i, x_j, x_m)\times\\
\Phi_{i,k,m}(x_i, x_k, x_m)\times\Phi_{j,k,m}(x_j, x_k, x_m)
\label{eq:decomp4}
\end{multline}
(i.e., a clique of size four with third-order factors). However, if the model contains factors of size $K$, it must always be possible to split it into $K+1$ groups (e.g.~four in the case of \eq{eq:decomp4}).

Our optimizations can easily be applied in these cases simply by adapting Algorithm \ref{alg1} to solve problems of the form
\begin{equation}
 \hat{i} = \argmax_{i\in\lbrace 1 \ldots N \rbrace} \left\lbrace \mathbf{v}_1[i] \times \mathbf{v}_2[i] \times \cdots \times \mathbf{v}_K[i] \right\rbrace.
\label{eq:hatK}
\end{equation}
Pseudocode for this extension is presented in Algorithm \ref{alg:ext}. Note carefully the use of the variable $\mathit{read}$: we are storing which indices have been read to avoid re-reading them; this guarantees that our Algorithm is never asymptotically worse than the na\"ive solution. Figure \ref{fig:alg3} demonstrates how such an algorithm behaves in practice. Again, we shall discuss the running time of this extension in Appendix \ref{sec:analysis}. For the moment, we state the following theorem:

\begin{theorem}
 Algorithm \ref{alg:ext} generalizes Algorithm \ref{alg1} to $K$ lists with an expected running time of $O(KN^\frac{K-1}{K})$, yielding a speed-up of at least $\Omega(\frac{1}{K}N^\frac{1}{K})$ in cliques containing $K$-ary factors. It is never worse than the na\"ive solution, meaning that it takes $O(\min(N, KN^\frac{K-1}{K}))$.
\label{the:algext}
\end{theorem}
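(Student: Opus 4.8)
The plan is to establish the statement in three stages — correctness, the expected-time bound, and the unconditional (worst-case) bound — and then to translate the per-call estimate into the claimed clique speed-up. \textbf{Correctness.} First I would prove the $K$-list analogue of Lemma~\ref{main_lemma}. Maintain a single pointer $\mathit{start}$ which, in each list $j\in\{1,\dots,K\}$, has revealed the top $\mathit{start}$ entries $\mathbf v_j[p_j[1]],\dots,\mathbf v_j[p_j[\mathit{start}]]$, together with endpoints $\mathit{end}_j$ recording the deepest position in list $j$ of any index compared so far. The claim is that once $\mathit{start}\ge\mathit{end}_j$ for every $j$, the best product seen is optimal. This follows by applying condition~\eqref{eq:condition} inductively across the $K$ factors: an index $i$ lying strictly below position $\mathit{start}$ in \emph{every} list has each coordinate $\mathbf v_j[i]$ no larger than the corresponding coordinate of some already-compared index, so $\prod_j\mathbf v_j[i]$ cannot exceed the running maximum; and every index lying in the top-$\mathit{start}$ block of all $K$ lists simultaneously is examined explicitly. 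Ties are broken by fixing one consistent order, exactly as for $K=2$.

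\textbf{Expected running time (the crux).} Under the stated independence assumption I would model $p_1,\dots,p_K$ as independent uniform random permutations of $\{1,\dots,N\}$, so that after $t$ iterations pointer $j$ has revealed a uniform random $t$-subset $S_j\subseteq\{1,\dots,N\}$. The loop is guaranteed to have stopped within $O(t)$ iterations of the first time $\bigcap_{j=1}^K S_j\neq\emptyset$: a common index is a candidate whose position in each list is at most $t$, forcing $\mathit{end}_j\le t$ for all $j$. Hence the stopping time $T$ is $O(T^\star)$ with $T^\star=\min\{t:\bigcap_j S_j\neq\emptyset\}$ — a generalized-birthday first-collision time. Since a fixed index lies in all $K$ sets with probability $(t/N)^K$, one has $\mathbb E\,\lvert\bigcap_j S_j\rvert = N(t/N)^K = t^K/N^{K-1}$, which passes $1$ at $t\asymp N^{(K-1)/K}$; a second-moment (Chebyshev/Janson-type) estimate then bounds $\Pr[T^\star>t]$ for $t$ above this threshold and yields $\mathbb E[T^\star]=\sum_{t\ge1}\Pr[T^\star>t]=O(N^{(K-1)/K})$. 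Each iteration advances all $K$ pointers and does $O(K)$ work (the $K$-fold products being $O(1)$ each by the paper's standing assumption), so the expected cost is $O(K\,\mathbb E[T])=O(KN^{(K-1)/K})$. I expect this to be the main obstacle; the delicate points, which I would relegate to Appendix~\ref{sec:analysis}, are (i) ruling out that a deeper but better index discovered late repeatedly resets some $\mathit{end}_j$ and prolongs the loop beyond $O(T^\star)$, and (ii) controlling the variance of $\lvert\bigcap_j S_j\rvert$ when $K\ge3$.

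\textbf{Worst case and clique speed-up.} The array $\mathit{read}$ records every index already inspected, so no index is ever processed twice and the loop makes $O(N)$ productive iterations regardless of the inputs; combined with the previous bound this gives the unconditional $O(\min(N,KN^{(K-1)/K}))$. Finally, a clique with $K$-ary factors can always be split into $K+1$ groups, and the analogue of Algorithms~\ref{alg:3clique}/\ref{alg2} then invokes Algorithm~\ref{alg:ext} once per assignment of the interface variables; each such call replaces a $\Theta(N)$ brute-force maximization over one (flattened) eliminated block by an $O(KN^{(K-1)/K})$ computation, for a per-call saving of $\Omega(\tfrac1K N^{1/K})$ — hence a speed-up of \emph{at least} $\Omega(\tfrac1K N^{1/K})$, since the sorting passes and the remaining marginalizations may also contribute to the total cost.
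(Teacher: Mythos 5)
Your proposal is correct, and it reaches the bound by a genuinely different route at the one step where the real work happens. Both arguments reduce the running time to the same combinatorial quantity: the side length of the smallest corner hypercube containing a point of the permutation constellation, which is exactly your first-intersection time $T^\star=\min\{t:\bigcap_j S_j\neq\emptyset\}$ for the top-$t$ index sets. The paper (Appendix \ref{sec:analysis}) bounds its expectation by a coupling argument: it fixes a strip of width $f(N)$, replaces the permutation(s) by sampling \emph{with} replacement so that the hitting time becomes geometric as in \eq{eq:replace}, sums the geometric series to $(N/f)^{K-1}$, and balances $f(N,K)=N^{(K-1)/K}$. You instead compute the first moment $\mathbb{E}\lvert\bigcap_j S_j\rvert=t^K/N^{K-1}$ and invoke the second-moment method; this does go through, because the indicators $\mathbf{1}[i\in\bigcap_j S_j]$ are negatively correlated (each $S_j$ is a sample without replacement, so $\Pr[i,i'\in S_j]\le (t/N)^2$), giving $\mathrm{Var}(X_t)\le\mathbb{E}[X_t]$, hence $\Pr[T^\star>t]\le N^{K-1}/t^K$ and $\mathbb{E}[T^\star]\le t_0+\sum_{t>t_0}N^{K-1}t^{-K}=O(N^{(K-1)/K})$ with $t_0=N^{(K-1)/K}$ — so your delicate point (ii) resolves cleanly and needs no Janson-type machinery. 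Your route is arguably the more rigorous one for $K\ge 3$: the paper's domination step for multiple permutations (arguing that a permutation contains ``more'' small values than a random sample) compares expectations rather than distributions and is stated informally, whereas negative correlation is immediate; what the paper's route buys in exchange is the explicit closed-form upper bound \eq{eq:bound} that it plots against experiment. Your caveat (i) — that the literal $\mathit{end}_j$ updates of Algorithm \ref{alg:ext} track the idealized stopping time $T^\star$ only up to the bookkeeping of which index triggered the update — is a gap the paper shares rather than closes (it simply asserts that the algorithm stops at $M$), so you are not missing anything the paper supplies. The worst-case clause via the $\mathit{read}$ array and the translation of the per-call saving into the $\Omega(\frac{1}{K}N^{1/K})$ clique speed-up are handled identically in both.
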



\begin{algorithm}
  \caption{Find $i$ such that $\prod_{k=1}^K \mathbf{v}_k[i]$ is maximized}
  \label{alg:ext}
\begin{algorithmic}[1]
\REQUIRE $K$ vectors $\mathbf{v}_1 \ldots \mathbf{v}_K$; permutation functions $p_1\ldots p_K$ that sort them in decreasing order; a vector $\mathit{read}$ indicating which indices have been read, and a unique value $T \notin \mathit{read}$ \COMMENT{$\mathit{read}$ is essentially a boolean array indicating which indices have been read; since \emph{creating} this array is an $O(N)$ operation, we create it externally, and reuse it $O(N)$ times; setting $\mathit{read}[i] = T$ indicates that a particular index has been read; we use a different value of $T$ for each call to this function so that $\mathit{read}$ can be reused without having to be reinitialized}
\STATE \textbf{Initialize:} $\mathit{start} := 1$,\\
$\mathit{max} := \max_{p \in \lbrace p_1 \ldots p_K \rbrace} \prod_{k=1}^K \mathbf{v}_k[p[1]]$,\\
$\mathit{best} := \argmax_{p \in \lbrace p_1 \ldots p_K \rbrace} \prod_{k=1}^K \mathbf{v}_k[p[1]]$
\FOR{$k \in 1 \ldots K$}
\STATE $\mathit{end}_k := \max_{q \in \lbrace p_1 \ldots p_K \rbrace} p_k^{-1}[q[1]]$
\ENDFOR
\STATE $\mathit{read}[p[1]] = T$
\WHILE{$\mathit{start} < \max \lbrace \mathit{end}_1 \ldots \mathit{end}_K \rbrace$}
\STATE $\mathit{start} := \mathit{start} + 1$
\IF{$\mathit{read}[p[start]] := T$}
\STATE \textbf{continue}
\ENDIF
\STATE $\mathit{read}[p[start]] := T$
\STATE $\mathit{m} := \max_{p \in \lbrace p_1 \ldots p_K \rbrace} \prod_{k=1}^K \mathbf{v}_k[p[\mathit{start}]]$
\STATE $\mathit{b} := \argmax_{p \in \lbrace p_1 \ldots p_K \rbrace} \prod_{k=1}^K \mathbf{v}_k[p[\mathit{start}]]$

\IF {$\mathit{m} > \mathit{max}$}
\STATE $\mathit{best} := \mathit{b}$
\STATE $\mathit{max} := \mathit{m}$
\ENDIF

\FOR{$k \in \lbrace 1 \ldots K \rbrace$}
\STATE $\mathit{e}_k := \max_{q \in \lbrace p_1 \ldots p_K \rbrace} p_k^{-1}[q[\mathit{start}]]$
\ENDFOR
\FOR{$k \in \lbrace 1 \ldots K \rbrace$}
\STATE $\mathit{end}_k := \min(\mathit{e}_k, \mathit{end}_k)$
\ENDFOR
\ENDWHILE\ \COMMENT{see Appendix \ref{sec:analysis} for running times}
\RETURN $\mathit{best}$
\end{algorithmic}
\end{algorithm}

\begin{figure}
 \footnotesize
\begin{flushright}
$\text{Step 1:} \left\lbrace\text{\parbox{0.77\columnwidth}{\includegraphics[angle=-90,width=0.6\columnwidth]{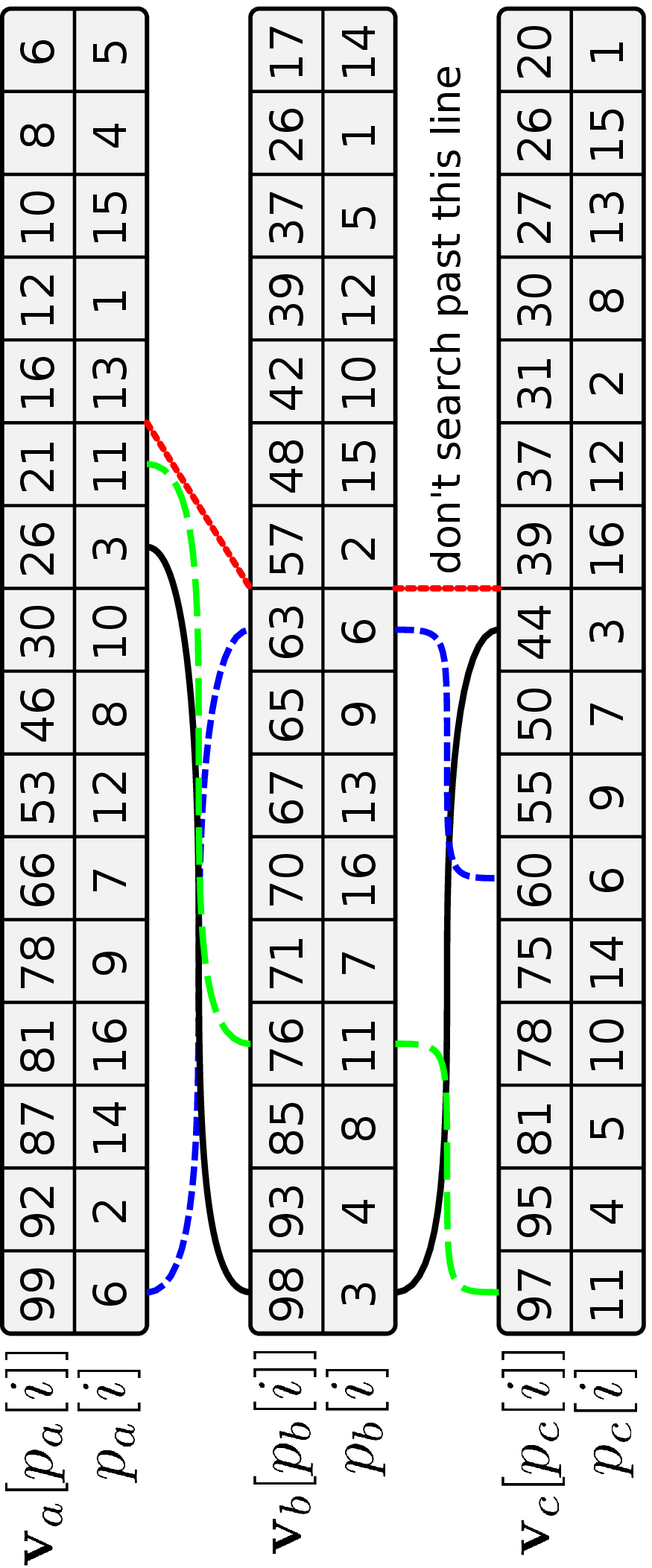}}}\right.$\parbox{0.01\columnwidth}{\ }\\
\end{flushright}
\caption{Algorithm \ref{alg1} can easily be extended to cases including more than two sequences.}
\label{fig:alg3}
\end{figure}

Using Algorithm \ref{alg:ext}, we can similarly extend Algorithm \ref{alg2} to allow for any number of \emph{groups} (pseudocode is not shown; all statements about the groups $Y$ and $Z$ simply become statements about $K$ groups $\left\lbrace G_1 \ldots G_K \right\rbrace$, and calls to Algorithm \ref{alg1} become calls to Algorithm \ref{alg:ext}). The one remaining case that has not been considered is when the sequences $\mathbf{v}_1 \cdots \mathbf{v}_K$ are functions of different (but overlapping) variables; na\"ively, we can create a new variable whose domain is the product space of all of the overlapping terms, and still achieve the performance improvement guaranteed by Theorem \ref{the:algext}; in some cases, better results can again be obtained by applying recursion, as in Figure \ref{fig:examples}.

As a final comment we note that we have not provided an algorithm for choosing \emph{how} to split the variables of a model into $(K+1)$-groups. We note even if we split the groups in a na\"ive way, we are guaranteed to get \emph{at least} the performance improvement guaranteed by Theorem \ref{the:algext}, though more `intelligent' splits may further improve the performance.
Furthermore, in all of the applications we have studied, $K$ is sufficiently small that it is inexpensive to consider all possible splits by brute-force.

\section{Exploiting `Data Independence' in Latent Factors}
\label{sec:latent}

While \eq{eq:map} gave the general form of MAP-inference in a graphical model, it will often be more convenient to express our objective function as being conditioned upon some \emph{observation}, $\mathbf{y}$. Thus inference consists of solving an optimization problem of the form
\begin{equation}
 \mathbf{\hat{x}}(\mathbf{y}) = \argmax_{\mathbf{x}} \prod_{C \in \mathcal C} \Phi_C(\mathbf{x}_C | \mathbf{y}_C).
\label{eq:map_obs}
\end{equation}
When our objective function is written in this way, further factorization is often possible, yielding an expression of the form
\begin{equation}
 \mathbf{\hat{x}}(\mathbf{y}) = \argmax_{\mathbf{x}} \underbrace{\prod_{F \in \mathcal F} \Phi_F(\mathbf{x}_F | \mathbf{y}_F)}_{\text{data dependent}} \times \!\!\underbrace{\prod_{C \in \mathcal C} \Phi_C(\mathbf{x}_C)}_{\text{data independent}}\!\!,
\label{eq:alternate}
\end{equation}
where each $F \in \mathcal F$ is a subset of some $C \in \mathcal C$. We shall say that those factors that do not depend on the observation are `data independent'. 

By far the most common instance of this type of model has `data dependent' factors consisting of a single latent variable, and conditioned upon a single observation, and `data independent' factors consisting of a pair of latent variables. This was precisely the class of models depicted at the beginning of our paper in Figure \ref{fig:examps}, whose objective function takes the form
\begin{equation}
 \mathbf{\hat{x}}(\mathbf{y}) = \argmax_{\mathbf{x}} \underbrace{\prod_{i \in \mathcal N} \! \Phi_i(x_i | y_i)}_{\text{node potential}} \times \underbrace{\prod_{(i,j) \in \mathcal E} \Phi_{i,j}(x_i, x_j)}_{\text{edge potential}}
\label{eq:pairwise}
\end{equation}
(where $\mathcal N$ and $\mathcal E$ are the set of nodes and edges in our graphical model). As in the Section \ref{sec:optimizing}, we shall concern ourselves with this version of the model, and explain only briefly how it can be applied with larger factors, as in Section \ref{sec:morethanthree}.

Note that in \eq{eq:pairwise} we are no longer concerned solely with exact inference via the junction-tree algorithm. In many models, such as grids and rings, \eq{eq:pairwise} shall be solved approximately by means of either loopy belief-propagation, or inference in a factor graph.

Given the decomposition of \eq{eq:pairwise}, message-passing now takes the form
\begin{equation}
 m_{A\rightarrow B}(q) = \Phi_i(q) \times \max_{y_j} \underbrace{\Phi_j(y_j)}_{\mathbf{v}_a} \times \underbrace{\Phi_{i,j}(q,y_j)}_{\mathbf{v}_b}
\label{eq:pairwisemessage}
\end{equation}
(where $A = (i,j)$ and $B = (i,k)$). Just as we made the comparison between \eq{eq:max3} and matrix multiplication, we can see \eq{eq:pairwisemessage} as being related to the multiplication of a matrix ($\Phi_{i,j}$) with a vector ($\Phi_j$), again with summation replaced by maximization. Given the results we have already shown, it is trivial to solve \eq{eq:pairwisemessage} in $O(N\sqrt{N})$ \emph{if we know the permutations that sort $\Phi_j$, and the rows of $\Phi_{i,j}$}. The algorithm for doing so is shown in Algorithm \ref{alg:message}. The difficultly we face in this instance is that sorting the rows of $\Phi_{i,j}$ takes $\Theta(N^2\log N)$, i.e., longer than Algorithm \ref{alg:message} itself.

This problem is circumvented due to the following simple observation: since $\Phi_{i,j}(x_i, x_j)$ consists only of latent variables (and not upon the observation), this `sorting' step can take place \emph{offline}, i.e., before the `data' has been observed.

Two further observations mean that even this offline cost can often be avoided. Firstly, many models have a `homogeneous' prior, i.e., the same prior is shared amongst every edge (or clique) of the model. In such cases, only a single `copy' of the prior needs to be sorted, meaning that in any model containing $\Omega(\log N)$ edges, speed improvements can be gained over the naive implementation. Secondly, where an iterative algorithm (such as loopy belief-propagation) is to be used, the sorting step need only take place prior to the \emph{first} iteration; if $\Omega(\log N)$ iterations of belief propagation are to be performed (or indeed, if the number of edges multiplied by the number of iterations is $\Omega(\log N)$), we shall again gain speed improvements even when the sorting step is done online.

In fact, the second of these conditions obviates the need for data independence altogether. In other words, in \emph{any} pairwise model in which $\Omega(\log N)$ iterations of belief propagation are to be performed, \emph{the pairwise terms need to be sorted only during the first iteration}. Thus these improvements apply to those models in Figure \ref{fig:examples_intro}, so long as the number of iterations is $\Omega(\log N)$.

\begin{algorithm}[htb]
 \caption{Solve \eq{eq:pairwisemessage} using Algorithm \ref{alg1}}
 \label{alg:message}
\begin{algorithmic}[1]
 \REQUIRE a potential $\Phi_{i,j}(a,b) \times \Phi_{i}(a | y_i) \times \Phi_{j}(b | y_j)$ whose max-marginal $m_{i}(x_i)$ we wish to compute, and a set of permutation functions $\mathbf{P}$ such that $\mathbf{P}[i]$ sorts the $i^{\textit{th}}$ row of $\Phi_{i,j}$ (in decreasing order).
\STATE compute the permutation function $p_a$ by sorting $\Psi_j$ \COMMENT{takes $\Theta(N\log N)$}
\FOR{$q \in \lbrace 1\ldots N \rbrace$}
\STATE $\left(\mathbf{v}_a, \mathbf{v}_b\right) := \left(\Psi_j, \Phi_{i,j}(q,x_j|y_i,y_j) \right)$
\STATE $\mathit{best} := \mathit{Algorithm\ref{alg1}}\left(\mathbf{v}_a, \mathbf{v}_b, p_a, \mathbf{P}[q] \right)$ \COMMENT{$O(\sqrt{N})$}
\STATE $m_{A\rightarrow B}(q) := \Phi_i(q) \times \Phi_j(\mathit{best}) \times \Phi_{i,j}(q, \mathit{best} | y_i, y_j)$
\ENDFOR\ \COMMENT{\emph{expected-case} $O(N\sqrt{N})$}
\RETURN $m_{A\rightarrow B}$
\end{algorithmic}
\end{algorithm}

\subsection{Extension to Higher-Order Cliques}

Just as in Section \ref{sec:morethanthree}, we can extend Algorithm \ref{alg:message} to factors of any size, so long as the purely latent cliques contain more latent variables than those cliques that depend upon the observation. The analysis for this type of model is almost exactly the same as that presented in Section \ref{sec:morethanthree}, except that any terms consisting of purely latent variables are processed offline.

As we mentioned in \ref{sec:morethanthree}, if a model contains (non-maximal) factors of size $K$, we will gain a speed-up of $\Omega(\frac{1}{K}N^{\frac{1}{K}})$. If in addition there is a factor (either maximal or non-maximal) consisting of purely latent variables, we can still obtain a speed-up of $\Omega(\frac{1}{K+1}N^{\frac{1}{K+1}})$, since this factor merely contributes an additional term to \eq{eq:hatK}. Thus when our `data-dependent' terms contain only a single latent variable (i.e., $K=1$), we gain a speed-up of $\Omega(\sqrt{N})$, as in Algorithm \ref{alg:message}.

\section{Performance Improvements in Existing Applications}
\label{sec:improvements}

Our results are immediately compatible with several applications that rely on inference in graphical models. As we have mentioned, our results apply to \emph{any model whose cliques decompose into lower-order terms}.


Often, potentials are defined only on \emph{nodes} and \emph{edges} of a model. A $D^{\text{th}}$-order Markov model has a tree-width of $D$, despite often containing only pairwise relationships. Similarly `skip-chain CRFs' \citep{skipchain,galley06}, and junction-trees used in SLAM applications \citep{Paskin2003} often contain only pairwise terms, and may have low tree width under reasonable conditions. In each case, if the tree-width is $D$, Algorithm \ref{alg2} takes $O(MN^{D}\sqrt{N})$ (for a model with $M$ nodes and $N$ states per node), yielding a speed-up of $\Omega(\sqrt{N})$.



Models for shape matching and pose reconstruction often exhibit similar properties \citep{Tres09,Donner07sparsemrf,sigal06}. In each case, third-order cliques factorize into second order terms; hence we can apply Algorithm \ref{alg:3clique} to achieve a speed-up of $\Omega(\sqrt{N})$.

Another similar model for shape matching is that of \citet{pedro_shapes}; this model again contains third-order cliques, though it includes a `geometric' term constraining all three variables. Here, the third-order term is \emph{independent of the input data}, meaning that each of its rows can be sorted \emph{offline}, as described in Section \ref{sec:latent}. In this case, those factors that depend upon the observation are pairwise, meaning that we achieve a speed-up of $\Omega(N^{\frac{1}{3}})$. Further applications of this type shall be explored in Section \ref{sec:latent_exp}.

In \citet{lbpmatch}, deformable shape-matching is solved approximately using loopy belief-propagation. Their model has only second-order cliques, meaning that inference takes $\Theta(MN^2)$ \emph{per iteration}. Although we cannot improve upon this result, we note that we can typically do \emph{exact} inference in a single iteration in $O(MN^2\sqrt{N})$; thus our model has the same running time as $O(\sqrt{N})$ iterations of the original version. This result applies to all second-order models containing a single loop \citep{Weiss00}.

In \citet{McACaeBar08}, a model is presented for graph-matching using loopy belief-propagation; the maximal cliques for $D$-dimensional matching have size $(D+1)$, meaning that inference takes $\Theta(MN^{D+1})$ \emph{per iteration} (it is shown to converge to the correct solution); we improve this to $O(MN^D\sqrt{N})$.

\emph{Interval graphs} can be used to model resource allocation problems \citep{interval}; each node encodes a request, and overlapping requests form edges.  Maximal cliques grow with the number of overlapping requests, though the constraints are only pairwise, meaning that we again achieve an $\Omega(\sqrt{N})$ improvement.

Belief-propagation can be used to solve \emph{LP-relaxations} in pairwise graphical models. In \citet{sontag_lp}, LP-relaxations are computed for pairwise models by constructing several third-order `clusters', which compute pairwise messages for each of their edges. Again, an $\Omega(\sqrt{N})$ improvement is achieved.

Finally, in Section \ref{sec:latent_exp} we shall explore a variety of applications in which we have pairwise models of the form shown in \eq{eq:pairwise}. In all of these cases, we see an (expected) reduction of a $\Theta(MN^2)$ message-passing algorithm to $O(MN\sqrt{N})$.



Table \ref{tab:improvements} summarizes these results. Reported running times reflect the \emph{expected case}. Note that we are assuming that \emph{max-product belief-propagation is being used in a discrete model}; some of the referenced articles may use different variants of the algorithm (e.g.~Gaussian models, or approximate inference schemes).
We believe that our improvements may revive the exact, discrete version as a tractable option in these cases.

\begin{table*}
 \caption{Some existing work to which our results can be immediately applied ($M$ nodes, $N$ states per node, cliques of size $|C|$. `iter.' denotes that the algorithm is iterative).}
 \begin{center}
\footnotesize
  \begin{tabular}{|l|l|l|l|}
 \hline
   \normalsize{\textbf{Reference}} & \normalsize{\textbf{description}} & \normalsize{\textbf{running time}} & \normalsize{\textbf{our method}}\\
 \hline
 \hline
  \hspace{-4pt} \citet{McACaeBar08} & $D$-d graph-matching & $\Theta(MN^{D+1})$ (iter.) & $O(MN^D\sqrt{N})$ (iter.)\\
  \hspace{-4pt} \citet{skipchain} & Width-$D$ skip-chain & $O(MN^{D})$ & $O(MN^{D-1}\sqrt{N})$\\
  \hspace{-4pt} \citet{galley06} & Width-3 skip-chain & $\Theta(MN^3)$ & $O(MN^2\sqrt{N})$\\
  \hspace{-4pt} \citet{Paskin2003} (discrete case) & SLAM, width $D$ & $O(MN^D)$ & $O(MN^{D-1}\sqrt{N})$\\
  \hspace{-4pt} \citet{Tres09} & Deformable matching & $\Theta(MN^3)$ & $O(MN^2\sqrt{N})$\\
  \hspace{-4pt} \citet{lbpmatch} & Deformable matching & $\Theta(MN^2)$ (iter.) & $O(MN^2\sqrt{N})$\\
  \hspace{-4pt} \citet{sigal06} & Pose reconstruction & $\Theta(MN^3)$ & $O(MN^2\sqrt{N})$\\
  \hspace{-4pt} \citet{pedro_shapes} & Deformable matching & $\Theta(MN^3)$ & $\Theta(MN^{\frac{8}{3}})$ (online)\\
  \hspace{-4pt} \citet{interval} & Width-$D$ interval graph & $O(MN^{D+1})$ & $O(MN^{D}\sqrt{N})$\\
  \hspace{-4pt} \citet{sontag_lp} & LP with $M$ clusters & $\Theta(MN^3)$ & $O(MN^2\sqrt{N})$\\
\hline
  \end{tabular}
  \normalsize
 \end{center}
\label{tab:improvements}
\end{table*}

\section{Experiments}
\label{sec:experiments}

We present experimental results for two types of models: those whose cliques factorize into smaller terms, as discussed in Section \ref{sec:optimizing}, and those whose factors \emph{that depend upon the observation} contain fewer latent variables than their maximal cliques, as discussed in Section \ref{sec:latent}.

\subsection{Experiments with Within-Clique Factorization}
\label{sec:within}

In this section we present experiments in models whose cliques factorize into smaller terms, as discussed in Section \ref{sec:optimizing}. We also use this section to demonstrate Theorems \ref{the:alg1} and \ref{the:algext} experimentally.

\subsubsection{Comparison Between Asymptotic Performance and Upper-Bounds}

For our first experiment, we compare the performance of Algorithms \ref{alg1} and \ref{alg:ext} to the na\"ive solution of Algorithm \ref{alg:brute}. These are core subroutines of each of the other algorithms, meaning that determining their performance shall give us an accurate indication of the improvements we expect to obtain in real graphical models.

For each experiment, we generate $N$ i.i.d.~samples from $[0,1)$ to obtain the lists $v_1 \ldots v_{K}$. $N$ is the domain size; this may refer to a single node, or a \emph{group} of nodes as in Algorithm \ref{alg:ext}; thus large values of $N$ may appear even for binary-valued models. $K$ is the number of lists in \eq{eq:hatK}; we can observe this number of lists only if we are working in cliques of size $K+1$, and then only if the factors are of size $K$ (e.g. we will only see $K=5$ if we have cliques of size 6 with factors of size 5); therefore smaller values of $K$ are probably more realistic in practice (indeed, all of the applications in Section \ref{sec:improvements} have $K=2$).

The performance of our algorithm is shown in Figure \ref{fig:exp1}, for $K=2$ to $4$ (i.e., for 2 to 4 lists). When $K=2$, we execute Algorithm \ref{alg1}, while Algorithm \ref{alg:ext} is executed for $K\geq 3$.
The performance reported is simply the number of elements read from the lists (which is at most $K\times\mathit{start}$).
This is compared to $N$ itself, which is the number of elements read by the na\"ive algorithm. The upper-bounds we obtained in \eq{eq:bound} are also reported, while the true expected performance (i.e., \eq{eq:runtimek1}) is reported for $K=2$. Note that the variable $\mathit{read}$ was introduced into Algorithm \ref{alg:ext} in order to guarantee that it can never be asymptotically slower than the na\"ive algorithm. If this variable is ignored, the performance of our algorithm deteriorates to the point that it closely approaches the upper-bounds shown in Figure \ref{fig:exp1}. Unfortunately, this optimization proved overly complicated to include in our analysis, meaning that our upper-bounds remain highly conservative for large $K$.

\begin{figure*}
 \begin{center}
  \includegraphics[width=0.33\textwidth]{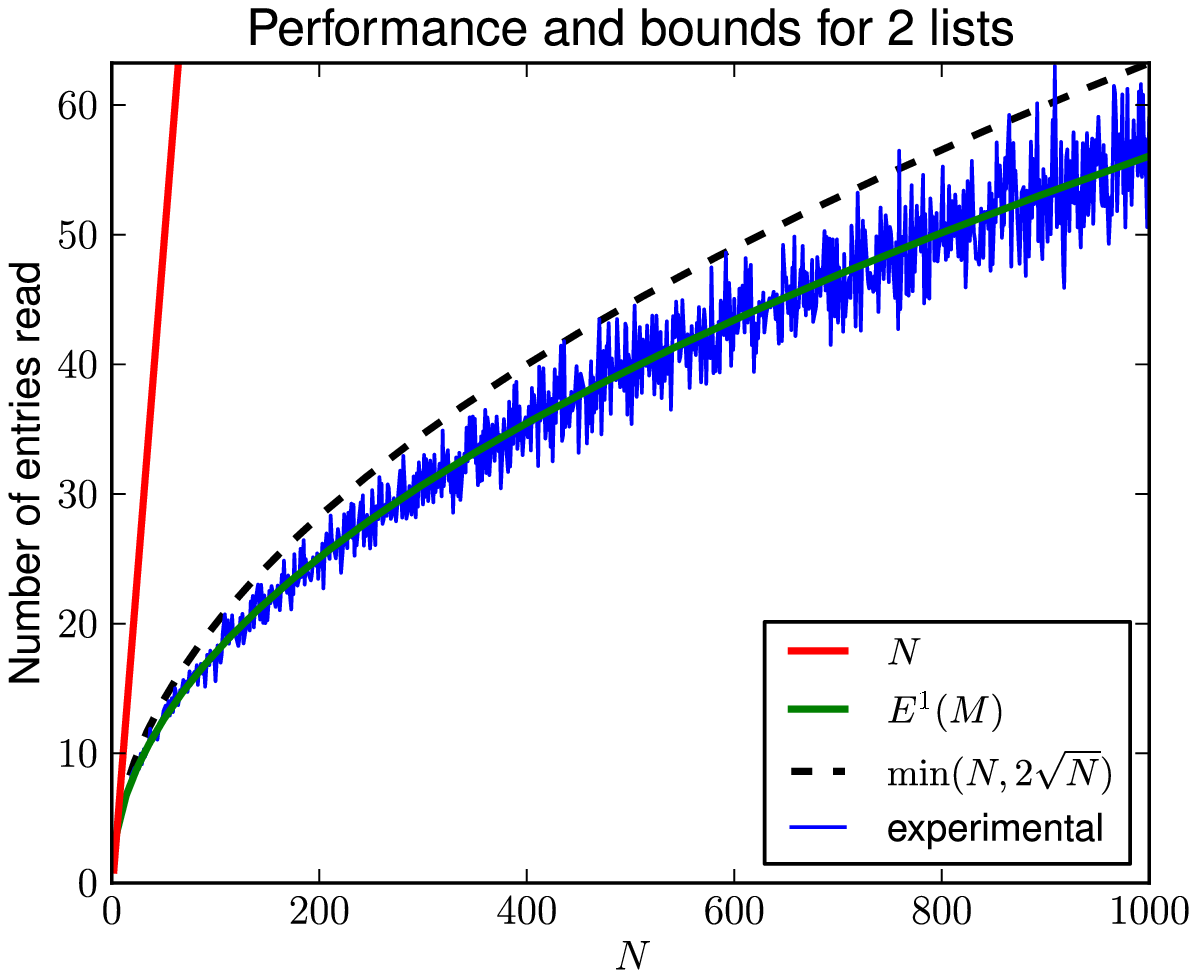}\includegraphics[width=0.33\textwidth]{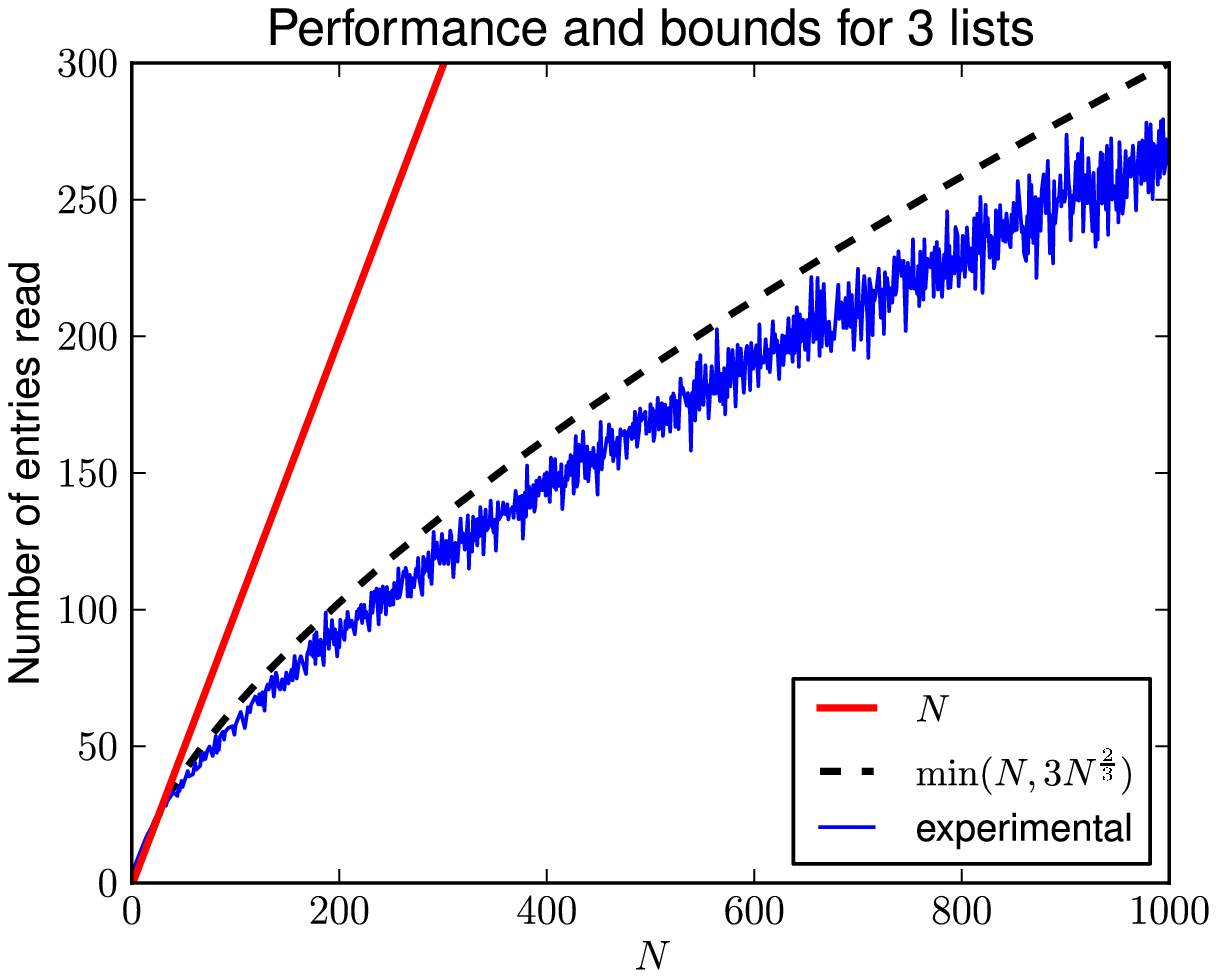}\includegraphics[width=0.33\textwidth]{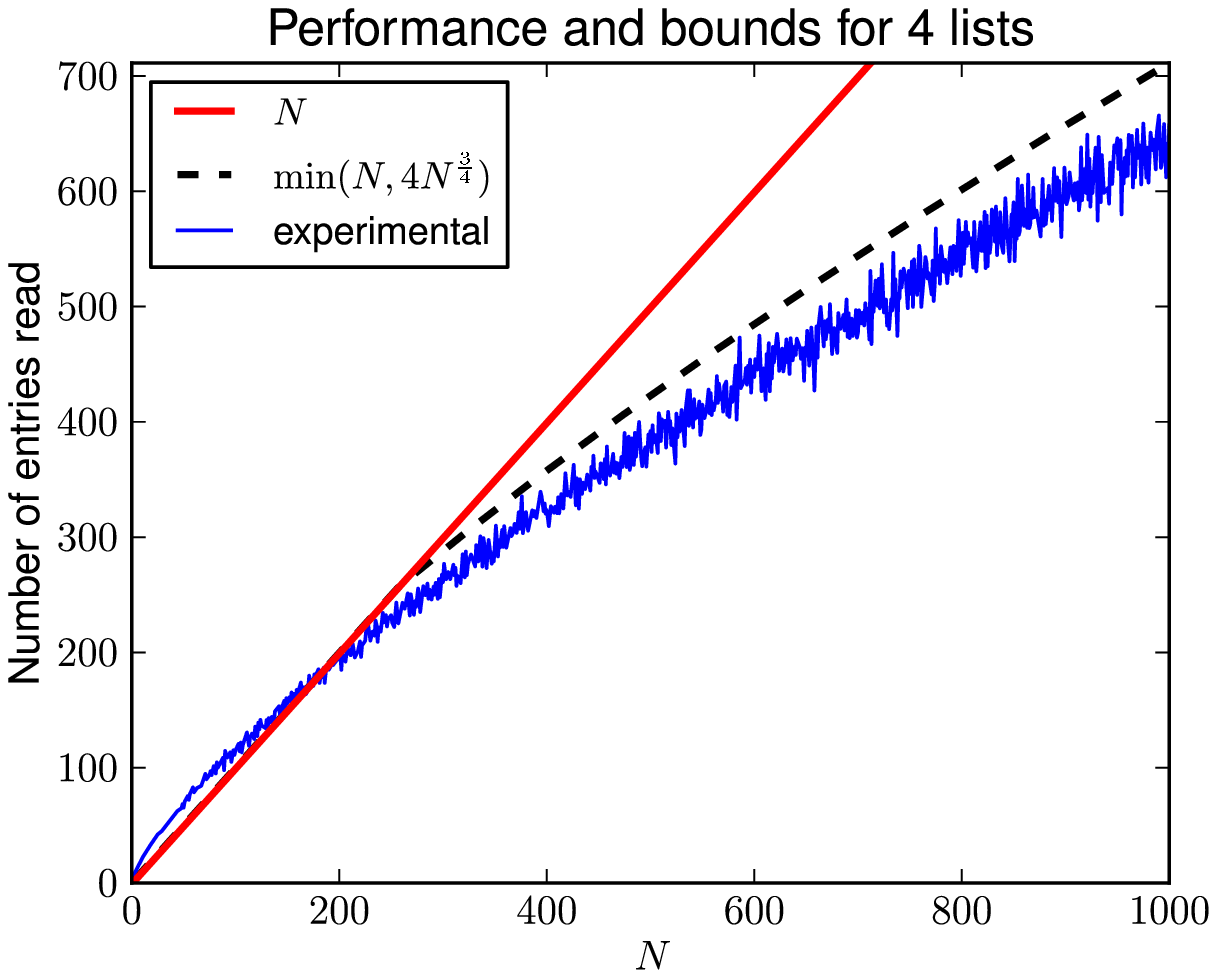}
 \end{center}
\caption{Performance of our algorithm and bounds. For $K=2$, the exact expectation is shown, which appears to precisely match the average performance (over 100 trials). The dotted lines show the bound of \eq{eq:bound}. While the bound is close to the true performance for $K=2$, it becomes increasingly loose for larger $K$.}
\label{fig:exp1}
\end{figure*}


\subsubsection{Performance Improvement for Dependent Variables}
\label{sec:exp_dependent}

The expected-case running time of our algorithm was derived under the assumption that each list has independent order statistics, as was the case for our previous experiment. We suggested that we will obtain worse performance in the case of negatively correlated variables, and better performance in the case of positively correlated variables; we shall assess these claims in this experiment.

Figure \ref{fig:permutations} shows how the order-statistics of $\mathbf{v}_a$ and $\mathbf{v}_b$ can affect the performance of our algorithm. Essentially, the running time of Algorithm \ref{alg1} is determined by the level of `diagonalness' of the permutation matrices in Figure \ref{fig:permutations}; highly diagonal matrices result in better performance than the expected case, while highly off-diagonal matrices result in worse performance. The expected case was simply obtained under the assumption that every permutation is equally likely.

\begin{figure*}
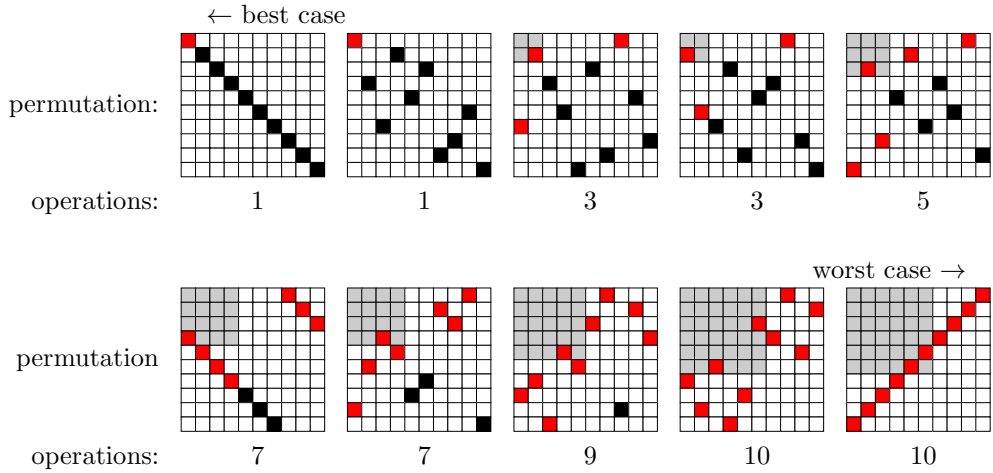

\begin{center}
\small
 \begin{tabular}{rccccc}
               & \hspace{-14mm} $\leftarrow$ best case \hspace{-20mm} \ \\
 permutation:  & \perm{perm0} & \perm{perm2} & \perm{perm3} & \perm{perm5} &  \perm{perm4} \\[25pt]
 operations: & 1 & 1 & 3 & 3 & 5 \\[14pt]

 &&&&& \hspace{-20mm} worst case $\rightarrow$ \hspace{-11mm}\ \\
 permutation   & \perm{perm10} & \perm{perm6} & \perm{perm7} & \perm{perm8} & \perm{perm9} \\[25pt]
 operations: & 7 & 7 & 9 & 10 & 10
 \end{tabular}
 \normalsize
\end{center}
\caption{Different permutation matrices and their resulting cost (in terms of entries read/multiplications performed). Each permutation matrix transforms the \emph{sorted} values of one list into the sorted values of the other, i.e., it transforms $\mathbf{v}_a$ as sorted by $p_a$ into $\mathbf{v}_b$ as sorted by $p_b$.
The red squares show the entries that must be read before the algorithm terminates (each corresponding to one multiplication). See Figure \ref{fig:alg1_aistats} for further explanation.
}
\label{fig:permutations}
\end{figure*}

We report the performance for two lists (i.e., for Algorithm \ref{alg1}), whose values are sampled from a 2-dimensional Gaussian, with covariance matrix
\begin{equation}
 \Sigma = \left[ \begin{array}{cc} 1 & c \\ c & 1\end{array} \right],
\end{equation}
meaning that the two lists are correlated with correlation coefficient $c$. In the case of Gaussian random variables, the correlation coefficient precisely captures the `diagonalness' of the matrices in Figure \ref{fig:permutations}. Performance is shown in Figure \ref{fig:correlated} for different values of $c$ ($c=0$, is not shown, as this is the case observed in the previous experiment).

\begin{figure*}
 \begin{center}
  \includegraphics[width=0.33\textwidth]{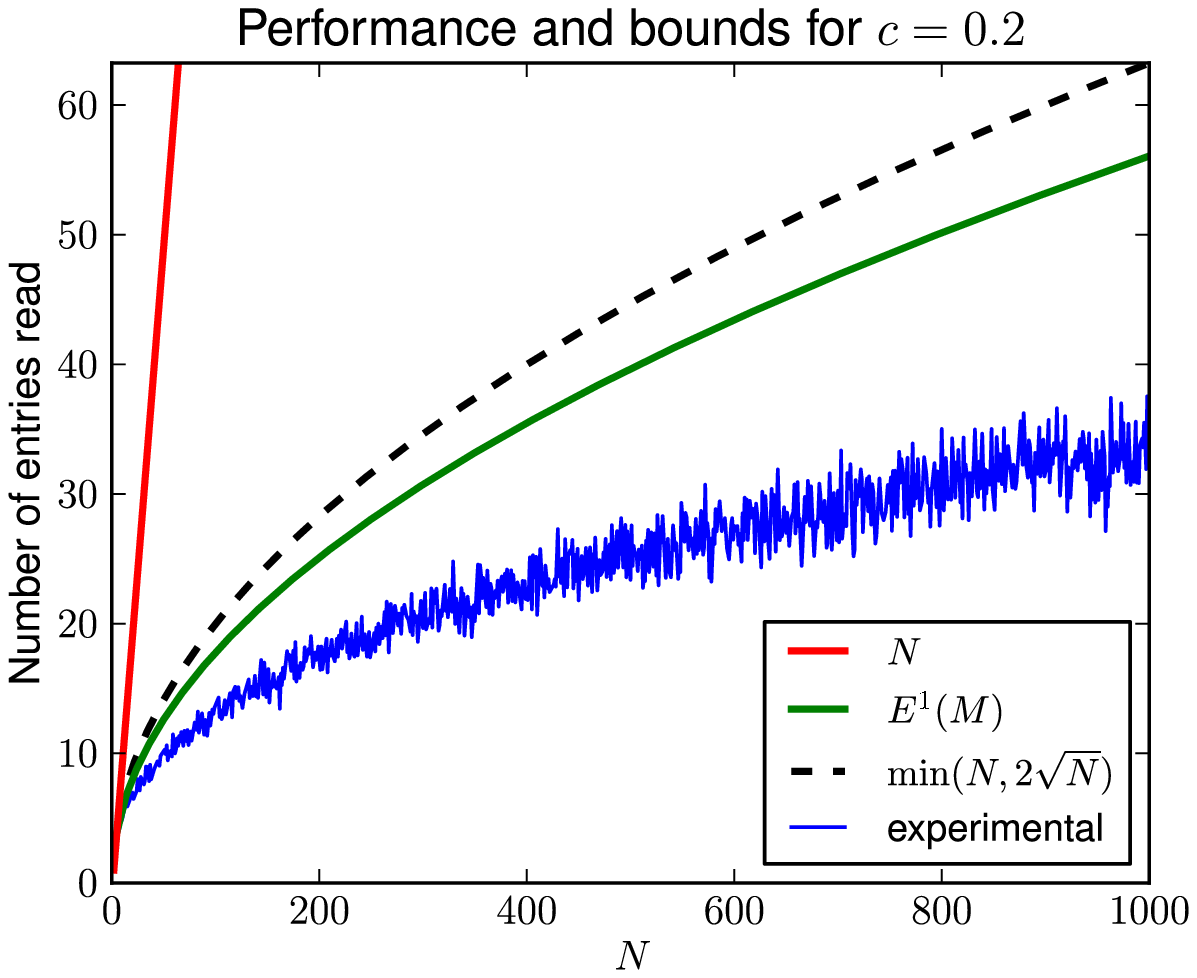}\includegraphics[width=0.33\textwidth]{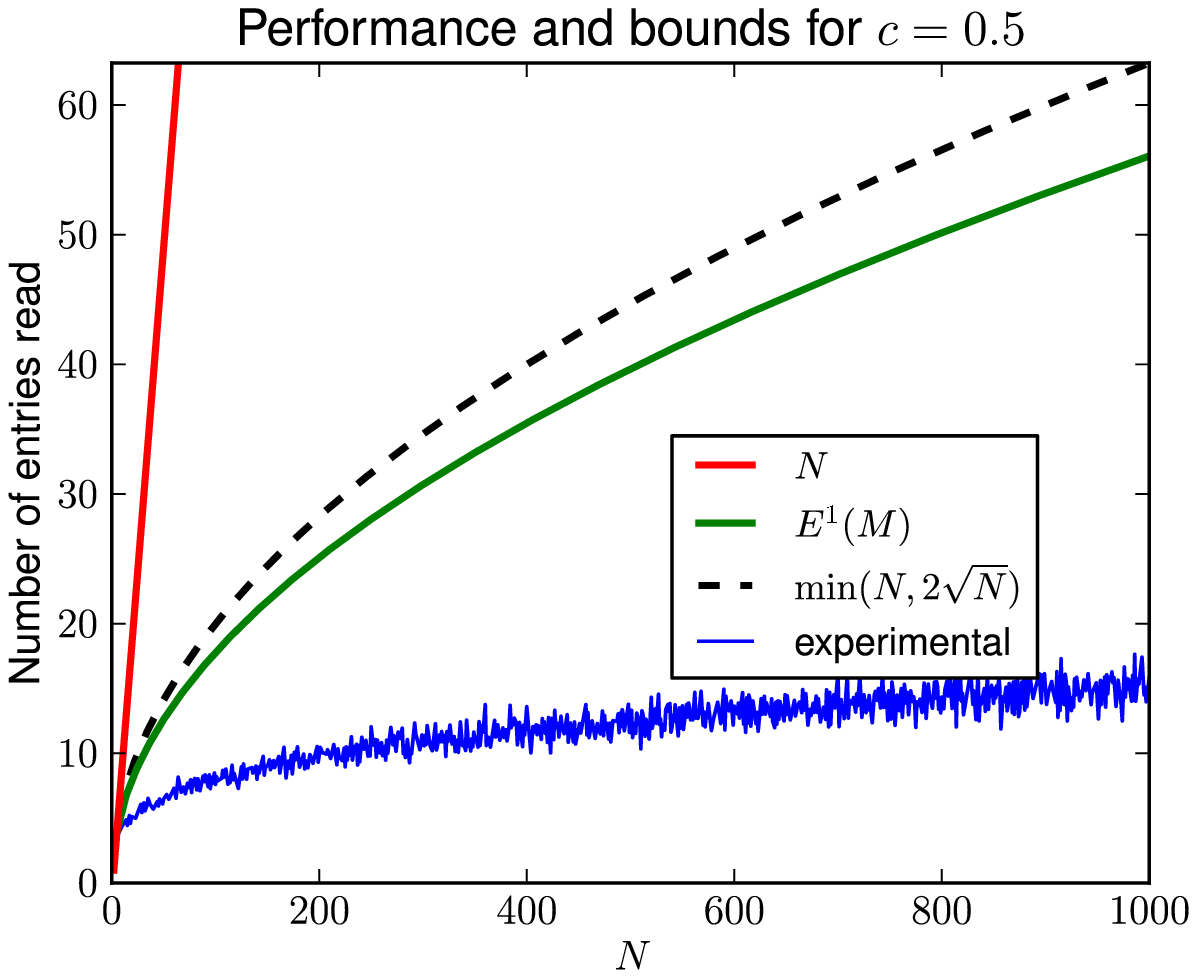}\includegraphics[width=0.33\textwidth]{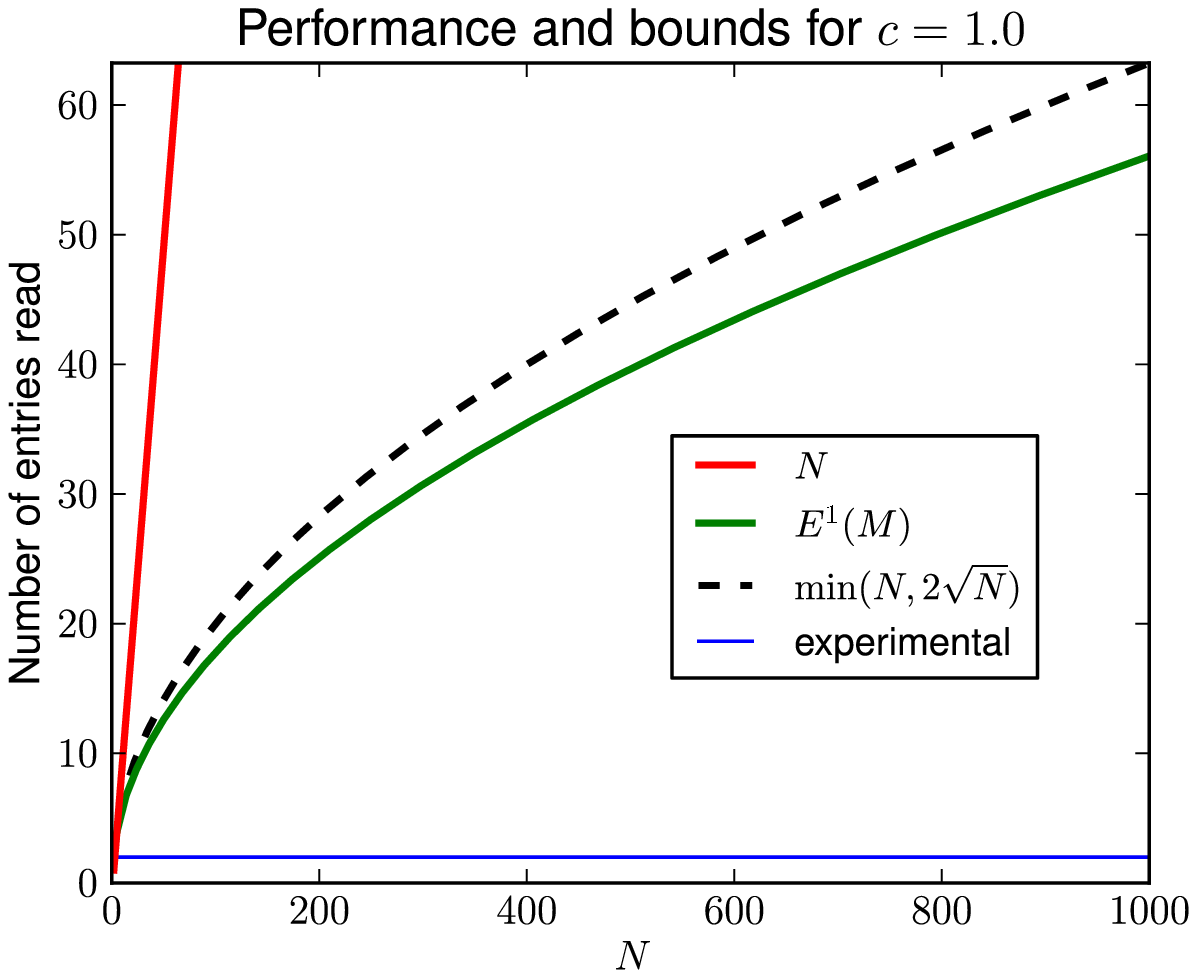}
  \includegraphics[width=0.33\textwidth]{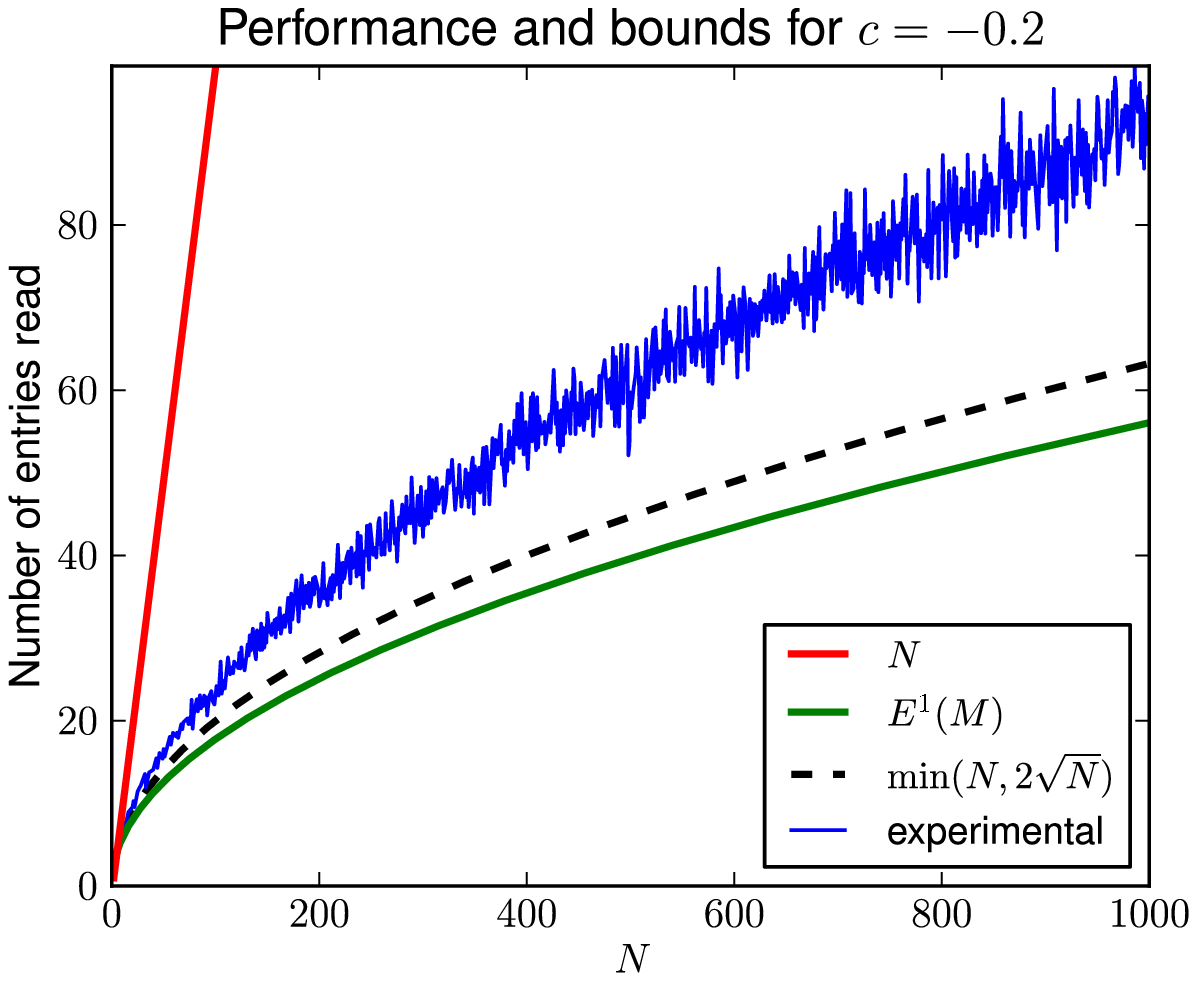}\includegraphics[width=0.33\textwidth]{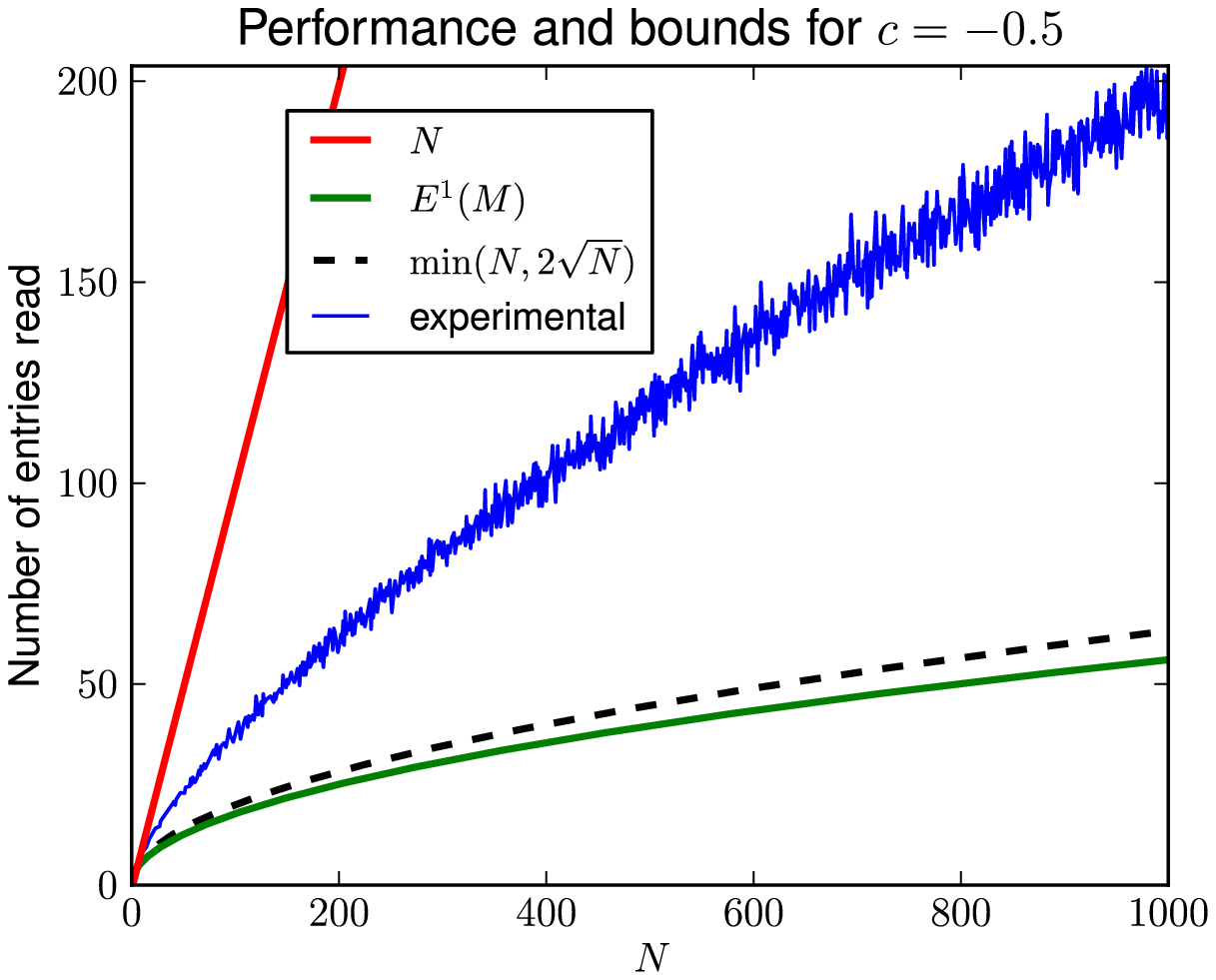}\includegraphics[width=0.33\textwidth]{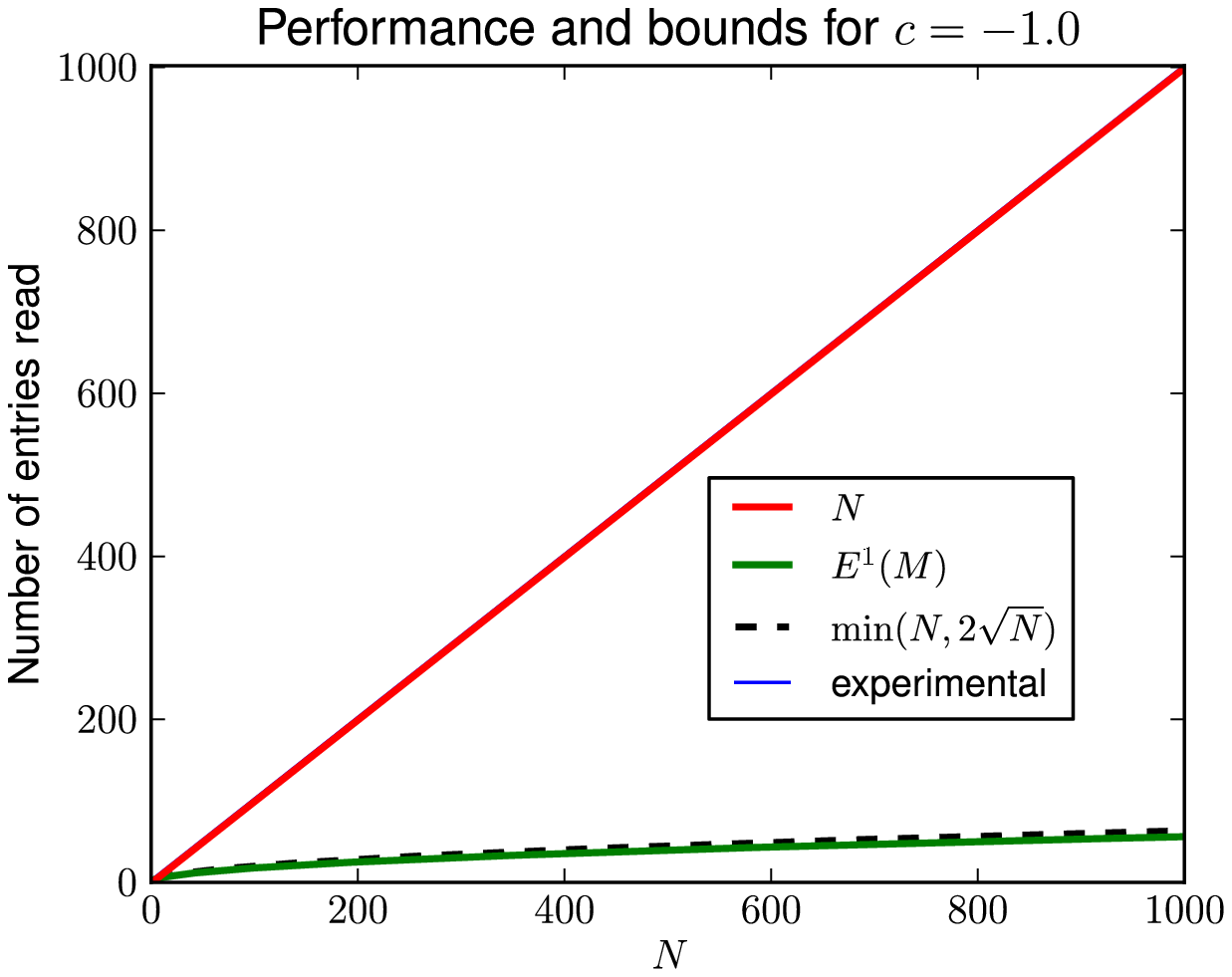}
 \end{center}
\caption{Performance of our algorithm for different correlation coefficients. The top three plots show positive correlation, the bottom three show negative correlation. Correlation coefficients of $c = 1.0$ and $c = -1.0$ capture precisely the best and worst-case performance of our algorithm, resulting in $O(1)$ and $\Theta(N)$ performance (respectively).}
\label{fig:correlated}
\end{figure*}

\subsubsection{2-Dimensional Graph Matching}

Naturally, Algorithm \ref{alg2} has additional overhead compared to the na\"ive solution, meaning that it will not be beneficial for small $N$. In this experiment, we aim to assess the extent to which our approach is useful in real applications. We reproduce the model from \citet{McACaeBar08}, which performs 2-dimensional graph matching, using a loopy graph with cliques of size three, containing only second order potentials (as described in Section \ref{sec:improvements}); the $\Theta(NM^3)$ performance of \citet{McACaeBar08} is reportedly state-of-the-art. We also show the performance on a graphical model with \emph{random} potentials, in order to assess how the results of the previous experiments are reflected in terms of actual running time.

We perform matching between a \emph{template} graph with $M$ nodes, and a \emph{target} graph with $N$ nodes, which requires a graphical model with $M$ nodes and $N$ states per node (see \citet{McACaeBar08} for details). We fix $M=10$ and vary $N$. Performance is shown in Figure \ref{fig:graphmatch}. Fitted curves are shown together with the actual running time of our algorithm, confirming its $O(MN^2\sqrt{N})$ performance. The coefficients of the fitted curves demonstrate that our algorithm is useful even for modest values of $N$.

We also report results for graph matching using graphs from the MPEG-7 dataset \citep{mpeg7}, which consists of 1,400 silhouette images. Again we fix $M=10$ (i.e., 10 points are extracted in each template graph) and vary $N$ (the number of points in the target graph). This experiment confirms that even when matching real-world graphs, the assumption of independent order-statistics appears to be reasonable.


\begin{figure}
 \begin{center}
    \includegraphics[width=0.5\textwidth]{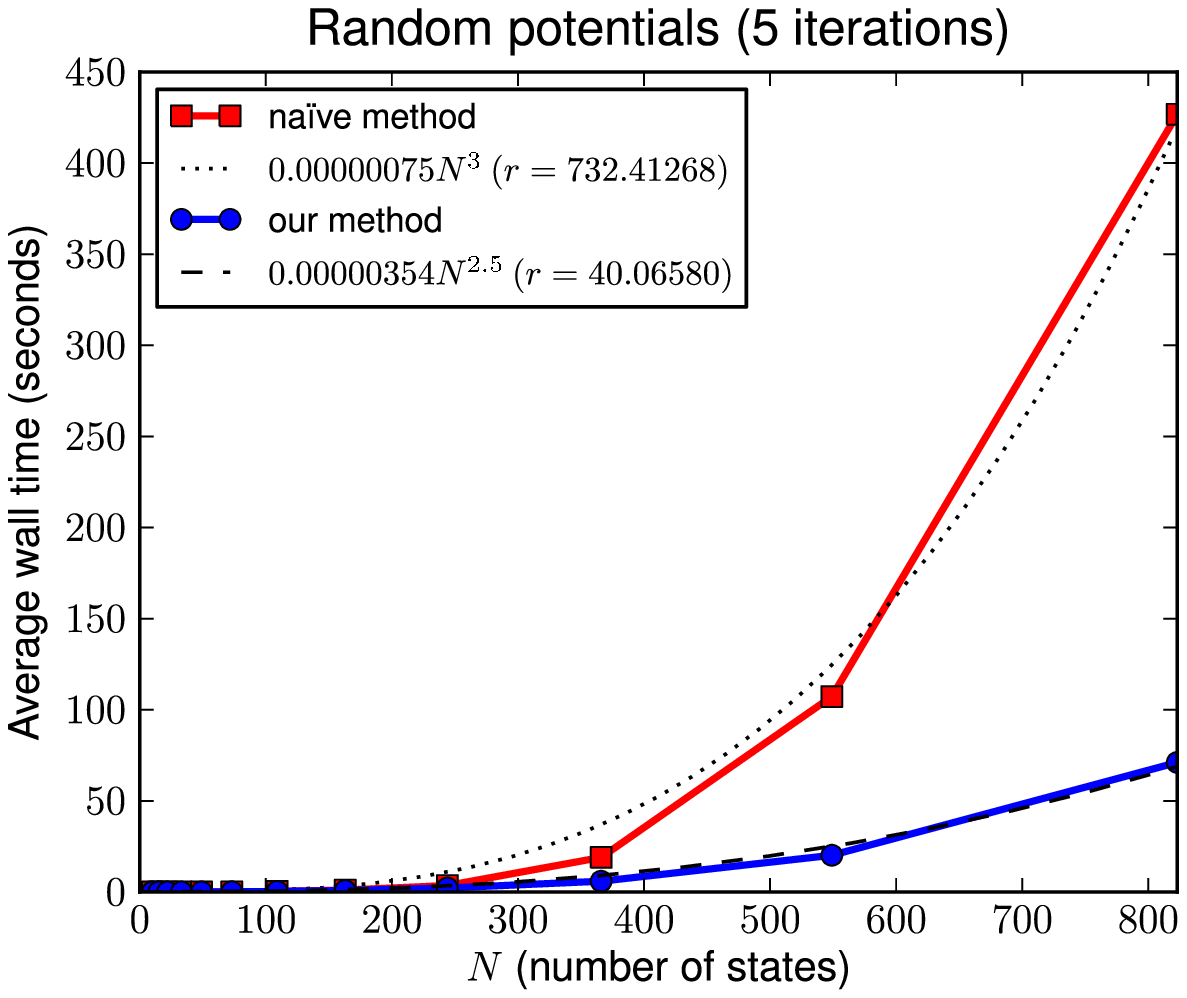}\includegraphics[width=0.5\textwidth]{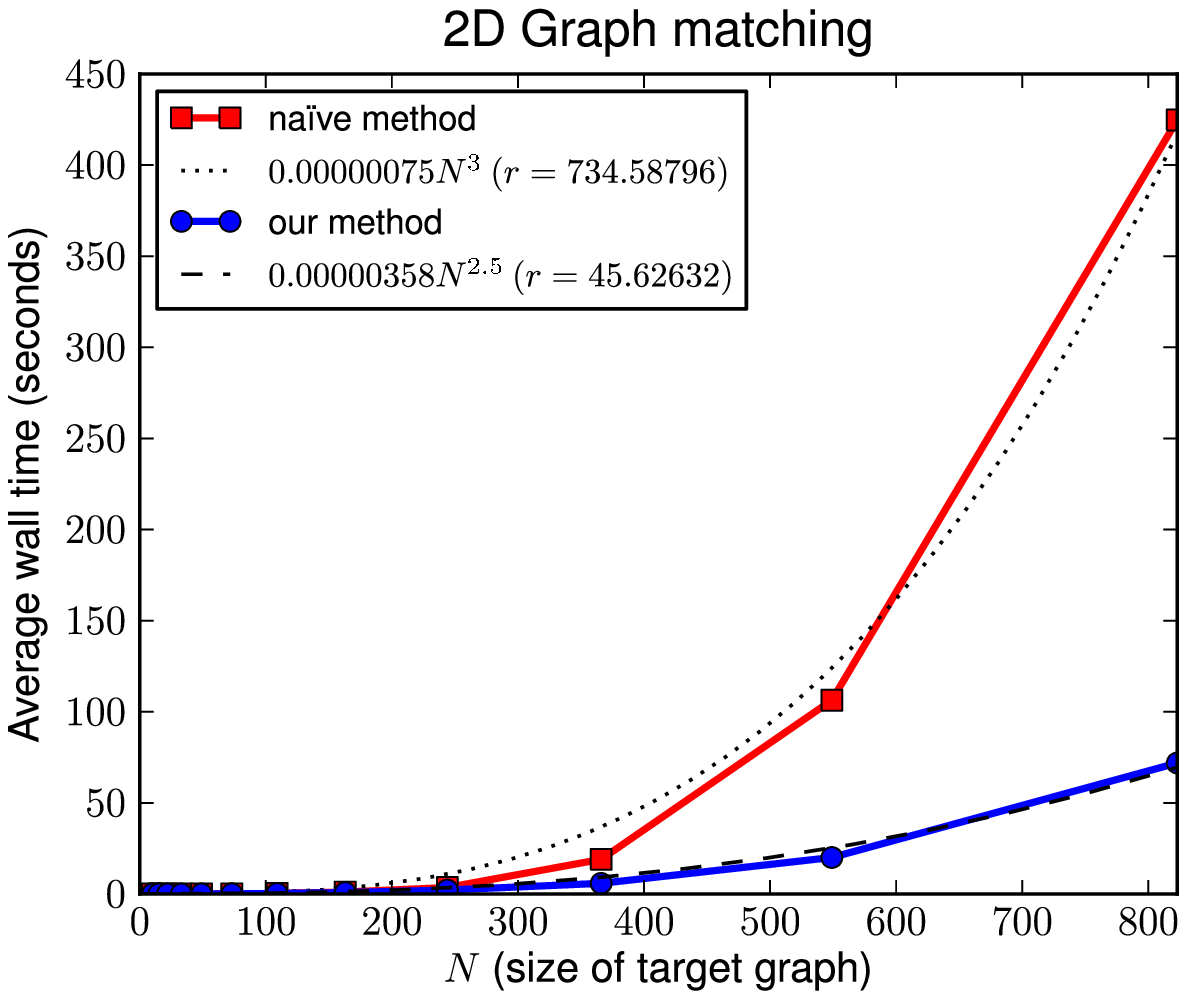}
 \end{center}
\caption{The running time of our method on randomly generated potentials, and on a graph matching experiment (both graphs have the same topology). Fitted curves are also obtained by performing least-squares regression; the residual error $r$ indicates the `goodness' of the fitted curve.}
\label{fig:graphmatch}
\end{figure}

\begin{figure}
 \begin{center}
    \includegraphics[width=0.5\textwidth]{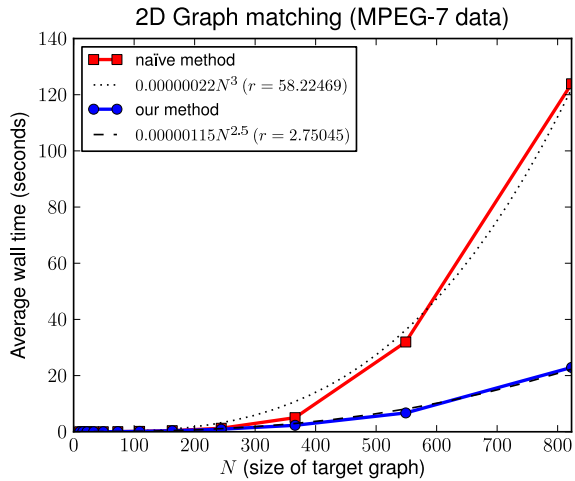}
 \end{center}
\caption{The running time of method our on graphs from the MPEG-7 dataset.}
\label{fig:fpeg}
\end{figure}

\subsubsection{Higher-Order Markov Models}
\label{sec:text_denoise}

In this experiment, we construct a simple Markov model for text-denoising. Random noise is applied to a text segment, which we try to correct using a prior extracted from a text corpus. For instance

\begin{center}
\texttt{wondrous sight of th4 ivory Pequod} \ \ is corrected to \ \ \texttt{wondrous sight of the ivory Pequod}.
\end{center}

In such a model, we would like to exploit higher-order relationships between characters, though the amount of data required to construct an accurate prior grows exponentially with the size of the maximal cliques. Instead, our prior consists entirely of pairwise relationships between characters (or `bigrams'); higher-order relationships are encoded by including bigrams of non-adjacent characters. Specifically, our model takes the form
\begin{equation}
 \Phi_X(\mathbf{x}_X) = \prod_{i=1}^{|X| - 1} \Phi_{i,i+1}(x_i, x_{i+1}) \times \prod_{i=1}^{|X| - 2} \Phi_{i,i+2}(x_i, x_{i+2})
\end{equation}
where
\begin{equation}
 \Phi_{i,j}(x_i, x_j) = \psi_{i,j}(x_i, x_j)p(x_i | o_i)p(x_j | o_j).
\end{equation}
Here $\psi$ is our \emph{prior} (extracted from text statistics), and $p$ is our `noise model' (given the observation $\mathbf{o}$). The computational complexity of inference in this model is similar to that of the skip-chain CRF shown in Figure \ref{fig:tree3}(b), as well as models for part-of-speech tagging and named-entity recognition, as in Figure \ref{fig:denoise}. Text denoising is useful for the purpose of demonstrating our algorithm, as there are several different corpora available in different languages, allowing us to explore the effect that the domain size (i.e., the size of the language's alphabet) has on running time.

\begin{figure*}
 \begin{center}
  \parbox[c]{0.58\textwidth}{\centering\includegraphics[angle=-90,scale=0.24]{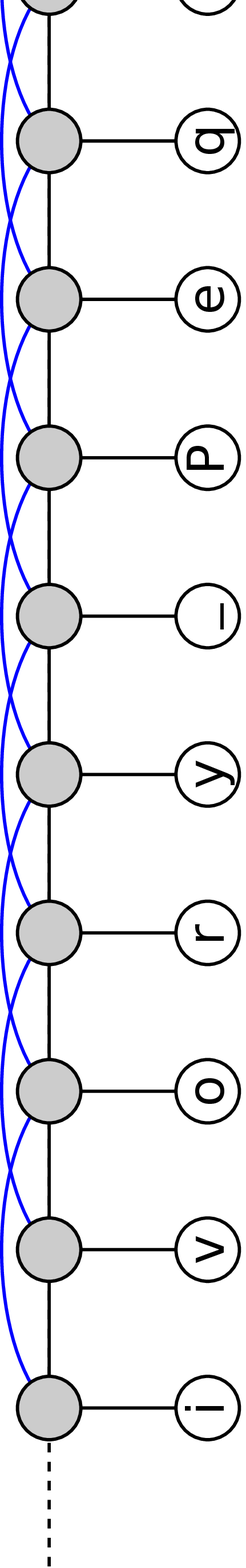}} \parbox[c]{0.4\textwidth}{\centering\includegraphics[angle=-90,scale=0.24]{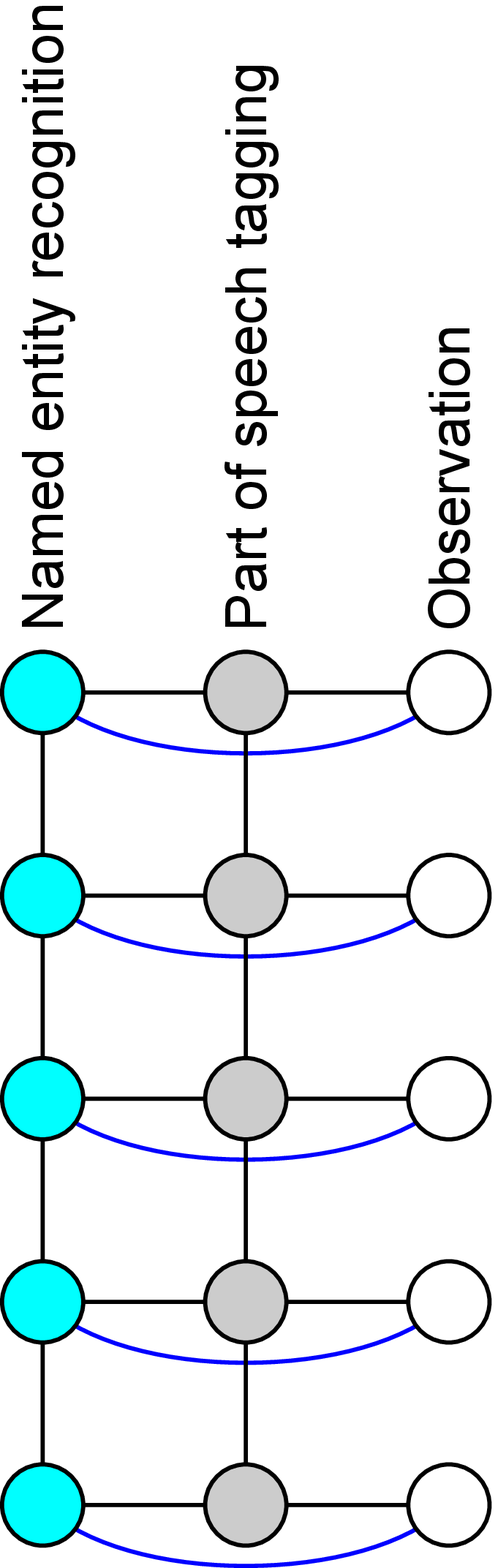}}
 \end{center}
\caption{Left: Our model for denoising. Its computational complexity is similar to that of a skip-chain CRF, and models for named-entity recognition (right).}
\label{fig:denoise}
\end{figure*}

We extracted pairwise statistics based on 10,000 characters of text, and used this to correct a series of 25 character sequences, with 1\% random noise introduced to the text. The domain was simply the set of characters observed in each corpus. The Japanese dataset was not included, as the $\Theta(MN^2)$ memory requirements of the algorithm made it infeasible with $N \simeq 2000$; this is addressed in Section \ref{sec:chain}.

The running time of our method, compared to the na\"ive solution, is shown in Figure \ref{fig:denoise_res}. One might expect that texts from different languages would exhibit different dependence structures in their order statistics, and therefore deviate from expected case in some instances. However, the running times appear to follow the fitted curve closely, i.e., we are achieving approximately the expected-case performance in all cases.

Since the prior $\psi_{i,i+1}(x_i, x_{i+1})$ is \emph{data-independent}, we shall further discuss this type of model in reference to Algorithm \ref{alg:message} in Section \ref{sec:latent_exp}.

\begin{figure}
 \begin{center}
  \includegraphics[width=0.5\textwidth]{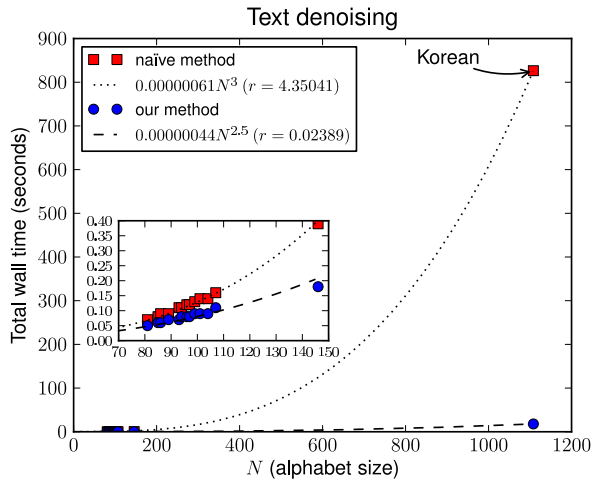}
 \end{center}
\caption{The running time of our method compared to the na\"ive solution. A fitted curve is also shown, whose coefficient estimates the computational overhead of our model.}
\label{fig:denoise_res}
\end{figure}

\subsubsection{Protein Design}
\label{sec:protein}

In \citet{sontag_lp}, a method is given for exact MAP-inference in graphical models using LP-relaxations. Where exact solutions cannot be obtained by considering only pairwise factors, `clusters' of pairwise terms are introduced in order to refine the solution. Message-passing in these clusters turns out to take exactly the form that we consider, as third-order (or larger) clusters are formed from pairwise terms. Although a number of applications are presented in \citet{sontag_lp}, we focus on protein design, as this is the application in which we typically observe the largest domain sizes. Other applications with larger domains may yield further benefits.

Without going into detail, we simply copy the two equations in \citet{sontag_lp} to which our algorithm applies. The first of these is concerned with passing messages between clusters, while the second is concerned with choosing new clusters to add. Below are the two equations, reproduced verbatim from \citet{sontag_lp}:
\begin{equation}
 \lambda_{c\rightarrow e}(x_e) ~\leftarrow~ -\frac{2}{3}\bigl( \lambda_{e\rightarrow e}(x_e) + \!\!\!\sum_{c' \neq c, e \in c'}\!\!\! \lambda_{c' \rightarrow e}(x_e)\bigr) + \frac{1}{3} \max_{x_{c \setminus e}}\Bigl[ \sum_{e' \in c \setminus e} \bigl( \lambda_{e' \rightarrow e'}(x_{e'}) + \!\!\!\sum_{c' \neq c, e' \in c'}\!\!\! \lambda_{c' \rightarrow e'}(x_{e'})\bigr) \Bigr]
\label{eq:sontag1}
\end{equation}
\citep[see][Figure 1, bottom]{sontag_lp}, which consists of marginalizing a cluster ($c$) that decomposes into edges ($e$), and
\begin{equation}
 d(c) = \sum_{e\in c} \max_{x_e} b_e(x_e) - \max_{x_c} \left[ \sum_{e\in c} b_e(x_e) \right],
 \label{eq:sontag2}
\end{equation}
\citep[see][(eq.~4)]{sontag_lp}, which consists of finding the MAP state in a ring-structured model.

As the code from \citet{sontag_lp} was publicly available, we simply replaced the appropriate functions with our own (in order to provide a fair comparison, we also replaced their implementation of the na\"ive algorithm, as ours proved to be faster than the highly generic matrix library used in their code).

In order to improve the running time of our algorithm, we made the following two modifications to Algorithm \ref{alg1}:
\begin{itemize}
 \item We used an \emph{adaptive sorting algorithm} (i.e., a sorting algorithm that runs faster on nearly-sorted data). While quicksort was used during the first iteration of message-passing, subsequent iterations used insertion sort, as the optimal ordering did not change significantly between iterations.
 \item We added an additional stopping criterion to the algorithm. Namely, we terminate the algorithm if $\mathbf{v}_a[p_a[\mathit{start}]]\times\mathbf{v}_b[p_b[\mathit{start}]] < \mathit{max}$. In other words, we check how large the maximum \emph{could be} given the best possible permutation of the next elements (i.e., if they have the same index); if this value could not result in a new maximum, the algorithm terminates. This check costs us an additional multiplication, but it means that the algorithm will terminate faster in cases where a large maximum is found early on.
\end{itemize}

Results for these two problems are shown in Figure \ref{fig:protein}. Although our algorithm consistently improves upon the running time of \citet{sontag_lp}, the domain size of the variables in question is not typically large enough to see a marked improvement. Interestingly, neither method follows the expected running time closely in this experiment. This is partly due to the fact that there is significant variation in the variable size (note that $N$ only shows the \emph{average} variable size), but it may also suggest that there is a complicated structure in the potentials which violates our assumption of independent order statistics.

\begin{figure}
 \begin{center}
  \includegraphics[width=0.5\textwidth]{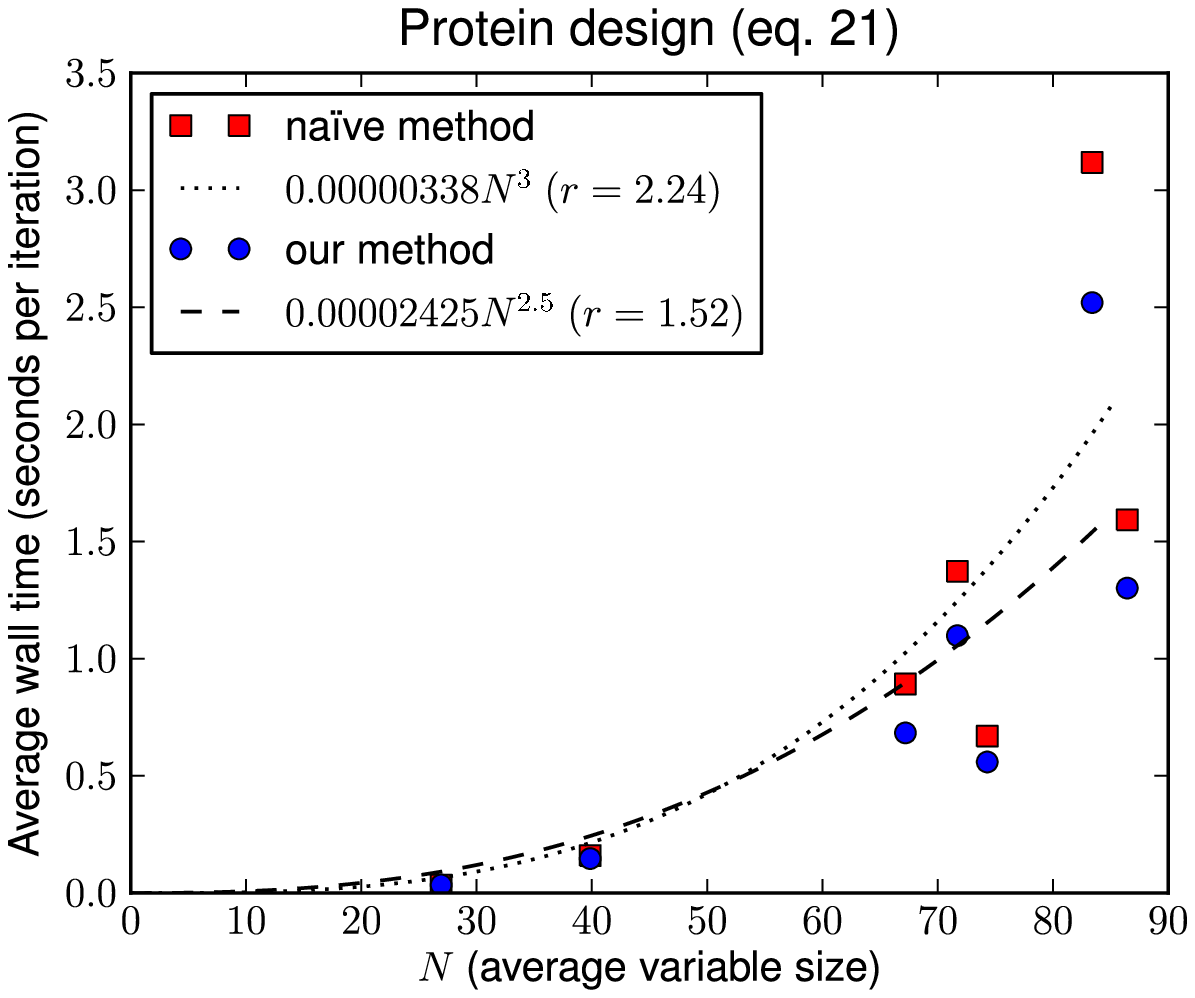}\includegraphics[width=0.5\textwidth]{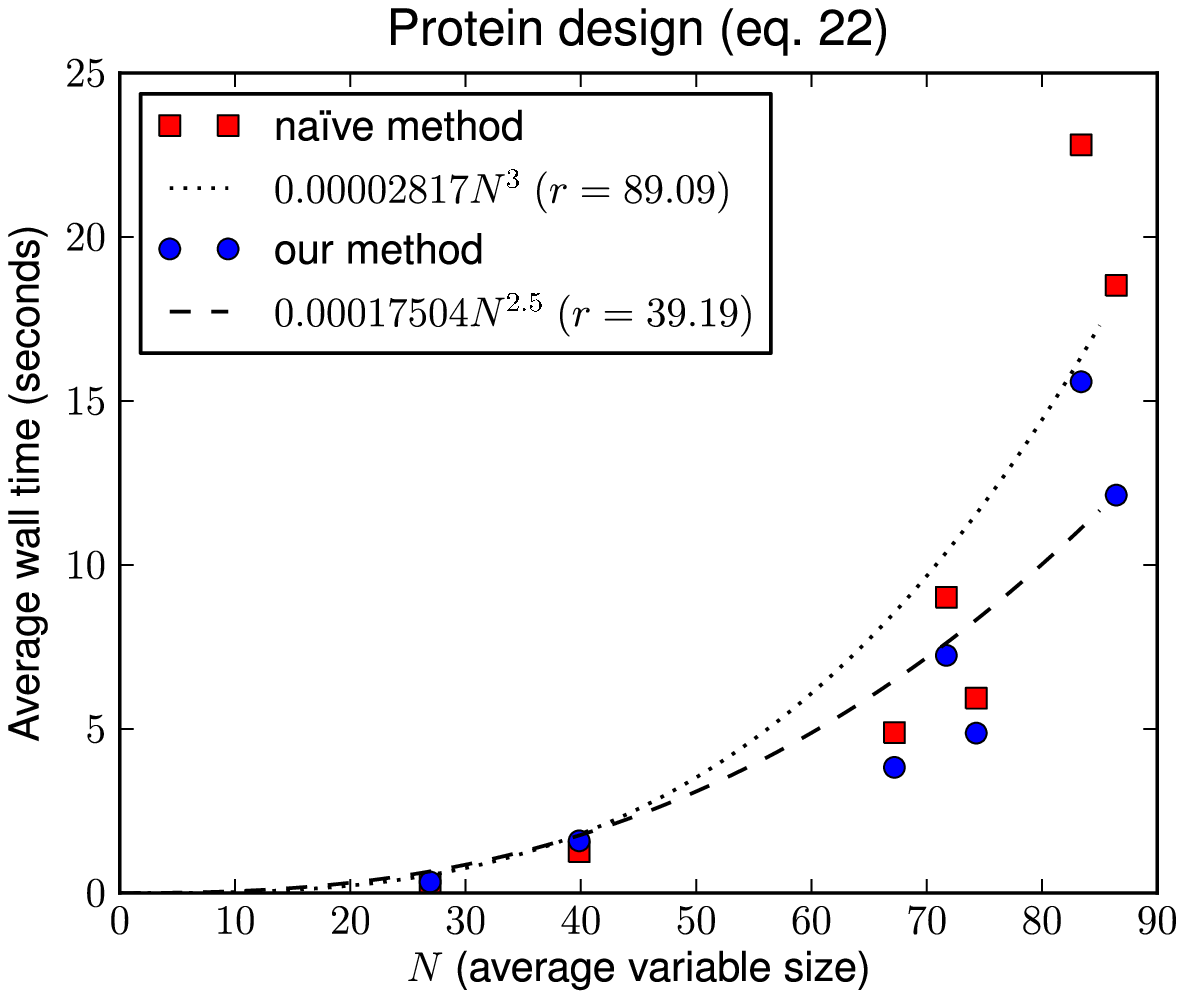}
 \end{center}
\caption{The running time of our method on protein design problems from \citet{sontag_lp}.}
\label{fig:protein}
\end{figure}


\subsection{Experiments with Data-Independent Factors}
\label{sec:latent_exp}

In each of the following experiments we perform belief-propagation in models of the form given in \eq{eq:pairwise}. Thus each model is completely specified by defining the node potentials $\Phi_i(x_i|y_i)$, the edge potentials $\Phi_{i,j}(x_i,x_j)$, and the topology $(\mathcal N, \mathcal E)$ of the graph.

Furthermore we assume that the edge potentials are \emph{homogeneous}, i.e., that the potential for each edge is the same, or rather that they have the same order statistics (for example, they may differ by a multiplicative constant). This means that sorting can be done \emph{online} without affecting the asymptotic complexity. When subject to heterogeneous potentials we need merely sort them \emph{offline}; the online cost shall be similar to what we report here.


\subsubsection{Chain-Structured Models}
\label{sec:chain}

In this section, we consider \emph{chain-structured} graphs. Here we have nodes $\mathcal N = \lbrace 1 \ldots Q \rbrace$, and edges $\mathcal E = \lbrace (1,2), (2,3) \ldots (Q-1,Q) \rbrace$. The max-product algorithm is known to compute the maximum-likelihood solution exactly for tree-structured models.


Figure \ref{fig:chain} (left) shows the performance of our method on a model with \emph{random} potentials, i.e., $\Phi_i(x_i|y_i) = U[0,1)$, $\Phi_{i,i+1}(x_i,x_{i+1}) = U[0,1)$, where $U[0,1)$ is the uniform distribution. Fitted curves are superimposed onto the running time, confirming that the performance of the standard solution grows quadratically with the number of states, while ours grows at a rate of $N\sqrt{N}$. The residual error $r$ shows how closely the fitted curve approximates the running time; in the case of random potentials, both curves have almost the same constant.

Figure \ref{fig:chain} (right) shows the performance of our method on the text-denoising experiment. This experiment is essentially identical to that shown in Section \ref{sec:text_denoise}, except that the model is a chain (i.e., there is no $\Phi_{i,i+2}$), and we exploit the notion of data-independence (i.e., the fact that $\Phi_{i,i+1}$ does not depend on the observation). Since the same $\Phi_{i,i+1}$ is used for every adjacent pair of nodes, there is no need to perform the `sorting' step offline -- only a single copy of $\Phi_{i,i+1}$ needs to be sorted, and this is included in the total running time shown in Figure \ref{fig:chain}.



\begin{figure}[t]
\begin{center}
 \includegraphics[width=0.5\textwidth]{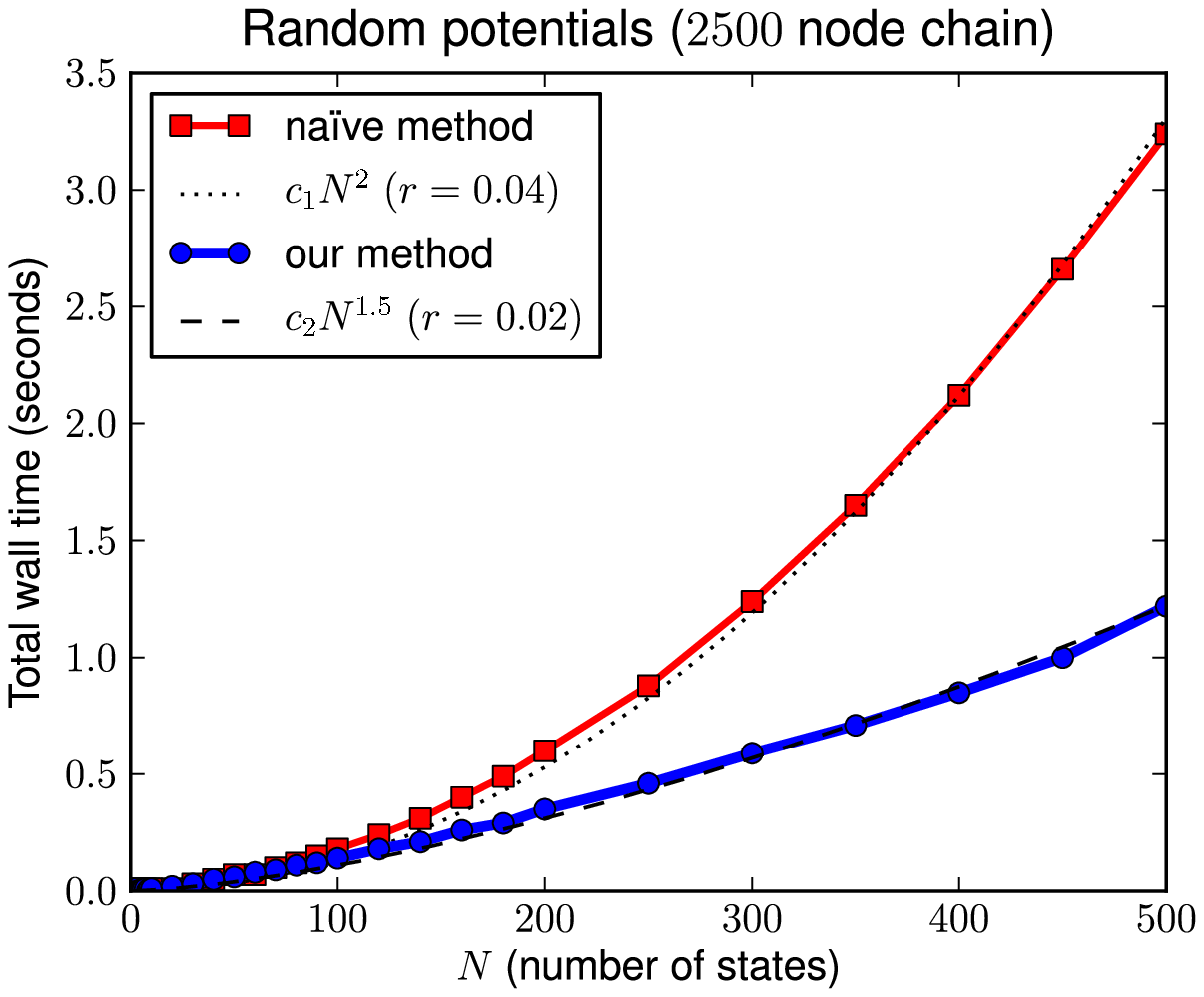}\includegraphics[width=0.5\textwidth]{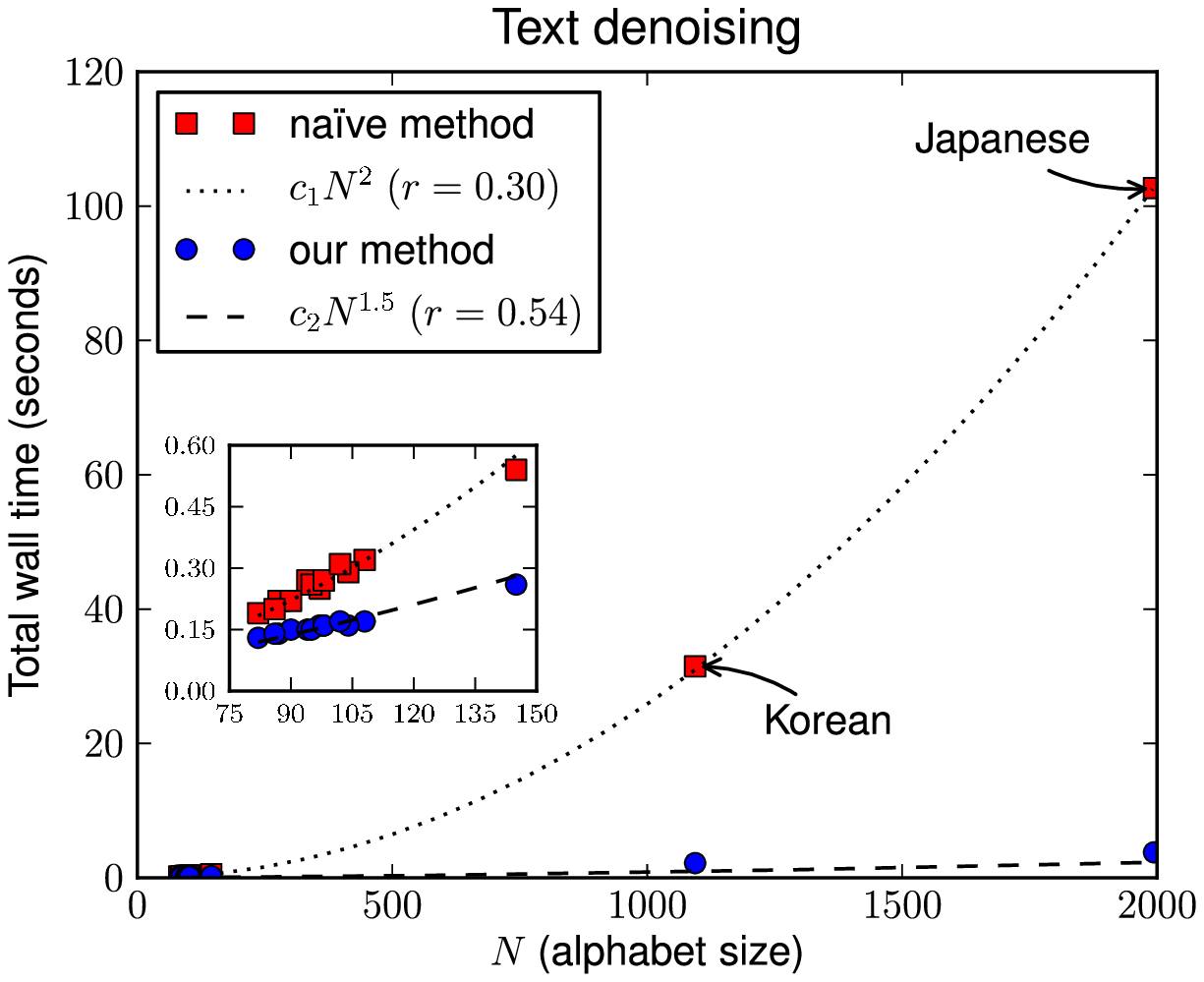}
\end{center}
\caption{Running time of inference in chain-structured models: random potentials (left), and text denoising (right). Fitted curves confirm that the exponent of our method is indeed $1.5$ ($r$ denotes the sum of residuals, i.e., the `goodness' of the fitted curve).}
\label{fig:chain}
\end{figure}

\subsubsection{Grid-Structured Models}
\label{sec:grid}

Similarly, we can apply our method to \emph{grid-structured} models. Here we resort to loopy belief-propagation to approximate the MAP solution, though indeed the same analysis applies in the case of factor graphs \citep{factorgraphs}.
We construct a $50\times 50$ grid model and perform loopy belief-propagation using a random message-passing schedule for five iterations. In these experiments our nodes are $\mathcal N = \lbrace 1 \ldots 50 \rbrace^2$, and our edges connect the 4-neighbors, i.e., the node $(i,j)$ is connected to both $(i+1,j)$ and $(i,j+1)$ (similar to the grid shown in Figure \ref{fig:examps}(a)).

Figure \ref{fig:grid} (left) shows the performance of our method on a grid with random potentials (similar to the experiment in Section \ref{sec:chain}). Figure \ref{fig:grid} (right) shows the performance of our method on an optical flow task \citep{Lucas81}. Here the states encode \emph{flow vectors}: for a node with $N$ states, the flow vector is assumed to take integer coordinates in the square $[-\sqrt{N}/2, \sqrt{N}/2)^2$ (so that there are $N$ possible flow vectors). For the unary potential we have
\begin{equation}
 \Phi_{(i,j)}(x | y) = \bigl\| \mathit{Im}_1[i,j] - \mathit{Im}_2[(i,j) + f(x)] \bigr\|,
\end{equation}
where $\mathit{Im}_1[a,b]$ and $\mathit{Im}_2[a,b]$ return the gray-level of the pixel at $(a,b)$ in the first and second images (respectively), and $f(x)$ returns the flow vector encoded by $x$. The pairwise potentials
simply encode the Euclidean distance between two flow vectors. 
Note that a variety of low-level computer vision tasks (including optical flow) are studied in \citet{pedro_bp}, where the highly structured nature of the potentials in question often allows for efficient solutions.

Our fitted curves in Figure \ref{fig:grid} show $O(N\sqrt{N})$ performance for both random data and for optical flow.

\begin{figure}[t]
\begin{center}
 \includegraphics[width=0.5\textwidth]{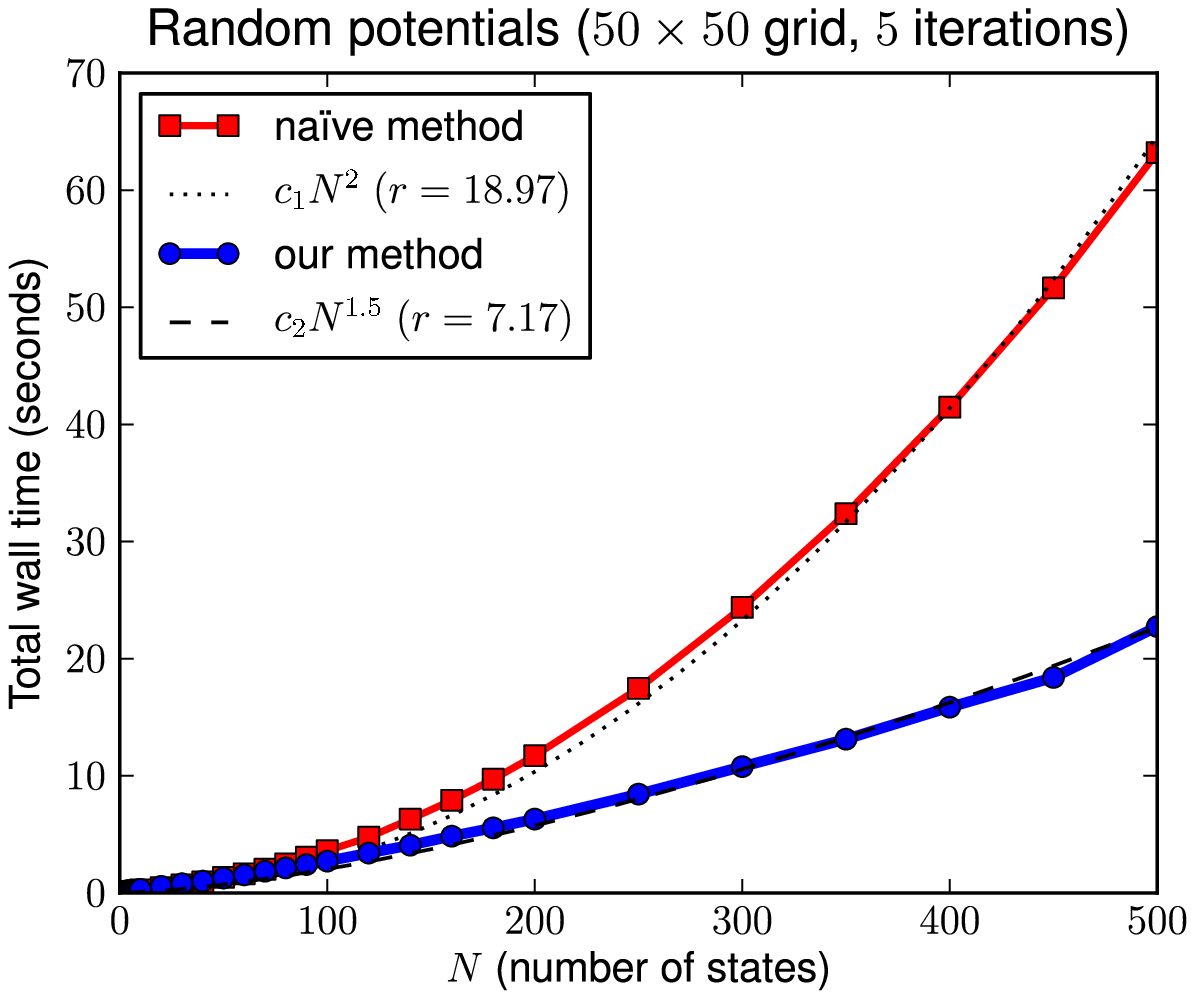}\includegraphics[width=0.5\textwidth]{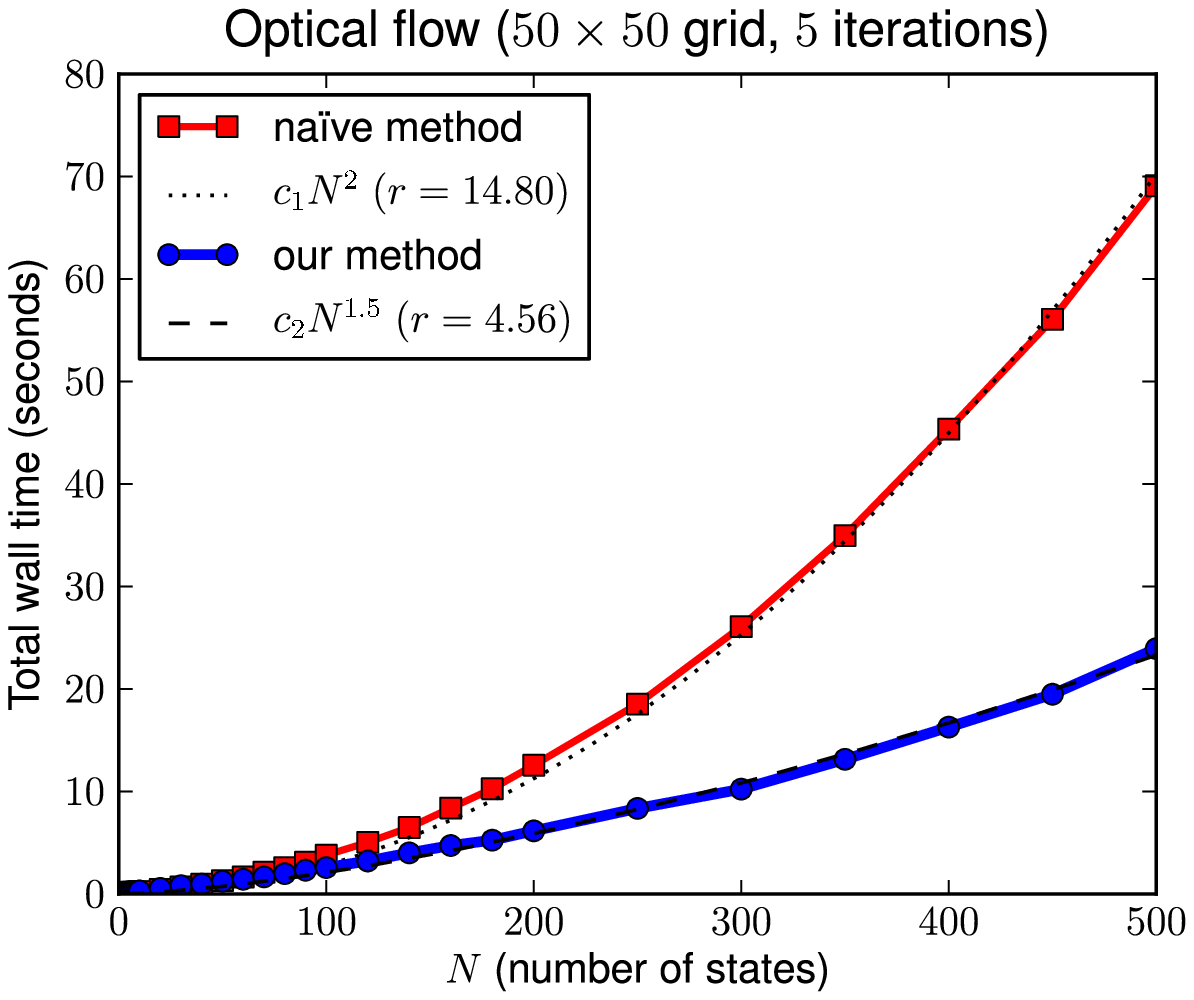}
\end{center}
\caption{Running time of inference in grid-structured models: random potentials (left), and optical flow (right).}
\label{fig:grid}
\end{figure}

\subsubsection{Failure Cases}
\label{sec:fail}

In our previous experiments on graph-matching, text denoising, and optical flow we observed running times similar to those for random potentials, indicating that there is no prevalent dependence structure between the order statistics of the messages and the potentials.

In certain applications the order statistics of these terms are highly dependent. The most straightforward example is that of \emph{concave} potentials (or \emph{convex} potentials in a min-sum formulation). For instance, in a stereo disparity experiment, the unary potentials encode the fact that the output should be `close to' a certain value; the pairwise potentials encode the fact that neighboring nodes should take similar values \citep{Scharstein01ataxonomy,stereo03}.

Whenever both $\mathbf{v}_a$ and $\mathbf{v}_b$ are concave in \eq{eq:hati}, the permutation matrix that transforms the sorted values of $\mathbf{v}_a$ to the sorted values of $\mathbf{v}_b$ is block-off-diagonal (see the sixth permutation in Figure \ref{fig:permutations}).
In such cases, our algorithm only decreases the number of multiplication operations by a multiplicative constant, and may in fact be slower due to its computational overhead. This is precisely the behavior shown in Figure \ref{fig:failure} (left), in the case of stereo disparity.

It should be noted that there exist algorithms specifically designed for this class of potential functions \citep{KolmConvex,pedro_bp}, which are preferable in such instances.

We similarly perform an experiment on image denoising, where the unary potentials are again convex functions of the input \citet[see][]{Geman84,lanroth06}. Instead of using a pairwise potential that merely encodes smoothness, we extract the pairwise statistics from image data (similar to our experiment on text denoising); thus the potentials are no longer concave. We see in Figure \ref{fig:failure} (right) that even if a small number of entries exhibit some `randomness' in their order statistics, we begin to gain a modest speed improvement over the na\"ive solution (though indeed, the improvements are negligible compared to those shown in previous experiments).

\begin{figure}[t]
\begin{center}
 \includegraphics[width=0.5\columnwidth]{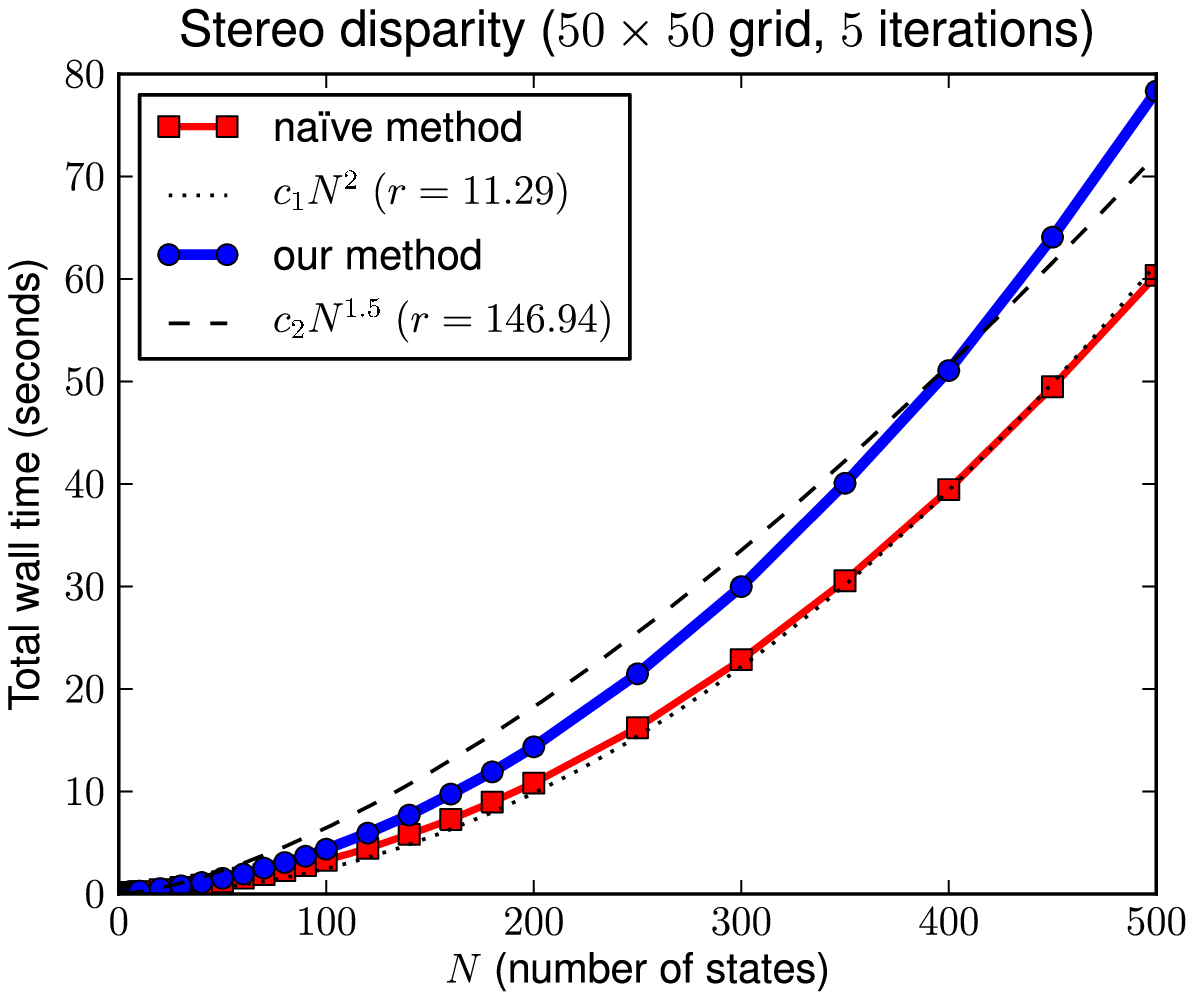}\includegraphics[width=0.5\columnwidth]{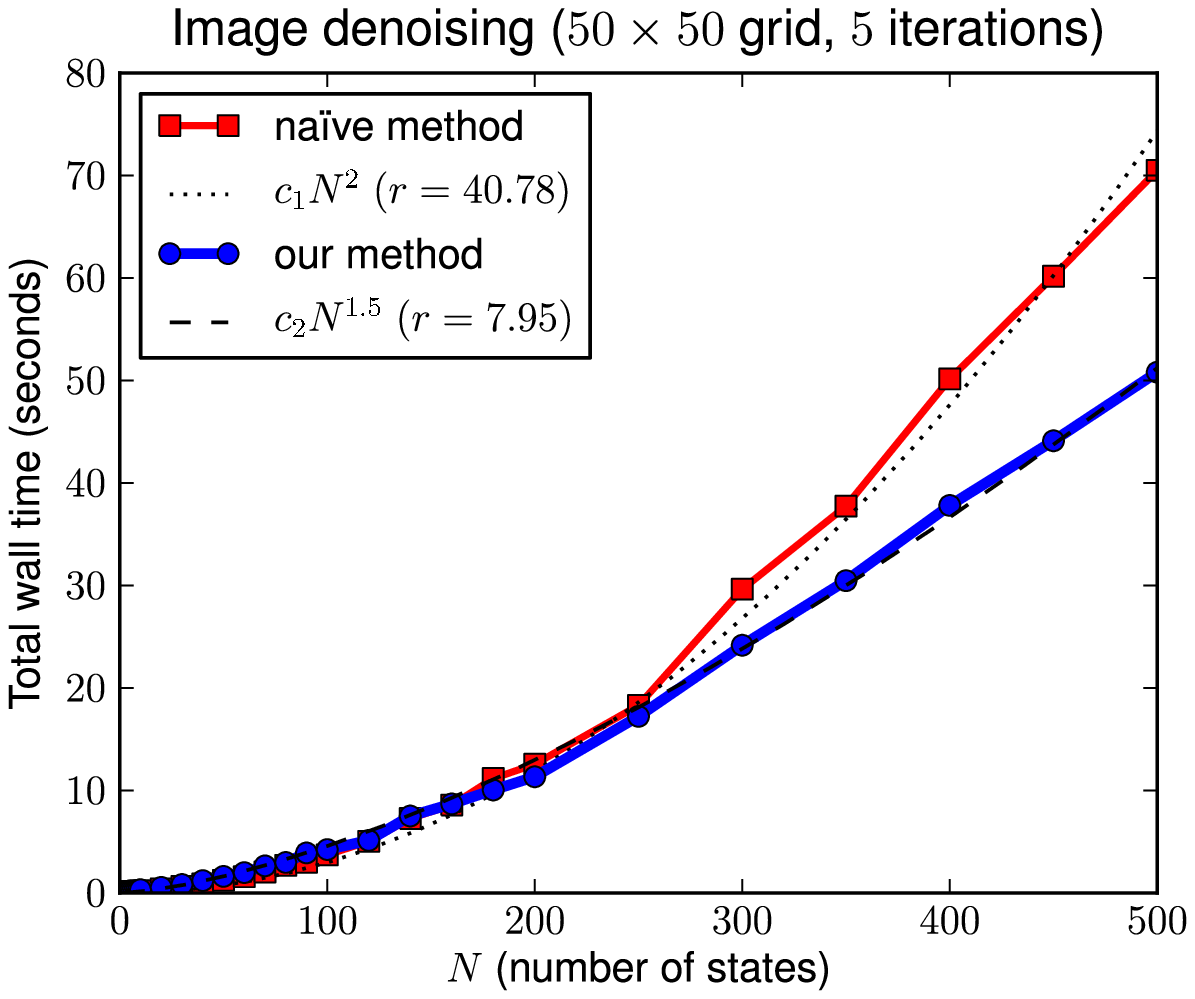}
\end{center}
\caption{Two experiments whose potentials and messages have highly dependent order statistics: stereo disparity (left), and image denoising (right).}
\label{fig:failure}
\end{figure}

\section{Discussion and Future Work}

As we touched upon briefly in Section \ref{sec:background}, there are a variety of applications of our algorithm beyond graphical models -- what we have in fact presented is a solution to the problem of funny matrix multiplication, which generalizes to matrices of arbitrary dimension. For instance, in \citet{aho83} a transformation is given between funny matrix multiplication and all-pairs shortest path, meaning that our algorithm results in a sub-cubic solution to this problem. While the fastest known solution \citep[due to][]{karger} has running time $O(N^2\log N)$ (subject to certain assumptions on the input graph), its implementation requires a Fibonacci heap, meaning that our algorithm proves to be faster for reasonable values of $N$.

It is interesting to consider the fact that our algorithm's running time is purely a function of the input data's \emph{order statistics}, and in fact does not depend on the \emph{data itself}. While it is pleasing that our assumption of independent order statistics appears to be a weak one, and is satisfied in a wide variety of applications, it ignores the fact that stronger assumptions may be reasonable in many cases. In factors with a high dynamic range, or when different factors have different scales, it may be possible to identify the maximum value very quickly, as we attempted to do in Section \ref{sec:protein}. Deriving faster algorithms that make stronger assumptions about the input data remains a promising avenue for future work.

Our algorithm may also lead to faster solutions for \emph{approximate} inference in graphical models. While the stopping criterion of our algorithm \emph{guarantees} that the maximum value is found, it is possible to terminate the algorithm earlier and state that the maximum has \emph{probably} been found. A direction for future work would be to adapt our algorithm to determine the probability that the maximum has been found after a certain number of steps; we could then allow the user to specify an error probability, or a desired running time, and our algorithm could be adapted accordingly.

\section{Conclusion}

We have presented a series of approaches that allow us to improve the performance of exact and approximate max-product message-passing for models with factors smaller than their maximal cliques, and more generally, for models whose factors \emph{that depend upon the observation} contain fewer latent variables than their maximal cliques. We are \emph{always} able to improve the expected computational complexity in any model that exhibits this type of factorization, no matter the size or number of factors. Our improvements increase the class of problems for which inference via max-product belief-propagation is a tractable option.

\subsection*{Acknowledgements}

We would like to thank Pedro Felzenszwalb, Johnicholas Hines, and David Sontag for comments on initial versions of this paper. NICTA is funded by the Australian Government's \emph{Backing Australia's Ability} initiative, and the Australian Research Council's \emph{ICT Centre of Excellence} program.


\begin{thebibliography}{34}
\providecommand{\natexlab}[1]{#1}
\providecommand{\url}[1]{\texttt{#1}}
\expandafter\ifx\csname urlstyle\endcsname\relax
  \providecommand{\doi}[1]{doi: #1}\else
  \providecommand{\doi}{doi: \begingroup \urlstyle{rm}\Url}\fi

\bibitem[Aho et~al.(1983)Aho, Hopcroft, and Ullman]{aho83}
Alfred~V. Aho, John~E. Hopcroft, and Jeffrey~D. Ullman.
\newblock \emph{Data Structures and Algorithms}.
\newblock Addison-Wesley, 1983.

\bibitem[Aji and Mc{E}liece(2000)]{thegdl}
Srinivas~M. Aji and Robert~J. Mc{E}liece.
\newblock The generalized distributive law.
\newblock \emph{IEEE Transactions on Information Theory}, 46\penalty0
  (2):\penalty0 325--343, 2000.

\bibitem[Alon et~al.(1997)Alon, Galil, and Margalit]{allpsp}
Noga Alon, Zvi Galil, and Oded Margalit.
\newblock On the exponent of the all pairs shortest path problem.
\newblock \emph{Journal of Computer and System Sciences}, 54\penalty0
  (2):\penalty0 255--262, 1997.

\bibitem[Bai et~al.(2009)Bai, Yang, Latecki, Liu, and Tu]{mpeg7}
Xiang Bai, Xingwei Yang, Longin~Jan Latecki, Wenyu Liu, and Zhuowen Tu.
\newblock Learning context-sensitive shape similarity by graph transduction.
\newblock \emph{IEEE Transactions on Pattern Analysis and Machine
  Intelligence}, 32\penalty0 (5):\penalty0 861--874, 2009.

\bibitem[Coughlan and Ferreira(2002)]{lbpmatch}
James~M. Coughlan and Sabino~J. Ferreira.
\newblock Finding deformable shapes using loopy belief propagation.
\newblock In \emph{ECCV}, 2002.

\bibitem[Donner et~al.(2007)Donner, Langs, and Bischof]{Donner07sparsemrf}
Ren\'e Donner, Georg Langs, and Horst Bischof.
\newblock Sparse {MRF} appearance models for fast anatomical structure
  localisation.
\newblock In \emph{BMVC}, 2007.

\bibitem[Felzenszwalb(2005)]{pedro_shapes}
Pedro~F. Felzenszwalb.
\newblock Representation and detection of deformable shapes.
\newblock \emph{IEEE Transactions on Pattern Analysis and Machine
  Intelligence}, 27\penalty0 (2):\penalty0 208--220, 2005.

\bibitem[Felzenszwalb and Huttenlocher(2006)]{pedro_bp}
Pedro~F. Felzenszwalb and Daniel~P. Huttenlocher.
\newblock Efficient belief propagation for early vision.
\newblock \emph{International Journal of Computer Vision}, 70\penalty0
  (1):\penalty0 41--54, 2006.

\bibitem[Fulkerson and Gross(1965)]{interval}
Delbert~R. Fulkerson and O.~A. Gross.
\newblock Incidence matrices and interval graphs.
\newblock \emph{Pacific Journal of Mathematics}, \penalty0 (15):\penalty0
  835--855, 1965.

\bibitem[Galley(2006)]{galley06}
Michel Galley.
\newblock A skip-chain conditional random field for ranking meeting utterances
  by importance.
\newblock In \emph{EMNLP}, 2006.

\bibitem[Geman and Geman(1984)]{Geman84}
Stuart Geman and Donald Geman.
\newblock Stochastic relaxation, gibbs distribution and the bayesian
  restoration of images.
\newblock \emph{IEEE Transactions on Pattern Analysis and Machine
  Intelligence}, 6\penalty0 (6):\penalty0 721--741, 1984.

\bibitem[Karger et~al.(1993)Karger, Koller, and Phillips]{karger}
David~R. Karger, Daphne Koller, and Steven~J. Phillips.
\newblock Finding the hidden path: time bounds for all-pairs shortest paths.
\newblock \emph{SIAM Journal of Computing}, 22\penalty0 (6):\penalty0
  1199--1217, 1993.

\bibitem[Kerr(1970)]{Kerr70}
Leslie~R. Kerr.
\newblock The effect of algebraic structure on the computational complexity of
  matrix multiplication.
\newblock \emph{PhD Thesis}, 1970.

\bibitem[Kersting et~al.(2009)Kersting, Ahmadi, and Natarajan]{KerAhmNat09}
Kristian Kersting, Babak Ahmadi, and Sriraam Natarajan.
\newblock Counting belief propagation.
\newblock In \emph{UAI}, 2009.

\bibitem[Kj{\ae{}}rulff(1998)]{nested}
Uffe Kj{\ae{}}rulff.
\newblock Inference in bayesian networks using nested junction trees.
\newblock In \emph{Proceedings of the NATO Advanced Study Institute on Learning
  in graphical models}, 1998.

\bibitem[Kolmogorov and Shioura(2007)]{KolmConvex}
Vladimir Kolmogorov and Akiyoshi Shioura.
\newblock New algorithms for the dual of the convex cost network flow problem
  with application to computer vision.
\newblock Technical report, University College London, 2007.

\bibitem[Kschischang et~al.(2001)Kschischang, Frey, and Loeliger]{factorgraphs}
Frank~R. Kschischang, Brendan~J. Frey, and Hans-Andrea Loeliger.
\newblock Factor graphs and the sum-product algorithm.
\newblock \emph{IEEE Transactions on Information Theory}, 47\penalty0
  (2):\penalty0 498--519, 2001.

\bibitem[Kumar and Torr(2006)]{KumTor}
M.~Pawan Kumar and Philip Torr.
\newblock Fast memory-efficient generalized belief propagation.
\newblock In \emph{ECCV}, 2006.

\bibitem[Lan et~al.(2006)Lan, Roth, Huttenlocher, and Black]{lanroth06}
Xiang-Yang Lan, Stefan Roth, Daniel~P. Huttenlocher, and Michael~J. Black.
\newblock Efficient belief propagation with learned higher-order markov random
  fields.
\newblock In \emph{ECCV}, 2006.

\bibitem[Lucas and Kanade(1981)]{Lucas81}
Bruce~D. Lucas and Takeo Kanade.
\newblock An iterative image registration technique with an application to
  stereo vision.
\newblock In \emph{IJCAI}, 1981.


\bibitem[Mc{A}uley and Caetano(2010)]{McACae10}
Julian~J. Mc{A}uley and Tib{\'e}rio~S. Caetano.
\newblock Exploiting within-clique factorizations in junction-tree algorithms.
\newblock \emph{AISTATS}, 2010.

\bibitem[Mc{A}uley et~al.(2008)Mc{A}uley, Caetano, and Barbosa]{McACaeBar08}
Julian~J. Mc{A}uley, Tib{\'e}rio~S. Caetano, and Marconi~S. Barbosa.
\newblock Graph rigidity, cyclic belief propagation and point pattern matching.
\newblock \emph{IEEE Transansactions on Pattern Analysis and Machine
  Intelligence}, 30\penalty0 (11):\penalty0 2047--2054, 2008.

\bibitem[Park and Darwiche(2003)]{jointree}
James~D. Park and Adnan Darwiche.
\newblock A differential semantics for jointree algorithms.
\newblock In \emph{{NIPS}}, 2003.

\bibitem[Paskin(2003)]{Paskin2003}
Mark~A. Paskin.
\newblock Thin junction tree filters for simultaneous localization and mapping.
\newblock In \emph{IJCAI}, 2003.

\bibitem[Petersen et~al.(2008)Petersen, Fehr, and Burkhardt]{PetFehBur08}
K.~Petersen, J.~Fehr, and H.~Burkhardt.
\newblock Fast generalized belief propagation for {MAP} estimation on {2D} and
  {3D} grid-like markov random fields.
\newblock In \emph{DAGM}, 2008.

\bibitem[Scharstein and Szeliski(2001)]{Scharstein01ataxonomy}
Daniel Scharstein and Richard~S. Szeliski.
\newblock A taxonomy and evaluation of dense two-frame stereo correspondence
  algorithms.
\newblock \emph{International Journal of Computer Vision}, 47\penalty0
  (1--3):\penalty0 7--42, 2001.

\bibitem[Sigal and Black(2006)]{sigal06}
Leonid Sigal and Michael~J. Black.
\newblock Predicting {3D} people from {2D} pictures.
\newblock In \emph{AMDO}, 2006.

\bibitem[Sontag et~al.(2008)Sontag, Meltzer, Globerson, Jaakkola, and
  Weiss]{sontag_lp}
David Sontag, Talya Meltzer, Amir Globerson, Tommi Jaakkola, and Yair Weiss.
\newblock Tightening {LP} relaxations for {MAP} using message passing.
\newblock In \emph{UAI}, 2008.

\bibitem[Strassen(1969)]{Strassen69}
V.~Strassen.
\newblock Gaussian elimination is not optimal.
\newblock \emph{Numerische Mathematik}, 14\penalty0 (3):\penalty0 354--356,
  1969.

\bibitem[Sun et~al.(2003)Sun, Zheng, and Shum]{stereo03}
Jian Sun, Nan-Ning Zheng, and Heung-Yeung Shum.
\newblock Stereo matching using belief propagation.
\newblock \emph{IEEE Transactions on Pattern Analysis and Machine
  Intelligence}, 25\penalty0 (7):\penalty0 787--800, 2003.

\bibitem[Sutton and Mc{C}allum(2006)]{skipchain}
Charles Sutton and Andrew Mc{C}allum.
\newblock \emph{An Introduction to Conditional Random Fields for Relational
  Learning}.
\newblock 2006.

\bibitem[Tresadern et~al.(2009)Tresadern, Bhaskar, Adeshina, Taylor, and
  Cootes]{Tres09}
Philip~A. Tresadern, Harish Bhaskar, Steve~A. Adeshina, Chris~J. Taylor, and
  Tim~F. Cootes.
\newblock Combining local and global shape models for deformable object
  matching.
\newblock In \emph{BMVC}, 2009.

\bibitem[Weiss(2000)]{Weiss00}
Yair Weiss.
\newblock Correctness of local probability propagation in graphical models with
  loops.
\newblock \emph{Neural Computation}, 12:\penalty0 1--41, 2000.

\end{thebibliography}

\appendix
\section{Asymptotic Performance of Algorithm \ref{alg1} and Extensions}
\label{sec:analysis}

In this section we shall determine the expected case running times of Algorithm \ref{alg1} and Algorithm \ref{alg:ext}.
Algorithm \ref{alg1} traverses $\mathbf{v}_a$ and $\mathbf{v}_b$ until it reaches the smallest value of $m$ for which there is some $j \leq m$ for which $m \geq p_b^{-1}[p_a[j]]$. If $M$ is a random variable representing this smallest value of $m$, then we wish to find $E(M)$. While $E(M)$ is the number of `steps' the algorithms take, each step takes $\Theta(K)$ when we have $K$ lists. Thus the expected running time is $\Theta(KE(M))$.

To aid understanding our algorithm, we show the elements being read for specific examples of $\mathbf{v}_a$ and $\mathbf{v}_b$ in Figure \ref{fig:alg1_aistats}. This figure reveals that the actual \emph{values} in $\mathbf{v}_a$ and $\mathbf{v}_b$ are unimportant, and it is only the order-statistics of the two lists that determine the performance of our algorithm. By representing a permutation of the digits $1$ to $N$ as shown in Figure \ref{fig:perms} ((a), (b), and (d)), we observe that $m$ is simply the width of the smallest square (expanding from the top left) that includes an element of the permutation (i.e., it includes $i$ and $p[i]$). 


\begin{figure*}[ht]
\footnotesize
\begin{center}
\begin{tabular}{cc}
\parbox[c]{32.9pt}{\centering\begin{tabular}{c}
                                     \ \\ \includegraphics[scale=0.32]{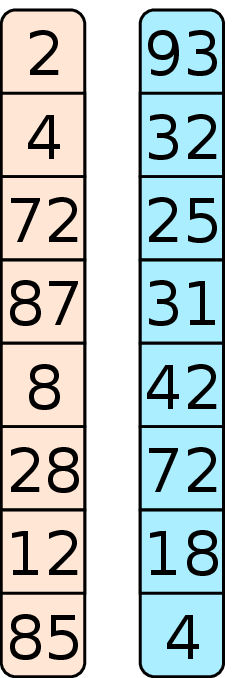}
                                    \end{tabular}
} &
\parbox[c]{326.9pt}
{\centering\begin{tabular}{cccc}
$\mathit{start} = 1$ & $\mathit{start} = 2$ & $\mathit{start} = 3$ & $\mathit{start} = 4$\\
\includegraphics[scale=0.32]{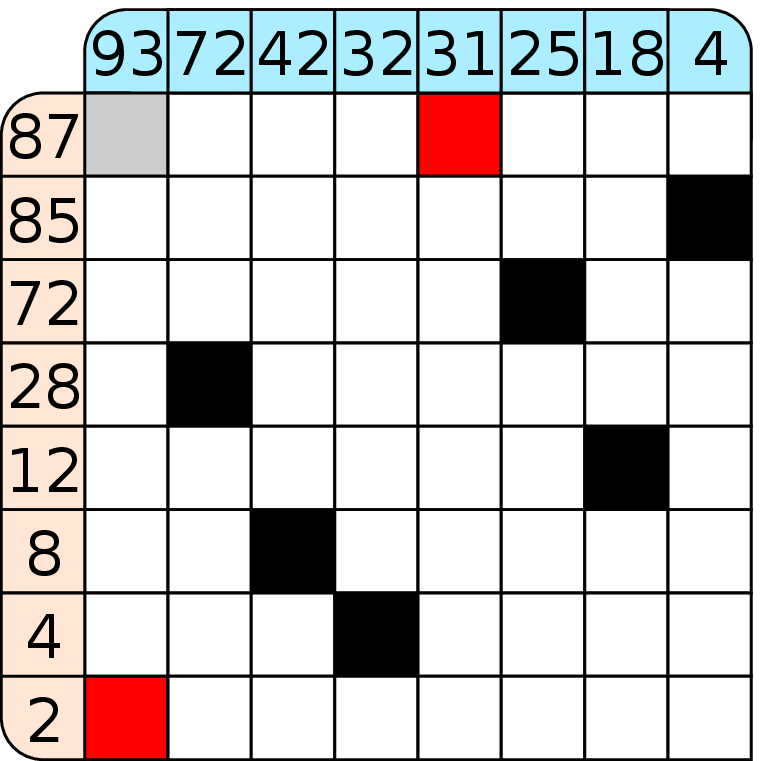} & \includegraphics[scale=0.32]{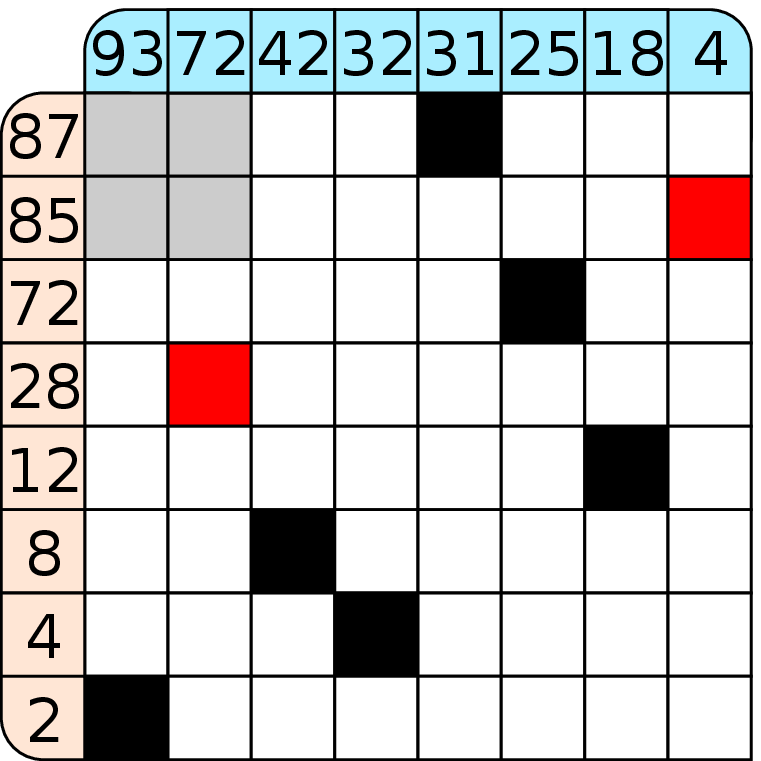} & \includegraphics[scale=0.32]{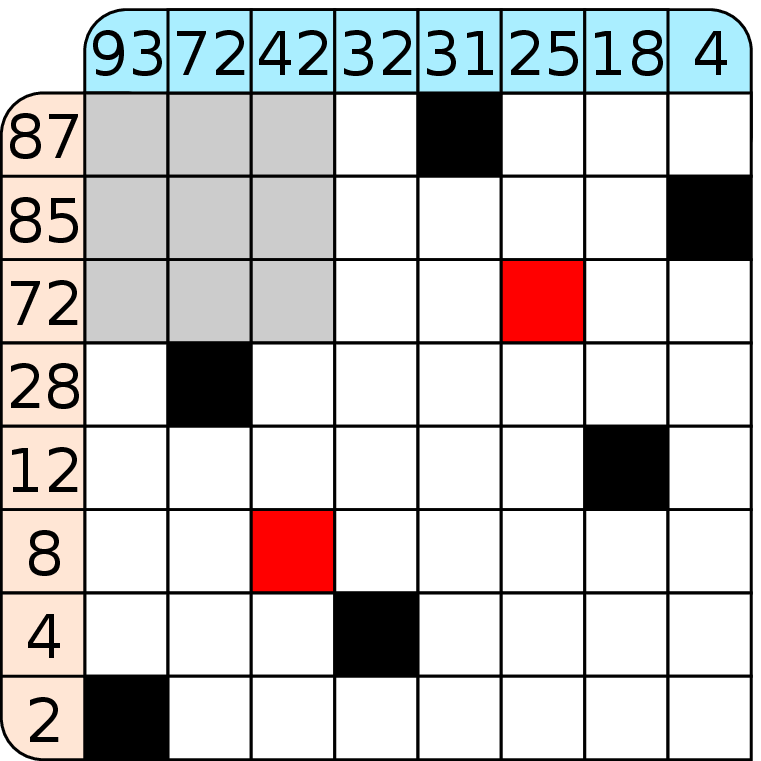} & \includegraphics[scale=0.32]{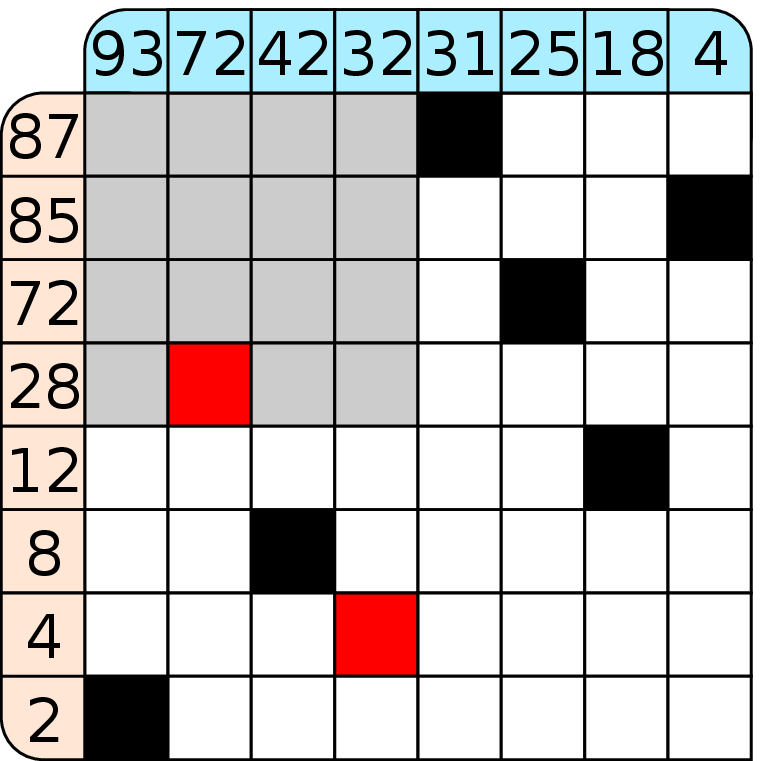}
\end{tabular}} \\ (a) & (b)
\end{tabular}
\end{center}
\vspace{-3mm}
 \caption{(a) The lists $\mathbf{v}_a$ and $\mathbf{v}_b$ before sorting; (b) Black squares show corresponding elements in the sorted lists ($\mathbf{v}_a[p_a[i]]$ and $\mathbf{v}_b[p_b[i]]$); red squares indicate the elements read during each step of the algorithm ($\mathbf{v}_a[p_a[\mathit{start}]]$ and $\mathbf{v}_b[p_b[\mathit{start}]]$). We can imagine expanding a gray box of size $\mathit{start}\times\mathit{start}$ until it contains an entry; note that the maximum is found during the first step.}
\label{fig:alg1_aistats}
\end{figure*}

\begin{figure}
\footnotesize
 \begin{center}
 \begin{tabular}{cccc}
 \includegraphics[scale=0.3]{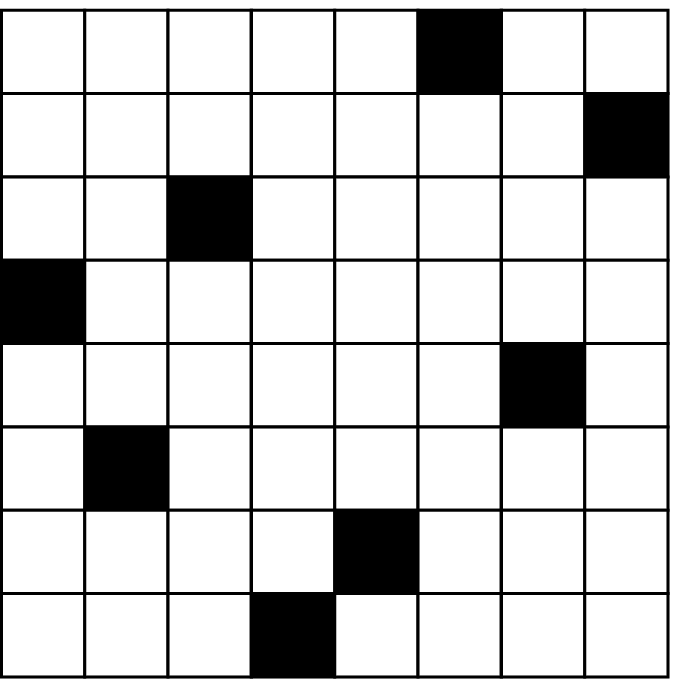} & \includegraphics[scale=0.3]{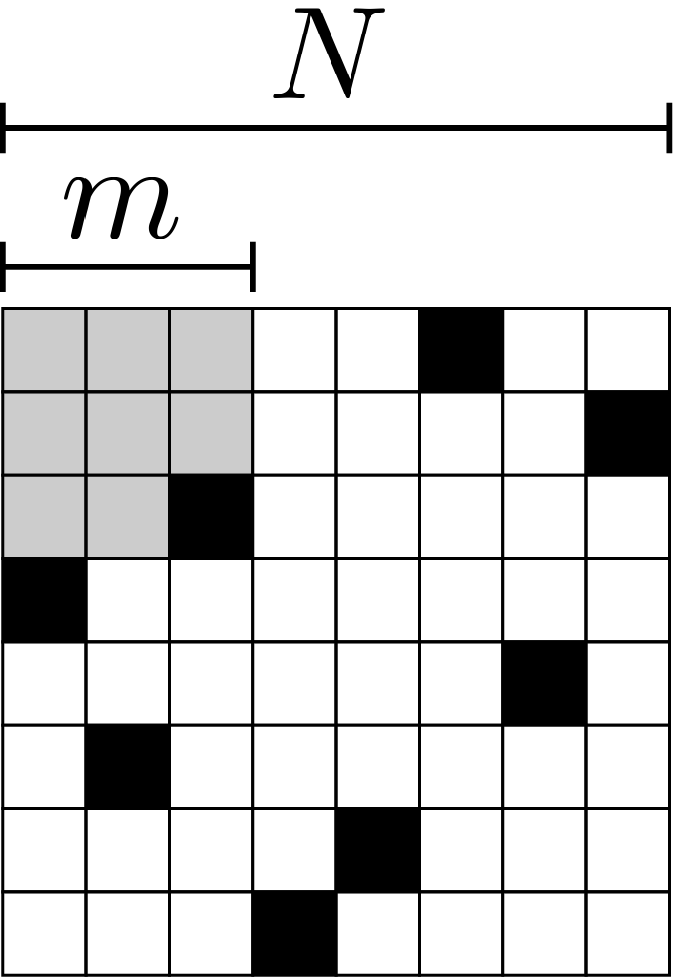} &
 \includegraphics[scale=0.3]{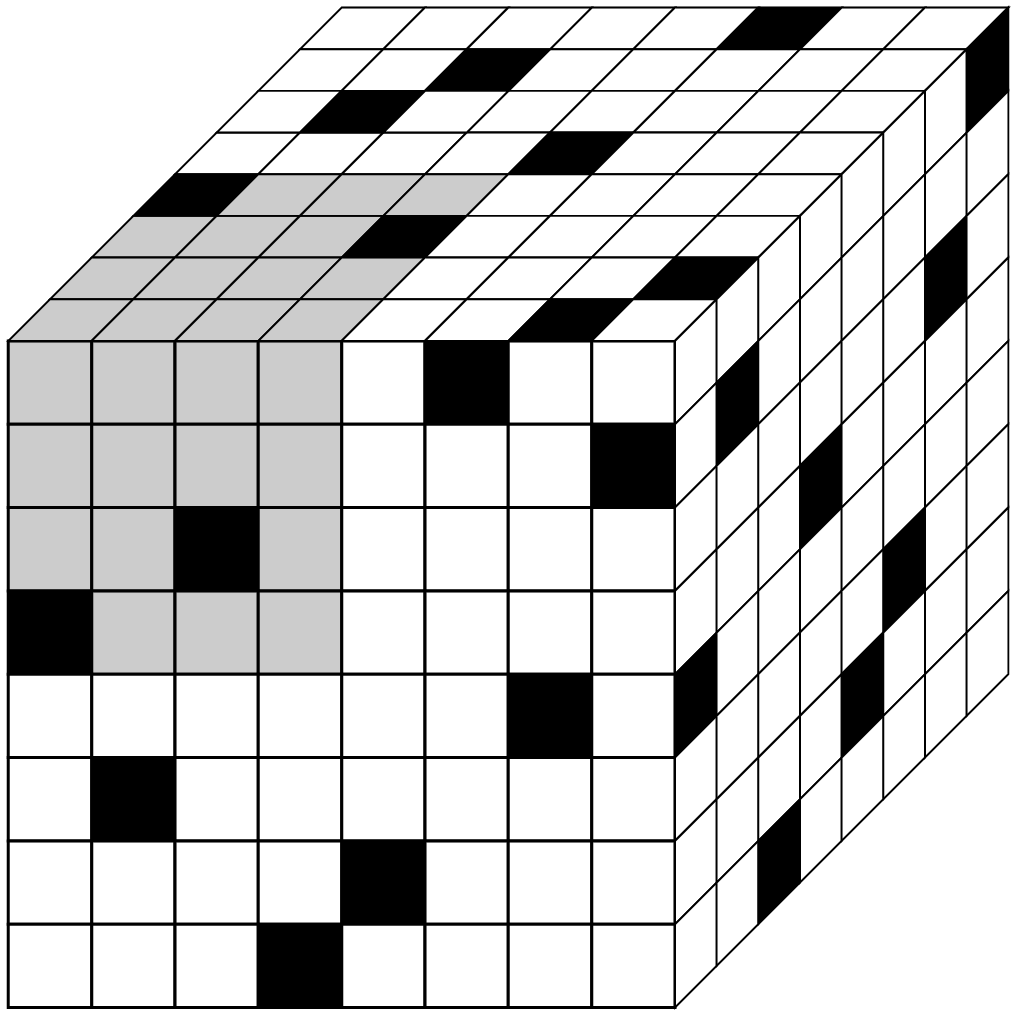} & \includegraphics[scale=0.3]{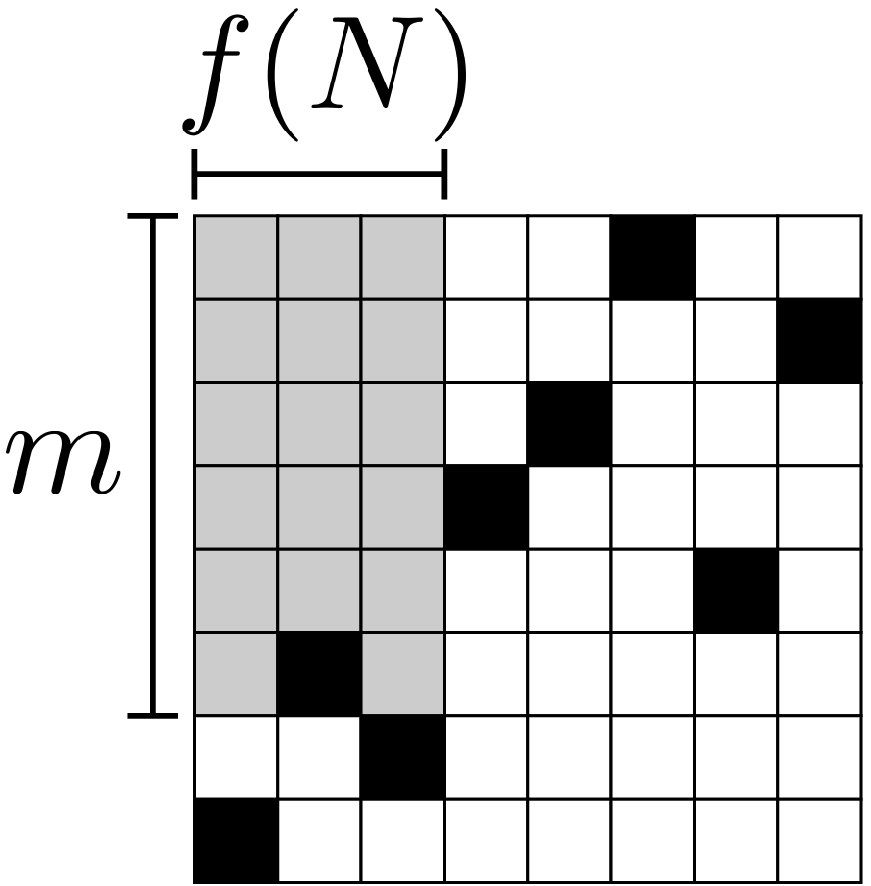}\\
(a) & (b) & (c) & (d)
 \end{tabular}
 \end{center}
\caption{(a) As noted in Figure \ref{fig:alg1_aistats}, a permutation can be represented as an array, where there is exactly one non-zero entry in each row and column; (b) We want to find the smallest value of $m$ such that the grey box includes a non-zero entry; (c) A \emph{pair} of permutations can be thought of as a cube, where every two-dimensional plane contains exactly one non-zero entry; we are now searching for the smallest grey cube that includes a non-zero entry; the faces show the projections of the points onto the exterior of the cube (the third face is determined by the first two); (d) For the sake of establishing an upper-bound, we consider a shaded region of width $f(N)$ and height $m$.}
\label{fig:perms}
\end{figure}

Simple analysis reveals that the probability of choosing a permutation that does not contain a value inside a square of size $m$ is
\begin{equation}
 P(M>m) = \frac{(N-m)!(N-m)!}{(N-2m)!N!}.
\label{eq:factprob}
\end{equation}
This is precisely $1 - F(m)$, where $F(m)$ is the cumulative density function of $M$. It is immediately clear that $1 \leq M \leq \lfloor N/2 \rfloor$, which defines the best and worst-case performance of Algorithm \ref{alg1}.

Using the identity $E(X) = \sum_{x=1}^\infty P(X \geq x)$, we can write down a formula for the expected value of $M$:
\begin{equation}
 E(M) = \sum_{m=0}^{\lfloor N/2 \rfloor} \frac{(N-m)!(N-m)!}{(N-2m)!N!}.
\label{eq:runtimek1}
\end{equation}

The case where we are sampling from multiple permutations simultaneously (i.e., Algorithm \ref{alg:ext}) is analogous. We consider $K-1$ permutations embedded in a $K$-dimensional hypercube, and we wish to find the width of the smallest shaded hypercube that includes exactly one element of the permutations (i.e., $i, p_1[i], \ldots, p_{K-1}[i]$). This is represented in Figure \ref{fig:perms}(c) for $K=3$. Note carefully that $K$ is the number of \emph{lists} in \eq{eq:hatK}; if we have $K$ lists, we require $K-1$ permutations to define a correspondence between them.

Unfortunately, the probability that there is no non-zero entry in a cube of size $m^K$ is not trivial to compute. It is possible to write down an expression that generalizes \eq{eq:factprob}, such as
\begin{equation}
 P^K(M > m) = \frac{1}{N!^{K-1}} \times \sum_{\sigma_1\in S_N}\!\!\!\cdots\!\!\!\sum_{\sigma_{K-1}\in S_N} \bigwedge_{i = 1}^m \left( \max_{k\in\lbrace 1 \ldots K-1 \rbrace} \sigma_k(i) > m \right)
\label{eq:PKm}
\end{equation}
(in which we simply enumerate over all possible permutations and `count' which of them do not fall within a hypercube of size $m^{K}$), and therefore state that
\begin{equation}
 E^K(M) = \sum_{m=0}^{\infty} P^K(M>m).
\label{eq:runtimeexact}
\end{equation}
However, it is very hard to draw any conclusions from \eq{eq:PKm}, and in fact it is intractable even to evaluate it for large values of $N$ and $K$. Hence we shall instead focus our attention on finding an upper-bound on \eq{eq:runtimeexact}. Finding more computationally convenient expressions for \eq{eq:PKm} and \eq{eq:runtimeexact} remains as future work.

\subsection{An Upper-Bound on $E^K(M)$}

Although \eq{eq:runtimek1} and \eq{eq:runtimeexact} precisely define the running times of Algorithm \ref{alg1} and Algorithm \ref{alg:ext}, it is not easy to ascertain the speed improvements they achieve, as the values to which the summations converge for large $N$ are not obvious. Here, we shall try to obtain an upper-bound on their performance, which we assessed experimentally in Section \ref{sec:experiments}. In doing so we shall prove Theorems \ref{the:alg1} and \ref{the:algext}.



\begin{proof}[Proof of Theorem \ref{the:alg1}] (see Algorithm \ref{alg1})
Consider the shaded region in Figure \ref{fig:perms}(d). This region has a width of $f(N)$, and its height $m$ is chosen such that it contains precisely one non-zero entry. Let $\dot{M}$ be a random variable representing the height of the grey region needed in order to include a non-zero entry. We note that
\begin{equation}
E(\dot{M}) \in O(f(N)) ~\Rightarrow~ E(M) \in O(f(N));
\end{equation}
our aim is to find the smallest $f(N)$ such that $E(\dot{M}) \in O(f(N))$. The probability that none of the first $m$ samples appear in the shaded region is
\begin{equation}
 P(\dot{M} > m) = \prod_{i=0}^m \left( 1 - \frac{f(N)}{N - i}\right).
\label{eq:noreplace}
\end{equation}
Next we observe that if the entries in our $N\times N$ grid do not define a permutation, but we instead choose a \emph{random} entry in each row, then the probability (now for $\ddot{M}$) becomes
\begin{equation}
  P(\ddot{M} > m) = \left(1 - \frac{f(N)}{N}\right)^m
\label{eq:replace}
 \end{equation}
(for simplicity we allow $m$ to take arbitrarily large values). We certainly have that $P(\ddot{M}>m) \geq P(\dot{M}>m)$, meaning that $E(\ddot{M})$ is an upper-bound on $E(\dot{M})$, and therefore on $E(M)$. Thus we compute the expected value
\begin{equation}
 E(\ddot{M}) = \sum_{m=0}^\infty \left(1 - \frac{f(N)}{N}\right)^m.
\end{equation}
This is just a geometric progression, which sums to ${N}/{f(N)}$. Thus we need to find $f(N)$ such that
\begin{equation}
 f(N) \in O\left(\frac{N}{f(N)}\right).
\end{equation}
Clearly $f(N) \in O(\sqrt{N})$ will do. Thus we conclude that
\begin{equation}
 E(M) \in O(\sqrt{N}).
\end{equation}
\end{proof}

\begin{proof}[Proof of Theorem \ref{the:algext}] (see Algorithm \ref{alg:ext})
We would like to apply the same reasoning in the case of multiple permutations in order to compute a bound on $E^K(M)$. That is, we would like to consider $K-1$ \emph{random} samples of the digits from $1$ to $N$, rather than $K-1$ permutations, as random samples are easier to work with in practice.

To do so, we begin with some simple corollaries regarding our previous results. We have shown that in a permutation of length $N$, we expect to see a value less than or equal to $f$ after $N/f$ steps. There are now $f-1$ other values that are less than or equal to $f$ amongst the remaining $N - N/f$ values; we note that
\begin{equation}
 \frac{f-1}{N - \frac{N}{f}} = \frac{f}{N}.
\end{equation}
Hence we expect to see the \emph{next} value less than or equal to $f$ in the next $N/f$ steps also. A consequence of this fact is that we not only expect to see the \emph{first} value less than or equal to $f$ earlier in a permutation than in a random sample, but that when we sample $m$ elements, we expect \emph{more} of them to be less than or equal to $f$ in a permutation than in a random sample.

Furthermore, when considering the \emph{maximum} of $K-1$ permutations, we expect the first $m$ elements to contain more values less than or equal to $f$ than the maximum of $K-1$ random samples. \eq{eq:PKm} is concerned with precisely this problem. Therefore, when working in a $K$-dimensional hypercube, we can consider $K-1$ random samples rather than $K-1$ permutations in order to obtain an upper-bound on \eq{eq:runtimeexact}.

Thus we define $\ddot{M}$ as in \eq{eq:replace}, and conclude that
\begin{equation}
 P(\ddot{M} > m) = \left(1 - \frac{f(N,K)^{K-1}}{N^{K-1}}\right)^m.
\end{equation}
Thus the expected value of $\ddot{M}$ is again a geometric progression, which this time sums to $\left({N}/{f(N,K)}\right)^{K-1}$. Thus we need to find $f(N,K)$ such that
\begin{equation}
 f(N,K) \in O\left( \left( \frac{N}{f(N,K)} \right)^{K-1} \right).
\end{equation}
Clearly
\begin{equation}
 f(N,K) \in O\left(N^{\frac{K-1}{K}}\right)
\end{equation}
will do. As mentioned, each step takes $\Theta(K)$, so the final running time is $O(KN^\frac{K-1}{K})$.
\end{proof}

To summarize, for problems decomposable into $K+1$ groups, we will need to find the index that chooses the maximal product amongst $K$ lists; we have shown an upper-bound on the expected number of steps this takes, namely
\begin{equation}
 E^K(M) \in O\left(N^{\frac{K-1}{K}}\right).
\label{eq:bound}
\end{equation}

\end{document}